\newcommand{\bb}{\mathbb}
\newcommand{\R}{\bb R}
\newcommand{\Z}{{\bb Z}}
\newcommand{\N}{{\bb N}}
\newcommand{\abs}[1]{\lvert#1\rvert}
\newcommand{\norm}[1]{\lVert#1\rVert}
\newcommand{\st}{\mathrel{}\middle|\mathrel{}}
\newcommand{\overbar}[1]{\mkern 1.5mu\overline{\mkern-1.5mu#1\mkern-1.5mu}\mkern 1.5mu}
\newtheorem{theorem}{Theorem}[section]
\newtheorem{proposition}[theorem]{Proposition}
\newtheorem{lemma}[theorem]{Lemma}
\newtheorem{definition}[theorem]{Definition}
\newtheorem{conjecture}[theorem]{Conjecture}
\newtheorem{observation}[theorem]{Observation}
\DeclareMathOperator{\ReLU}{ReLU}
\DeclareMathOperator{\CPWL}{CPWL}
\DeclareMathOperator{\CCPWL}{CCPWL}
\DeclareMathOperator{\MAX}{MAX}
\DeclareMathOperator{\conv}{conv}
\DeclareMathOperator{\cone}{cone}
\DeclareMathOperator{\even}{even}
\DeclareMathOperator{\odd}{odd}
\DeclareMathOperator{\Newt}{Newt}
\tikzset{bigneuron/.style={circle, draw, inner sep = 0, minimum width = 5.5ex}}
\tikzset{smallneuron/.style={circle, draw, inner sep = 0, minimum width = 4.5ex}}
\tikzset{transform/.style={fill=white, circle}}
\tikzset{connection/.style={-{Stealth}}}
\newcommand{\relu}{
	\begin{tikzpicture}
		\draw [line width=1pt] (-1.1ex,0) -- (0,0) -- (0.9ex,0.9ex);
	\end{tikzpicture}
}
\title{Towards Lower Bounds on the Depth\\of ReLU Neural Networks
	\thanks{Authors' accepted manuscript; to appear in the SIAM Journal on Discrete Mathematics. A preliminary conference version appeared in the proceedings of the NeurIPS 2021 conference. We thank the anonymous referees of both the journal and the conference version for their insightful comments which helped to improve the presentation and clarity.

	Christoph Hertrich gratefully acknowledges funding by DFG-GRK 2434 ``Facets of Complexity''. Amitabh Basu gratefully acknowledges support from AFOSR Grant FA95502010341 and NSF Grant CCF2006587. Martin Skutella gratefully acknowledges funding by the Deutsche Forschungsgemeinschaft (DFG, German Research Foundation) under Germany's Excellence Strategy --- The Berlin Mathematics Research Center MATH+ (EXC-2046/1, project ID: 390685689).
}}
\author{Christoph Hertrich}
\affil{\small
	London School of Economics and Political Science, London, UK,\protect\\c.hertrich@lse.ac.uk}
\author{Amitabh Basu}
	\affil{\small
	Johns Hopkins University, Baltimore, USA,\protect\\basu.amitabh@jhu.edu}
\author{Marco Di Summa}
	\affil{\small
	Universit{\`a} degli Studi di Padova, Padua, Italy,\protect\\disumma@math.unipd.it}
\author{Martin Skutella}
	\affil{\small
	Technische Universit{\"a}t Berlin, Berlin, Germany,\protect\\martin.skutella@tu-berlin.de}
\date{}
\begin{document}

\maketitle

\begin{abstract}
	We contribute to a better understanding of the class of functions that can be represented by a neural network with ReLU activations and a given architecture. Using techniques from mixed-integer optimization, polyhedral theory, and tropical geometry, we provide a mathematical counterbalance to the universal approximation theorems which suggest that a single hidden layer is sufficient for learning any function. In particular, we investigate whether the class of {\em exactly} representable functions {\em strictly} increases by adding more layers (with no restrictions on size).
	As a by-product of our investigations, we settle an old conjecture about piecewise linear functions by Wang and Sun~\cite{wang2005generalization} in the affirmative.
	We also present upper bounds on the sizes of neural networks required to represent functions with logarithmic depth.
\end{abstract}

\section{Introduction}

A core problem in machine learning and statistics is the estimation of an unknown data distribution with access to independent and identically distributed samples from the distribution. It is well-known that there is a tension between the expressivity of the model chosen to approximate the distribution and the number of samples needed to solve the problem with high confidence (or equivalently, the variance one has in one's estimate). This is referred to as the {\em bias-variance} trade-off or the {\em bias-complexity} trade-off. Neural networks provide a way to turn this bias-complexity knob in a controlled manner that has been studied for decades going back to the idea of a {\em perceptron} by Rosenblatt~\cite{rosenblatt1958perceptron}. This is done by modifying the {\em architecture} of a neural network class of functions, in particular its {\em size} in terms of {\em depth} and {\em width}. As one increases these parameters, the class of functions becomes more expressive. In terms of the bias-variance trade-off, the ``bias'' decreases as the class of functions becomes more expressive, but the ``variance'' or ``complexity'' increases.

So-called {\em universal approximation theorems}~\cite{anthony2009neural,cybenko1989approximation,hornik1991approximation} show that even with a single hidden layer, that is, when the depth of the architecture achieves its smallest possible value, one can essentially reduce the ``bias'' as much as one desires, by increasing the width. Nevertheless, it can be advantageous both theoretically and empirically to increase the depth because a substantial reduction in the size can be achieved by this~\cite{Arora:DNNwithReLU, eldan2016power, liang2017deep, safran2017depth, Telgarsky15, telgarsky2016benefits, yarotsky2017error}. To get a better quantitative handle on these trade-offs, it is important to understand what classes of functions are exactly representable by neural networks with a certain architecture. The precise mathematical statements of universal approximation theorems show that single layer networks can arbitrarily well {\em approximate} any continuous function (under some additional mild hypotheses). While this suggests that single layer networks are good enough from a learning perspective, from a mathematical perspective, one can ask the question if the class of functions represented by single layer networks is a {\em strict} subset of the class of functions represented by networks with two or more hidden layers. On the question of size, one can ask for precise bounds on the required width of a network with given depth to represent a certain class of functions. A better understanding of the function classes exactly represented by different architectures has implications not just for mathematical foundations, but also algorithmic and statistical learning aspects of neural networks, as recent advances on the training complexity show~\cite{Arora:DNNwithReLU,bertschinger2022training, GKMR21,froese2022computational,khalife2022neural}. The task of searching for the ``best'' function in a class can only benefit from a better understanding of the nature of functions in that class. A motivating question behind the results in this paper is to understand the hierarchy of function classes exactly represented by neural networks of increasing depth.

We now introduce more precise notation and terminology to set the stage for our investigations.

\subsection{Notation and Definitions} We write $[n]\coloneqq\{1,2,\dots,n\}$ for the set of natural numbers up to $n$ (without zero) and $[n]_0\coloneqq[n]\cup\{0\}$ for the same set including zero. For any $n\in\N$, let $\sigma\colon\R^n\to\R^n$ be the component-wise \emph{rectifier} function \[\sigma(x)=(\max\{0,x_1\},\max\{0,x_2\},\dots,\max\{0,x_n\}).\]

For any \emph{number of hidden layers} $k\in\N$, a \emph{$(k+1)$-layer feedforward neural network with rectified linear units} (ReLU NN or simply NN) is given by $k$ affine transformations $T^{(\ell)}\colon\R^{n_{\ell-1}}\to\R^{n_\ell}$, $x\mapsto A^{(\ell)}x+b^{(\ell)}$, for $\ell\in[k]$, and a linear transformation $T^{(k+1)}\colon\R^{n_{k}}\to\R^{n_{k+1}}$, $x\mapsto A^{(k+1)} x$. It is said to \emph{compute} or \emph{represent} the function $f\colon\R^{n_0}\to\R^{n_{k+1}}$ given by
\[
f=T^{(k+1)}\circ\sigma\circ T^{(k)}\circ\sigma\circ\dots\circ T^{(2)}\circ\sigma\circ T^{(1)}.
\]
The matrices $A^{(\ell)}\in\R^{n_{\ell}\times n_{\ell-1}}$ are called the \emph{weights} and the vectors $b^{(\ell)}\in\R^{n_\ell}$ are the \emph{biases} of the $\ell$-th layer. The number $n_\ell\in\N$ is called the \emph{width} of the $\ell$-th layer. The maximum width of all hidden layers $\max_{\ell\in[k]} n_\ell$ is called the \emph{width} of the NN. Further, we say that the NN has \emph{depth} $k+1$ and \emph{size} $\sum_{\ell=1}^k n_\ell$.

Often, NNs are represented as layered, directed, acyclic graphs where each dimension of each layer (including input layer $\ell=0$ and output layer $\ell=k+1$) is one vertex, weights are arc labels, and biases are node labels. Then, the vertices are called \emph{neurons}.

For a given input $x=x^{(0)}\in\R^{n_0}$, let $y^{(\ell)}\coloneqq T^{(\ell)}(x^{(\ell-1)})\in\R^{n_\ell}$ be the \emph{activation vector} and $x^{(\ell)}\coloneqq \sigma(y^{\ell})\in\R^{n_\ell}$ the \emph{output vector} of the $\ell$-th layer. Further, let $y\coloneqq y^{(k+1)}=f(x)$ be the \emph{output} of the NN. We also say that the $i$-th component of each of these vectors is the \emph{activation} or the \emph{output} of the $i$-th neuron in the $\ell$-th layer.

To illustrate the definition of NNs and how they compute functions, \Cref{Fig:Max2Num} shows an NN with one hidden layer computing the maximum of two numbers.

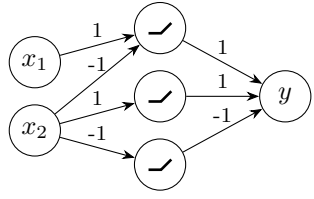
\begin{figure}[t]
	\centering
	\begin{tikzpicture}[every node/.style={transform shape}]
		\small
		\node[smallneuron] (x) at (0,6ex) {$x_1$};
		\node[smallneuron] (y) at (0,0) {$x_2$};
		\node[smallneuron] (n11) at (11ex,9ex) {\relu};
		\node[smallneuron] (n12) at (11ex,3ex) {\relu};
		\node[smallneuron] (n13) at (11ex,-3ex) {\relu};
		\node[smallneuron] (m) at (22ex,3ex) {$y$};
		\scriptsize
		\draw[connection] (n11) -- node[above]{1} (m);
		\draw[connection] (x) -- node[above]{1} (n11);
		\draw[connection] (y) -- node[above]{-1} (n11);
		\draw[connection] (n12) -- node[above]{1} (m);
		\draw[connection] (n13) -- node[above]{-1} (m);
		\draw[connection] (y) -- node[above]{1} (n12);
		\draw[connection] (y) -- node[above]{-1} (n13);
	\end{tikzpicture}
	\caption{An NN with two input neurons, labeled $x_1$ and $x_2$, three
		hidden neurons, labeled with the shape of the rectifier function, and one output neuron, labeled $y$. The arcs
		are labeled with their weights and all biases are zero. The NN has depth 2, width 3, and size 3. It
		computes the function \mbox{$x\mapsto y= \max\{0,x_1-x_2\}+\max\{0,x_2\}-\max\{0,-x_2\}= \max\{0,x_1-x_2\}+x_2=\max\{x_1,x_2\}$}.}
	\label{Fig:Max2Num}
\end{figure}

For~$k\in\N$, we define
\begin{align*}
	\ReLU_n(k)&\coloneqq\{f\colon\R^n\to\R\mid \text{$f$ can be represented by a $(k+1)$-layer NN}\},\\
	\CPWL_n&\coloneqq\{f\colon\R^n\to\R\mid \text{$f$ is continuous and piecewise linear}\}.
\end{align*}
By definition, a continuous function~$f\colon\R^n\to\R$ is piecewise linear in case there is a finite set of polyhedra whose union is~$\R^n$, and~$f$ is affine linear over each such polyhedron.

In order to analyze $\ReLU_n(k)$, we use another function class defined as follows. We call a function~$g$ a {\em $p$-term max} function if it can be expressed as maximum of $p$ affine terms, that is, $g(x)=\max\{\ell_1(x), \ldots, \ell_p(x)\}$ where $\ell_i\colon \R^n \to \R$ is affine linear for~$i\in [p]$. Note that this also includes max functions with less than $p$ terms, as some functions $\ell_{i}$ may coincide. Based on that, we define
\begin{align*}
	\MAX_n(p)&\coloneqq\{f\colon\R^n\to\R\mid \text{$f$ is a linear combination of $p$-term max functions}\}.
\end{align*}
Note that Wang and Sun~\cite{wang2005generalization} call $p$-term max functions \emph{$(p-1)$-order hinges} and linear combinations of those \emph{$(p-1)$-order hinging hyperplanes}.

If the input dimension $n$ is not important for the context, we sometimes drop the index and use $\ReLU(k)\coloneqq\bigcup_{n\in\N}\ReLU_n(k)$ and $\MAX(p)\coloneqq\bigcup_{n\in\N}\MAX_n(p)$ instead.

We will use the standard notations $\conv A$ and $\cone A$ for the convex and conic hulls of a set $A\subseteq\R^n$. For an in-depth treatment of polyhedra and (mixed-integer) optimization, we refer to the book by Schrijver~\cite{sch}.

\subsection{Representing Piecewise Linear Functions with ReLU Networks}

It is not hard to see that every function expressed by a ReLU network is continuous and piecewise linear (CPWL) because it is composed of affine transformations and ReLU functions, which are both CPWL. Based on a result by Wang and Sun~\cite{wang2005generalization}, Arora et al.~\cite{Arora:DNNwithReLU} prove that the converse is true as well by showing that any CPWL function can be represented with logarithmic depth.

\begin{theorem}[Arora et al.~\cite{Arora:DNNwithReLU}]\label{Thm:CPWL}
	If $n\in\N$ and $k^*\coloneqq\lceil\log_2 (n + 1)\rceil$, then \mbox{$\CPWL_n = \ReLU_n(k^*)$}.
\end{theorem}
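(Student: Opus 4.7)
My plan is to prove the two inclusions separately. The inclusion $\ReLU_n(k^*) \subseteq \CPWL_n$ is independent of the specific value of $k^*$ and follows by a simple induction on the number of hidden layers: the base case (a pure affine map) is CPWL, and the inductive step uses that $\sigma$ is CPWL together with the fact that composing CPWL functions with affine maps and with $\sigma$ yields CPWL functions.

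For the reverse inclusion $\CPWL_n \subseteq \ReLU_n(k^*)$, my plan combines two ingredients. First, a structural decomposition theorem due to Wang and Sun asserting that every $f \in \CPWL_n$ lies in $\MAX_n(n+1)$, so that $f$ can be written as $\sum_{j=1}^N s_j \max\{\ell_{j,1}, \dots, \ell_{j,n+1}\}$ for finitely many coefficients $s_j \in \R$ and affine functions $\ell_{j,i}$. Second, a network construction showing that the maximum of $p$ affine functions can be represented by a ReLU NN with $\lceil \log_2 p \rceil$ hidden layers: starting from the one-hidden-layer binary max network of \Cref{Fig:Max2Num}, I stack copies along a balanced binary tournament tree of depth $\lceil \log_2 p \rceil$, composing layer by layer. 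Applied with $p = n+1$, this gives exactly $k^*$ hidden layers per summand.

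To stitch everything together, I would place the subnetworks for the individual $(n+1)$-term maxima in parallel, padded to the common depth $k^*$ when necessary. Such padding is harmless because the identity map satisfies $x = \sigma(x) - \sigma(-x)$ and is thus computable by a one-hidden-layer ReLU network, so one can inflate the depth of any subnetwork at will. The outer linear combination $\sum_j s_j (\cdot)$ is then absorbed into the final (linear) output layer, producing a single $(k^*+1)$-layer NN computing $f$.

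The hard part is the Wang--Sun decomposition, and in particular the sharp dimension-dependent bound $n+1$ on the number of terms per max. Intuitively, at a generic vertex of a polyhedral subdivision on which $f$ is piecewise affine, only $n+1$ full-dimensional pieces can meet, and after passing to a simplicial refinement one can express $f$ cell-by-cell via an inclusion--exclusion-style identity using the $n+1$ affine pieces incident to each simplex. Producing such a decomposition globally, in a way that combines consistently into a single signed sum of $(n+1)$-term maxima over all of $\R^n$, is where the real effort lies; once it is in hand, the depth bookkeeping in the ReLU construction is entirely routine.
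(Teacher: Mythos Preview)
Your proposal is correct and follows essentially the same approach as the paper: the inclusion $\ReLU_n(k^*)\subseteq\CPWL_n$ is dispatched by the observation that compositions of affine maps and $\sigma$ are CPWL, while the reverse inclusion is obtained by combining the Wang--Sun decomposition $\CPWL_n=\MAX_n(n+1)$ with the binary-tree construction showing $\MAX_n(2^k)\subseteq\ReLU_n(k)$ (\Cref{Lem:max}), exactly as in the paper's proof sketch. Your additional remarks on padding via $x=\sigma(x)-\sigma(-x)$ and parallel composition make explicit what the paper leaves implicit in the phrase ``NNs can easily form affine combinations.''
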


Since this result is the starting point for our paper, let us briefly sketch its proof.
For this purpose, we start with a simple special case of a CPWL function: the maximum of~$n$ numbers. Recall that one hidden layer suffices to compute the maximum of two numbers, see \Cref{Fig:Max2Num}. Now one can easily stack this operation: in order to compute the maximum of four numbers, we divide them into two pairs with two numbers each, compute the maximum of each pair and then the maximum of the two results. This idea results in the NN depicted in \Cref{Fig:Max4Num}, which has two hidden layers.
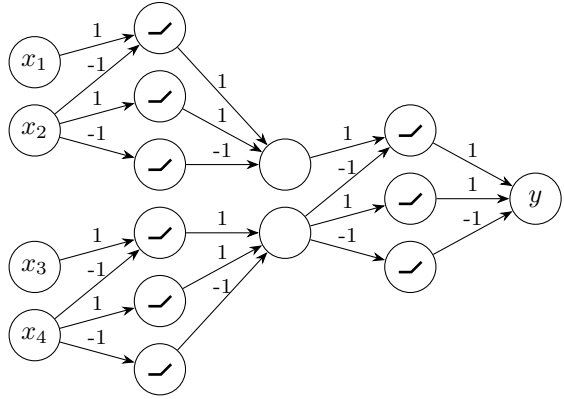
\begin{figure}[t]
	\centering
	\begin{tikzpicture}[every node/.style={transform shape}]\small
		\node[smallneuron] (x1) at (0,6ex) {$x_1$};
		\node[smallneuron] (x2) at (0,0) {$x_2$};
		\node[smallneuron] (x3) at (0,-12ex) {$x_3$};
		\node[smallneuron] (x4) at (0,-18ex) {$x_4$};
		\node[smallneuron] (n11) at (11ex,9ex) {\relu};
		\node[smallneuron] (n12) at (11ex,3ex) {\relu};
		\node[smallneuron] (n13) at (11ex,-3ex) {\relu};
		\node[smallneuron] (n14) at (11ex,-9ex) {\relu};
		\node[smallneuron] (n15) at (11ex,-15ex) {\relu};
		\node[smallneuron] (n16) at (11ex,-21ex) {\relu};
		\node[smallneuron] (m1) at (22ex,-3ex) {};
		\node[smallneuron] (m2) at (22ex,-9ex) {};
		\node[smallneuron] (n21) at (33ex,0ex) {\relu};
		\node[smallneuron] (n22) at (33ex,-6ex) {\relu};
		\node[smallneuron] (n23) at (33ex,-12ex) {\relu};
		\node[smallneuron] (m) at (44ex,-6ex) {$y$};
		\smaller
		\draw[connection] (n21) -- node[above]{1} (m);
		\draw[connection] (n22) -- node[above]{1} (m);
		\draw[connection] (n23) -- node[above]{-1} (m);
		\draw[connection] (n11) -- node[above]{1} (m1);
		\draw[connection] (n12) -- node[above]{1} (m1);
		\draw[connection] (n13) -- node[above]{-1} (m1);
		\draw[connection] (n14) -- node[above]{1} (m2);
		\draw[connection] (n15) -- node[above]{1} (m2);
		\draw[connection] (n16) -- node[above]{-1} (m2);
		\draw[connection] (x1) -- node[above]{1} (n11);
		\draw[connection] (x2) -- node[above]{-1} (n11);
		\draw[connection] (x2) -- node[above]{1} (n12);
		\draw[connection] (x2) -- node[above]{-1} (n13);
		\draw[connection] (m1) -- node[above]{1} (n21);
		\draw[connection] (m2) -- node[above]{-1} (n21);
		\draw[connection] (m2) -- node[above]{1} (n22);
		\draw[connection] (m2) -- node[above]{-1} (n23);
		\draw[connection] (x3) -- node[above]{1} (n14);
		\draw[connection] (x4) -- node[above]{-1} (n14);
		\draw[connection] (x4) -- node[above]{1} (n15);
		\draw[connection] (x4) -- node[above]{-1} (n16);
	\end{tikzpicture}
	\caption{An NN to compute the maximum of four numbers that consists of three copies of the NN in \Cref{Fig:Max2Num}. Note that no activiation function is applied at the two unlabeled middle vertices (representing $\max\{x_1,x_2\}$ and~$\max\{x_3,x_4\}$). Therefore, the linear transformations directly before and after these vertices can be combined into a single one. Thus, the network has total depth three (two hidden layers).}
	\label{Fig:Max4Num}
\end{figure}
Repeating this procedure, one can compute the maximum of eight numbers with three hidden layers, and, in general, the maximum of $2^k$ numbers with $k$ hidden layers. Phrasing this the other way around, we obtain that the maximum of $n$ numbers can be computed with $\lceil\log_2(n)\rceil$ hidden layers. Since NNs can easily form affine combinations, this implies the following lemma.

\begin{lemma}[Arora et al.~\cite{Arora:DNNwithReLU}] \label{Lem:max}
	If $n,k\in\N$, then $\MAX_n(2^k)\subseteq\ReLU_n(k)$.
\end{lemma}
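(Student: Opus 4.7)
The plan is to prove the statement in two stages: first, show that a single $2^k$-term max function can be represented by a $(k+1)$-layer NN using a recursive ``divide and conquer'' construction; second, show that an arbitrary linear combination of such functions can be represented by placing copies in parallel and combining them in the output layer.

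For the first stage, I would proceed by induction on $k$. For the base case $k=1$, the construction in \Cref{Fig:Max2Num} already shows that $\max\{a,b\}$ can be computed with a single hidden layer of $3$ ReLU neurons; since the two inputs to the $\max$ can themselves be affine functions $\ell_1(x), \ell_2(x)$ of the input vector $x \in \R^n$, we absorb the maps $x \mapsto \ell_i(x)$ into the first affine transformation $T^{(1)}$, producing a $2$-layer NN that computes $\max\{\ell_1(x),\ell_2(x)\}$. For the inductive step, assume the claim holds for $k-1$, and let $g(x) = \max\{\ell_1(x),\dots,\ell_{2^k}(x)\}$. Split the terms into two halves and define $g_L(x) = \max_{1 \le i \le 2^{k-1}} \ell_i(x)$ and $g_R(x) = \max_{2^{k-1} < i \le 2^k} \ell_i(x)$. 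By induction, each of $g_L, g_R$ is represented by a $k$-layer NN sharing the input layer. Running these two subnetworks in parallel yields a $k$-layer NN with two output coordinates $(g_L(x), g_R(x))$, and feeding these into the $2$-layer base-case gadget for $\max\{g_L,g_R\}$ produces $g$. The output transformation of the parallel subnetworks and the input transformation of the base-case gadget are both affine (no ReLU in between), so they compose into a single affine transformation, yielding a $(k+1)$-layer NN as required.

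For the second stage, let $f = \sum_{j=1}^m c_j g_j$ where each $g_j$ is a $2^k$-term max function (if some $g_j$ has fewer than $2^k$ terms, pad by repeating any of its terms, which is harmless since $\max$ is idempotent). By the first stage, each $g_j$ is computed by a $(k+1)$-layer NN that reads the same input $x$. Placing all $m$ subnetworks in parallel (sharing the input layer and with hidden layers stacked side-by-side) gives a single $(k+1)$-layer NN whose output layer carries the vector $(g_1(x),\dots,g_m(x))$. Finally, the coefficients $c_j$ can be folded into the (linear) output transformation $T^{(k+1)}$, producing a scalar output $\sum_j c_j g_j(x) = f(x)$. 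Thus $f \in \ReLU_n(k)$.

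The main obstacle, such as there is one, is purely bookkeeping: one has to verify that the parallel composition of subnetworks and the fusion of adjacent affine transformations do not increase the depth and that the padding trick handles max-functions with fewer than $2^k$ terms. No new ideas beyond the $2$-input $\max$ gadget and the $\lceil \log_2 \cdot \rceil$-style binary-tree recursion are needed.
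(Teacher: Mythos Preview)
Your proposal is correct and mirrors the paper's own argument essentially exactly: the paper sketches the same binary-tree recursion (pairwise $\max$ via the one-hidden-layer gadget of \Cref{Fig:Max2Num}, then stacking as in \Cref{Fig:Max4Num}) and then remarks that affine combinations are easily absorbed by the network. Your additional remarks about padding fewer-than-$2^k$ terms and fusing adjacent affine maps are precisely the bookkeeping the paper leaves implicit.
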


The question whether the depth of this construction is best possible is one of the central open questions we attack in this paper.

In fact, the maximum function is not just a nice toy example, it is, in some sense, the most difficult one of all CPWL function to represent for a ReLU~NN. This is due to a result by Wang and Sun~\cite{wang2005generalization} stating that every CPWL function defined on $\R^n$ can be written as linear combination of $(n+1)$-term max functions.

\begin{theorem}[Wang and Sun~\cite{wang2005generalization}]\label{Thm:wang}
	If $n\in\N$, then $\CPWL_n=\MAX_n(n+1)$.
\end{theorem}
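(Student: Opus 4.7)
The inclusion $\MAX_n(n+1)\subseteq\CPWL_n$ is immediate: a max of finitely many affine functions is convex and piecewise linear, and $\CPWL_n$ is closed under linear combinations. For the substantive direction $\CPWL_n\subseteq\MAX_n(n+1)$, I plan a two-step reduction.

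First, I would reduce to the convex case by invoking the classical fact that every $f\in\CPWL_n$ admits a difference-of-convex representation $f=g-h$ with $g,h$ convex CPWL. Concretely, given the polyhedral subdivision $\{P_1,\dots,P_N\}$ of $\R^n$ induced by $f$ with pieces $\ell_1,\dots,\ell_N$, one can express $f$ as an alternating sum $\sum_{\emptyset\neq I\subseteq[N]}c_I\max_{i\in I}\ell_i$ whose coefficients come from the nerve-style combinatorics of the subdivision; collecting the positive and the negative $c_I$ yields the DC decomposition. Since $\MAX_n(n+1)$ is a real vector space, it then suffices to show that every convex CPWL function lies in it.

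Second, I would show that every convex CPWL $g=\max_{i=1}^{N}\ell_i$ with $N>n+1$ can be rewritten as a signed linear combination of convex CPWL functions each with strictly fewer pieces. The key input is that affine functions on $\R^n$ form an $(n+1)$-dimensional vector space, so any $n+2$ of the pieces $\ell_{i_1},\dots,\ell_{i_{n+2}}$ obey a nontrivial linear relation. The plan is to translate this relation into an identity at the level of maxes by lifting to a polytopal picture: $g$ corresponds to its extended Newton polytope $P_g=\conv\{(a_i,c_i):i\in[N]\}\subseteq\R^{n+1}$, pointwise addition of convex CPWL functions corresponds to Minkowski addition of such polytopes (modulo upper-envelope equivalence in the last coordinate), and $(n+1)$-term maxes correspond to simplices on at most $n+1$ vertices. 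The desired identity then reduces to decomposing $P_g$ as a signed Minkowski combination of such simplices, which one can carry out by induction on the number of upper-envelope vertices, peeling off one vertex at a time via a local coning/triangulation move. Iterating drives the number of pieces down to $n+1$.

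The main obstacle is making this reduction step fully rigorous: a linear dependence among $n+2$ affine functions gives an identity at the level of functions, but extracting a corresponding identity at the level of \emph{maxes} requires working carefully in the polytope algebra, handling the upper-envelope equivalence and degeneracies (non-generic dependencies, coincident pieces, parallel slopes) along the way. This reduction from $N$ pieces down to $n+1$ pieces per max term is the dimension-specific heart of the theorem, and the place where the bound $n+1$ (rather than some larger function of the subdivision) actually arises.
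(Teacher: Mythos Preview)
Your plan is essentially the paper's own alternative proof (Section~\ref{sec:width}): reduce to the convex case via a DC decomposition, then inductively cut down the number of terms in a convex max using a dependence among $n+2$ affine pieces, interpreted through Newton polytopes. The paper resolves the obstacle you flag with an explicit identity rather than a vague ``peeling'' move: Radon's theorem applied to the slope vectors $a_i\in\R^n$ produces a subset $U\subsetneq[p]$ for which
\[
\sum_{\substack{W\subseteq U\\|W|\text{ even}}}\max_{i\in[p]\setminus W}\ell_i
\;=\;
\sum_{\substack{W\subseteq U\\|W|\text{ odd}}}\max_{i\in[p]\setminus W}\ell_i,
\]
proved by comparing support functions of the corresponding Newton polyhedra direction by direction (Proposition~\ref{Prop:one_step}); isolating the unique $W=\emptyset$ term on the left expresses $f$ as an integer combination of maxes with at most $p-1$ terms, and iterating finishes (Theorem~\ref{Thm:main}). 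For the DC step the paper also uses a more concrete construction than your nerve combinatorics, namely $h=\sum_{i<j}\max\{\ell_i,\ell_j\}$ with a direct convexity check for $g=f+h$ (Proposition~\ref{Prop:decomp}).
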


The proof given by Wang and Sun~\cite{wang2005generalization} is technically involved and we do not go into details here. However, in \Cref{sec:width} we provide an alternative proof yielding a slightly stronger result. This will be useful to bound the width of NNs representing arbitrary CPWL functions.

\Cref{Thm:CPWL} by Arora et al.~\cite{Arora:DNNwithReLU} can now be deduced from combining \Cref{Lem:max} and \Cref{Thm:wang}: In fact, for $k^*=\lceil\log_2 (n + 1)\rceil$, one obtains \[\CPWL_n=\MAX_n(n+1)\subseteq\ReLU_n(k^*)\subseteq\CPWL_n\] and thus equality in the whole chain of subset relations.

\subsection{Our Main Conjecture} 

We wish to understand whether the logarithmic depth bound in \Cref{Thm:CPWL} by Arora et al.~\cite{Arora:DNNwithReLU} is best possible or whether one can do better. We believe it is indeed best possible and pose the following conjecture to better understand the importance of depth in neural networks.

\begin{conjecture}\label{Con:main}
	For every $n\in\N$, let $k^*\coloneqq\lceil\log_2 (n + 1)\rceil$. Then it holds that
	\begin{equation}\label{Equ:chain}
		\ReLU_n(0)\subsetneq\ReLU_n(1)\subsetneq\dots\subsetneq\ReLU_n(k^*-1)\subsetneq\ReLU_n(k^*)=\CPWL_n.
	\end{equation}
\end{conjecture}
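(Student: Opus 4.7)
The plan is to split the conjecture into two parts. The final equality $\ReLU_n(k^*)=\CPWL_n$ is already delivered by \Cref{Thm:CPWL}, so nothing new is needed there. All the substance lies in the strict inclusions $\ReLU_n(k)\subsetneq\ReLU_n(k+1)$ for $k=0,1,\dots,k^*-1$, and my approach would be to exhibit, for each such $k$, a witness CPWL function $f_k\colon\R^n\to\R$ that lies in $\ReLU_n(k+1)$ but not in $\ReLU_n(k)$.

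The natural candidate is $f_k(x):=\max\{0,x_1,\dots,x_{2^k}\}$, which uses only $2^k$ variables; since $k\le k^*-1$ forces $2^k\le 2^{k^*-1}\le n$, the input dimension is sufficient. Because $f_k$ is a $(2^k+1)$-term max and hence a $2^{k+1}$-term max, \Cref{Lem:max} immediately gives $f_k\in\MAX_n(2^{k+1})\subseteq\ReLU_n(k+1)$. The base case $k=0$ is trivial, since $\ReLU_n(0)$ consists of affine functions only while $f_0(x)=\max\{0,x_1\}$ is not affine. The whole conjecture thus reduces to showing that, for each $k\ge 1$, the function $f_k$ cannot be computed with only $k$ hidden layers of arbitrary width.

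For this lower bound I would try to isolate a structural invariant $\mu$ of a CPWL function that (i) grows only by a controlled factor (ideally at most a doubling) when one composes an affine map with a ReLU layer, and (ii) is provably large for the iterated maximum $f_k$. Candidate invariants, each leveraging a different toolbox mentioned in the paper, include the number of vertices of the Newton polytope of the tropical signomial representation of $f$, the tropical rank of a max-plus matrix factorization, the combinatorial type of the polyhedral subdivision of $\R^n$ induced by $f$, or an extended-formulation complexity measure of the epigraph coming from mixed-integer programming. The plan would then be a pigeonhole: after $k$ layers the invariant is at most $\mu_0\cdot 2^k$ for some base value $\mu_0$ coming from the input, whereas $\mu(f_k)\ge 2^k+1$, yielding the desired separation.

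The hard part, and the reason the statement remains only a conjecture, is the lower bound at \emph{unbounded width}. Every technique I am aware of either implicitly restricts the width (Mont\'ufar-style linear-region counting) or, once generalized to arbitrary width, stops separating $k$ from $k+1$ as soon as $k\ge 2$, because an extra unbounded layer is powerful enough to collapse most simple candidate invariants. I would expect a direct polyhedral argument, classifying $\ReLU_n(1)$ as the class of differences of two positively homogeneous convex CPWL functions, to settle the first nontrivial step $\ReLU_n(1)\subsetneq\ReLU_n(2)$, but pushing the same idea past $k=2$ looks genuinely difficult; the real obstacle is designing the right invariant rather than the combinatorics of the max function itself.
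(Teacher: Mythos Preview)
Your reduction is essentially the paper's own: the equality $\ReLU_n(k^*)=\CPWL_n$ is \Cref{Thm:CPWL}, and the witnesses $f_k=\max\{0,x_1,\dots,x_{2^k}\}$ are exactly what the paper proposes in \Cref{Con:simplified}, with the equivalence to \Cref{Con:main} proved in \Cref{prop:equivalence}. The paper also notes the stabilization fact (if $\ReLU_n(i-1)=\ReLU_n(i)$ then equality persists for all larger $i$), which lets one reduce to the single top inclusion rather than treat each level separately; this is a mild streamlining of your per-level argument but amounts to the same content.

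The essential point, however, is that this statement is a \emph{conjecture}: the paper does not prove it, and neither does your proposal. You correctly locate the obstruction --- showing $f_k\notin\ReLU_n(k)$ at unbounded width --- and your candid admission that the candidate invariants collapse for $k\ge 2$ matches the state of the art. Two corrections are in order. First, the step $\ReLU_n(1)\subsetneq\ReLU_n(2)$ is already settled unconditionally~\cite{mukherjee2017lower}; the first open case is $k=2$, for which the paper only obtains a conditional result (\Cref{Thm:maxWithAssumption}) under the $H$-conforming hypothesis. Second, your proposed description of $\ReLU_n(1)$ as ``differences of two positively homogeneous convex CPWL functions'' cannot serve as a separating invariant, because \emph{every} positively homogeneous CPWL function has such a decomposition; the correct characterization is $\ReLU(1)=\MAX(2)$. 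Your tropical/Newton-polytope direction is exactly what the paper pursues in \Cref{Sec:Polytopes}, culminating in the equivalent polytope formulation \Cref{Con:polytopes}, but no invariant of the type you sketch has yet been made to work beyond $k=1$.
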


\Cref{Con:main} claims that any additional layer up to $k^*$ hidden layers strictly increases the set of representable functions. This would imply that the construction by Arora et al.~\cite{Arora:DNNwithReLU} is actually depth-minimal.

Observe that, in order to prove \Cref{Con:main}, it is sufficient to find, for every $k^* \in \N$, one function \mbox{$f\in\ReLU_n(k^*)\setminus\ReLU_n(k^*-1)$} with $n = 2^{k^* - 1}$.
This also implies all other strict inclusions $\ReLU_n(i-1)\subsetneq\ReLU_n(i)$ for $i < k^*$ because \mbox{$\ReLU_n(i-1)=\ReLU_n(i)$} immediately implies that $\ReLU_n(i-1)=\ReLU_n(i')$ for all~\mbox{$i'\geq i-1$}.

In fact, thanks to \Cref{Thm:wang} by Wang and Sun~\cite{wang2005generalization}, there is a canonical candidate for such a function, allowing us to reformulate the conjecture as follows.

\begin{conjecture}\label{Con:simplified}
	For $k\in\N$, $n=2^k$, the function $f_n(x)=\max\{0,x_1,\dots,x_n\}$ cannot be represented with $k$ hidden layers, that is, $f_n\notin \ReLU_n(k)$.
\end{conjecture}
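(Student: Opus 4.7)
My approach would use tropical geometry via Newton polytopes. By \Cref{Thm:wang}, every CPWL function decomposes as $f=g-h$ with $g,h$ convex CPWL, and each such convex function $\max_{i\in[p]}\{a_i^\top x + b_i\}$ is encoded by its Newton polytope $\Newt(\cdot)\coloneqq\conv\{a_i:i\in[p]\}\subseteq\R^n$. Under this encoding, addition of convex CPWL functions corresponds to Minkowski sum of polytopes, and pointwise maximum corresponds to convex hull of union. In particular, $\Newt(f_n)$ is the standard simplex $\Delta_n=\conv\{0,e_1,\ldots,e_n\}$. The plan is to characterize which pairs of Newton polytopes can arise from depth-$(k+1)$ networks and to rule out the case $n=2^k$.

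I would first establish, by induction on the layer index, an explicit structural description of the Newton polytopes of the convex and concave parts of each neuron's output. Since each ReLU application can be rewritten as $\max\{0,u_+-u_-\}=\max\{u_+,u_-\}-u_-$, the Newton polytopes evolve under only the two operations of Minkowski sum and convex-hull-of-union: layer~$1$ contributes line segments; layer~$2$ produces zonotopes and convex hulls of pairs of zonotopes; and depth~$k+1$ constrains all Newton polytopes to be expressible by at most $k$ alternating rounds of these operations starting from line segments. Combined with the identity $\Newt(g)=\Delta_n+\Newt(h)$ that any representation $f_n=g-h$ would force, this places strong combinatorial restrictions on the two polytopes.

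The second ingredient is the classical Minkowski indecomposability of the simplex: $\Delta_n=P+Q$ forces $P$ or $Q$ to be a single point. For $k=1$ this already closes the argument cleanly: both $\Newt(g)$ and $\Newt(h)$ must be zonotopes (Minkowski sums of line segments), and by Shephard's theorem that Minkowski summands of zonotopes are zonotopes, $\Delta_n$ would have to be a zonotope---impossible for $n\geq 2$. For general $k$ one would like to propagate indecomposability through the layers, arguing that a scalar copy of~$\Delta_n$ must appear as the Newton polytope of some single neuron, then recurse into the subnetwork computing that neuron to force $k\geq\lceil\log_2(n+1)\rceil$ hidden layers.

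The main obstacle is that ReLU networks naturally produce \emph{virtual} polytopes of the form $P-Q$ rather than genuine Minkowski summands: a simplex can in principle arise as a formal difference of Minkowski sums of polytopes none of which individually contains a simplex summand, so pointwise indecomposability does not directly transfer. The hard part is finding the right signed invariant---perhaps a combinatorial count of simplex-type facets, a mixed-volume identity, or a signed contribution to the normal fan---that survives the alternating $\max$ and $+$ operations across $k$ layers while remaining sensitive enough to detect $\Delta_n$. This is presumably why \Cref{Con:simplified} is open in full generality; partial results for small $k$, for restricted architectures, or for integer weights should nevertheless be within reach via careful polytope combinatorics on top of the framework above.
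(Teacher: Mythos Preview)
The statement is a \emph{conjecture}; the paper does not prove it, and neither does your proposal. What you have written is a reformulation and a sketch of obstacles, not a proof. That said, your framework is essentially identical to the paper's own reformulation in \Cref{Sec:Polytopes}: the paper proves (\Cref{Thm:Newton}) that a positively homogeneous CPWL function lies in $\ReLU_n(k)$ if and only if it is a difference $g-h$ with $\overbar{\mathcal{N}}(g),\overbar{\mathcal{N}}(h)\in\overbar{\Newt}_n^{(k)}$, and restates the conjecture as \Cref{Con:polytopes}, namely that no $P,Q\in\overbar{\Newt}_n^{(k)}$ satisfy $\Delta_n+Q=P$. Your inductive description of the polytope classes, your identity $\Newt(g)=\Delta_n+\Newt(h)$, and your identification of the ``virtual polytope'' obstacle all match the paper exactly. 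Your $k=1$ argument via Shephard's theorem (summands of zonotopes are zonotopes, hence $\Delta_2$ cannot be one) is correct and is precisely the observation the paper makes in the paragraph following \Cref{Con:polytopes}; this case was already known from~\cite{mukherjee2017lower}.

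Where the paper diverges from your proposal is in the partial progress it actually obtains. Rather than pushing polytope indecomposability through layers, the paper attacks $k=2$ by an entirely different route: it restricts to \emph{$H$-conforming} networks (neurons whose breakpoints lie on the braid arrangement), reduces the problem to a $30$-dimensional linear-algebra question, and certifies infeasibility via an exact mixed-integer program (\Cref{Thm:maxWithAssumption}). Your outline does not mention this line of attack. Conversely, the paper explicitly flags your proposed ``signed invariant that survives alternating $\max$ and $+$'' as the open problem, without supplying one. Two minor points: your appeal to \Cref{Thm:wang} for the decomposition $f=g-h$ is the wrong citation (that theorem is $\CPWL_n=\MAX_n(n+1)$; the convex difference decomposition is \Cref{Prop:decomp} or the reference therein), and you should invoke \Cref{prop:wlognobias} to justify dropping the bias coordinate from the Newton polytope before writing $\Newt(f_n)=\Delta_n$.
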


\begin{proposition}\label{prop:equivalence}
	\Cref{Con:main} and \Cref{Con:simplified} are equivalent.
\end{proposition}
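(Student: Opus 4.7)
The plan is to establish both implications separately, exploiting the fact that $f_n=\max\{0,x_1,\dots,x_n\}$ with $n=2^{k^*-1}$ is, in a precise sense, the ``hardest'' candidate function to realize with $k^*-1$ hidden layers.

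For \Cref{Con:simplified}$\Rightarrow$\Cref{Con:main}, I would appeal to the observation recorded just before the proposition statement: it suffices to exhibit, for every $k^*\in\N$, one function in $\ReLU_n(k^*)\setminus\ReLU_n(k^*-1)$ with $n=2^{k^*-1}$. I claim $f_n$ itself serves as such a witness. Since $f_n$ is a maximum of $n+1\leq 2^{k^*}$ affine terms, \Cref{Lem:max} gives $f_n\in\MAX_n(2^{k^*})\subseteq\ReLU_n(k^*)$. Conversely, \Cref{Con:simplified} applied with $k=k^*-1$ yields $f_n\notin\ReLU_n(k^*-1)$; the degenerate base case $k^*=1$ is immediate because $f_1=\max\{0,x_1\}$ is not linear while $\ReLU_1(0)$ consists of purely linear maps.

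For the converse \Cref{Con:main}$\Rightarrow$\Cref{Con:simplified}, fix $k\in\N$, set $n=2^k$ so that $k^*:=\lceil\log_2(n+1)\rceil=k+1$, and assume for contradiction that $f_n\in\ReLU_n(k)$. I would derive $\CPWL_n\subseteq\ReLU_n(k)$, contradicting the strict inclusion $\ReLU_n(k^*-1)\subsetneq\ReLU_n(k^*)=\CPWL_n$ promised by \Cref{Con:main}. By \Cref{Thm:wang} every $g\in\CPWL_n$ is a linear combination of $(n+1)$-term max functions, and linear combinations of functions in $\ReLU_n(k)$ remain in $\ReLU_n(k)$ (place subnetworks in parallel and combine their outputs through the final linear transformation), so it suffices to show $h(x):=\max\{\ell_0(x),\dots,\ell_n(x)\}\in\ReLU_n(k)$ for arbitrary affine $\ell_0,\dots,\ell_n$. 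Rewriting
\[
h(x)=\ell_0(x)+f_n\bigl(\ell_1(x)-\ell_0(x),\dots,\ell_n(x)-\ell_0(x)\bigr),
\]
the $f_n$-part is computable in depth $k$ by absorbing the affine map $x\mapsto(\ell_i(x)-\ell_0(x))_{i=1}^n$ into the first layer of the assumed depth-$k$ network for $f_n$.

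The main subtlety is re-adding the affine correction $\ell_0(x)$ without increasing the depth, which is nontrivial because the output transformation $T^{(k+1)}$ carries no bias and $\ell_0$ may take negative values. I would handle this with a two-neuron ``side channel'' attached to the network: two extra neurons in the first hidden layer compute $\sigma(\ell_0(x))$ and $\sigma(-\ell_0(x))$; in each subsequent hidden layer these nonnegative values are carried through identically via $\sigma(y)=y$ for $y\geq 0$; and the final linear transformation subtracts the second channel from the first to recover $\ell_0(x)=\sigma(\ell_0(x))-\sigma(-\ell_0(x))$ and adds it to the $f_n$-output. This preserves total depth $k$, completes the construction of $h\in\ReLU_n(k)$, and yields the desired contradiction.
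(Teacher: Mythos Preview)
Your argument is correct and follows essentially the same route as the paper's proof: both directions use the same rewriting $\max\{\ell_0,\dots,\ell_n\}=\ell_0+f_n(\ell_1-\ell_0,\dots,\ell_n-\ell_0)$ together with \Cref{Thm:wang}, absorbing the inner affine map into the first layer. You are simply more explicit than the paper in two places---the base case $k^*=1$ and the side-channel construction $\ell_0=\sigma(\ell_0)-\sigma(-\ell_0)$ to pass the affine correction through without a bias in $T^{(k+1)}$---where the paper merely asserts that affine functions ``can easily be represented by $k$-hidden-layer NNs.''
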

\begin{proof}
	We argued above that \Cref{Con:simplified} implies \Cref{Con:main}. For the other direction, we prove the contraposition, that is, assuming that \Cref{Con:simplified} is violated, we show that \Cref{Con:main} is violated as well. To this end, suppose there is a $k\in\N$, $n=2^k$, such that $f_n$ is representable with $k$ hidden layers. We argue that under this hypothesis, any $(n+1)$-term max function can be represented with $k$ hidden layers. To see this, observe that
	\[\max\{\ell_1(x), \ldots, \ell_{n+1}(x)\} = \max\{0, \ell_1(x)-\ell_{n+1}(x), \ldots, \ell_{n}(x)-\ell_{n+1}(x)\} + \ell_{n+1}(x).\]
	Modifying the first-layer weights of the NN computing $f_n$ such that input $x_i$ is replaced by the affine expression $\ell_{i}(x)-\ell_{n+1}(x)$, one obtains a $k$-hidden-layer NN computing the function \mbox{$\max\{0, \ell_1(x)-\ell_{n+1}(x), \ldots, \ell_{n}(x)-\ell_{n+1}(x)\}$}. Moreover, since affine functions, in particular also $\ell_{n+1}(x)$, can easily be represented by $k$-hidden-layer NNs, we obtain that any $(n+1)$-term maximum is in $\ReLU_n(k)$. Using \Cref{Thm:wang} by Wang and Sun~\cite{wang2005generalization}, it follows that $\ReLU_n(k)=\CPWL_n$. In particular, since $k^*\coloneqq\lceil\log_2 (n + 1)\rceil=k+1$, we obtain that \Cref{Con:main} must be violated as well.
\end{proof}

It is known that \Cref{Con:simplified} holds for $k=1$~\cite{mukherjee2017lower}, that is, the CPWL function~$\max\{0,x_1,x_2\}$ cannot be computed by a 2-layer NN. The reason for this is that the set of breakpoints of a CPWL function computed by a 2-layer NN is always a union of lines, while the set of breakpoints of $\max\{0,x_1,x_2\}$ is a union of three half-lines; compare \Cref{Fig:max3} and the detailed proof by Mukherjee and Basu~\cite{mukherjee2017lower}.
Moreover, in subsequent work to the first version of this article, it was shown that the conjecture is true for all $k\in\N$ if one only allows integer weights in the neural network~\cite{haase2023lower}. However, this proof does not easily generalize to arbitrary, real-valued weights. Thus, the conjecture remains open for all $k \geq 2$.

\begin{figure}[t]
	\centering
	\begin{tikzpicture}[scale=1.1]
		\draw[] (0,0) -- (1,1);
		\draw[] (0,-1.2) -- (0,0);
		\draw[] (-1.2,0) -- (0,0);
		%\small
		\node at (0.6,-0.24) {$x_1$};
		\node at (-0.6,-0.6) {$0$};
		\node at (-0.24,0.6) {$x_2$};
	\end{tikzpicture}\hspace{0.1\linewidth}
	\begin{tikzpicture}[every node/.style={transform shape}]
		\small
		\node[smallneuron] (x) at (0,6ex) {$x_1$};
		\node[smallneuron] (y) at (0,0) {$x_2$};
		\node[smallneuron] (n11) at (11ex,9ex) {\relu};
		\node[rotate=90] (n12) at (11ex,3ex) {$\cdots$};
		\node[smallneuron] (n13) at (11ex,-3ex) {\relu};
		\node[smallneuron] (m) at (22ex,3ex) {$y$};
		\draw[connection] (n11) -- (m);
		\draw[connection] (x) -- (n11);
		\draw[connection] (x) -- (n13);
		\draw[connection] (n13) -- (m);
		\draw[connection] (y) -- (n11);
		\draw[connection] (y) -- (n13);
	\end{tikzpicture}\hspace{0.1\linewidth}
	\begin{tikzpicture}[scale=1.1]
		\fill[black!10] (-1.1,-1.1) rectangle (1.1,1.1);
		\draw[] (-1.1,-0.8) -- (1.1,-0.1);
		\draw[] (0.6,1.1) -- (-0.8,-1.1);
		\draw[] (-0.9,1.1) -- (1.1,-1.0);
		\draw[] (-1.1,-0.3) -- (1.1,0.9);
	\end{tikzpicture}
	\caption{Set of breakpoints of the function $\max\{0,x_1,x_2\}$ (left). This function cannot be computed by a 2-layer NN (middle), since the set of breakpoints of any function computed by such an NN is always a union of lines (right).}
	\label{Fig:max3}
\end{figure}
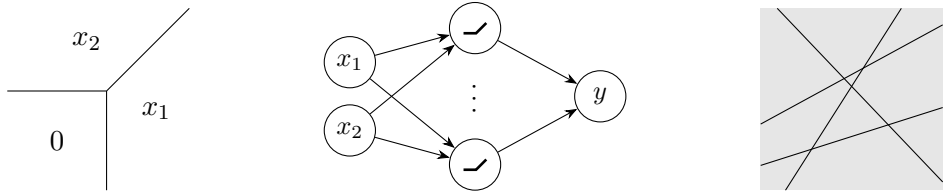

\subsection{Contribution and Outline}

In this paper, we present the following results as partial progress towards resolving this conjecture.

In \Cref{Sec:MIP}, we resolve \Cref{Con:simplified} for $k=2$, under a natural assumption on the breakpoints of the function represented by any intermediate neuron. Intuitively, the assumption states that no neuron introduces unexpected breakpoints compared to the final function we want to represent. We call such neural networks \emph{$H$-conforming}, see \Cref{Sec:MIP} for a formal definition. We then provide a computer-based proof leveraging techniques from mixed-integer programming for the following theorem.

\begin{restatable}{theorem}{thmmaxwithassumption}\label{Thm:maxWithAssumption}
	There does not exist an $H$-conforming 3-layer ReLU NN computing the function $\max\{0,x_1,x_2,x_3,x_4\}$.
\end{restatable}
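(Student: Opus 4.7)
The plan is to reduce the non-existence of an $H$-conforming 3-layer ReLU NN for $f_4(x) \coloneqq \max\{0, x_1, x_2, x_3, x_4\}$ to the infeasibility of a mixed-integer linear program (MILP) over a finite list of candidate architectures, and then to discharge each MILP by computer. The first step is to pin down the relevant hyperplane arrangement $H$: it consists of the breakpoint hyperplanes $\{x_i = 0\}_{i \in [4]}$ and $\{x_i = x_j\}_{i \neq j}$, whose full-dimensional chambers correspond bijectively to the $5! = 120$ linear orderings of the values $\{0, x_1, x_2, x_3, x_4\}$. On each chamber $C$, the target $f_4$ restricts to the affine function selecting the maximum entry, and $H$-conformity forces every neuron's pre- and post-activation to be affine linear on $C$.

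Thus each $H$-conforming candidate network is described by the following finite data: for every neuron $v$ and every chamber $C$, an affine function $g_v^C \colon \R^4 \to \R$ together with a binary indicator $z_v^C \in \{0,1\}$ recording whether the pre-activation of $v$ is non-negative on $C$ (i.e., whether ReLU is ``on''). Continuity of each neuron's CPWL output across adjacent chambers, the layer-to-layer linear propagation rule, and the ReLU coupling (the post-activation equals $g_v^C$ when $z_v^C = 1$ and $0$ when $z_v^C = 0$, with matching sign constraints on $g_v^C$ enforced by big-$M$ inequalities) are all linear in the continuous variables once the binary assignment is fixed. The demand that the output neuron equals $f_4$ on every chamber is also a collection of linear equations.

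Before invoking a solver, I would bound the widths. Because the space of affine functions $\R^4 \to \R$ has dimension $5$, one can assume without loss of generality that the first-layer pre-activations are linearly independent, giving at most $5$ first-layer neurons. For the second hidden layer, a similar linear-dimension argument applied to the $H$-piecewise-affine functions representable by combinations of the first-layer outputs, together with the $S_4$-symmetry of $f_4$ and the redundancy removal that if the sign pattern $(z_v^C)_C$ of two neurons coincides they can be merged, yields a finite (and small) list of architectures to check. For each such architecture one assembles the MILP described above and verifies infeasibility.

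The main obstacle is the combinatorial blow-up: with $120$ chambers and several neurons the raw space of sign patterns $(z_v^C)_{v,C}$ is astronomically large, so a naive enumeration is hopeless. The crux of the proof is therefore to prune aggressively before running the solver. Concretely, I would (i) quotient by the $S_4$-action permuting inputs and by the involution that negates neuron weights (which relabels chambers and flips the $z_v^C$), (ii) exploit that the sign pattern of each neuron's affine pre-activation defines a convex set of chambers in $\R^4$, so only sign patterns realizable as the chambers of $H$ lying on one side of a single hyperplane are admissible in the first layer, and (iii) pre-solve by fixing the binary variables to any symmetry-reduced assignment and checking feasibility of the residual LP. Making the reduction sharp enough that the remaining MILPs can actually be certified infeasible by an exact solver is where the real work sits.
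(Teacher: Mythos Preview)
Your high-level idea---encode $H$-conformity and the ReLU gates as a mixed-integer program and certify infeasibility---is indeed the paper's route, but you are missing the structural reductions that make it tractable, and one of your reductions is incorrect.

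First, the width bound. You argue that the first hidden layer needs at most $5$ neurons because the space of affine functions on $\R^4$ is $5$-dimensional. This does not work: linear dependence among the \emph{pre}-activations $a_i^T x + b_i$ does not imply linear dependence among the \emph{post}-activations $\max\{0,a_i^T x + b_i\}$ (e.g., $\max\{0,x\}$ and $\max\{0,-x\}$ are independent). So you cannot bound the first layer by $5$ neurons, and your architecture enumeration is not justified. The paper sidesteps this entirely and never bounds the width.

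Second, and more importantly, the paper exploits three structural facts that collapse the problem to a \emph{single} small MIP:
\begin{itemize}
\item Since $f_4$ is positively homogeneous, one may assume all biases are zero. Then every $H$-conforming function is determined by its values on the $30$ extreme rays of the arrangement (the $120$ cells are simplicial cones over these rays), so the space $\mathcal{S}^{30}$ of $H$-conforming functions is $30$-dimensional, not $120\times 5$-dimensional.
\item The span of outputs of $H$-conforming first-layer neurons is a specific $14$-dimensional subspace $\mathcal{S}^{14}\subset\mathcal{S}^{30}$, and there is an explicit linear functional $\phi$ on $\mathcal{S}^{30}$ that vanishes on the $29$-dimensional subspace $\mathcal{S}^{29}$ spanned by all max functions over proper subsets of $\{0,x_1,\dots,x_4\}$ but is nonzero on $f_4$.
\item Because the network output is a linear combination of second-layer neuron outputs, it suffices to show that $\sigma(g)\in\mathcal{S}^{29}$ for every $g\in\mathcal{S}^{14}$ whose ReLU image is $H$-conforming. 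This is a question about \emph{one} neuron, not the whole architecture.
\end{itemize}
The resulting MIP has $14$ continuous coefficients for $g$, $30$ binary variables (one per ray, not per chamber), and $30$ continuous outputs, with $H$-conformity encoded by pairwise sign constraints on rays indexed by nested subsets. This is $30$ binaries and $44$ continuous variables total---solvable exactly in a week with a rational solver---whereas your per-chamber, per-neuron formulation with architecture enumeration would be orders of magnitude larger even after symmetry reduction. The ``real work'' you allude to is precisely discovering the ray/vector-space picture and the single-neuron reduction via $\phi$; without them the program does not close.
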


In the light of \Cref{Lem:max}, stating that $\MAX(2^k)\subseteq\ReLU(k)$ for all $k\in\N$, one might ask whether the converse is true as well, that is, whether the classes $\MAX(2^k)$ and $\ReLU(k)$ are actually equal. This would not only provide a neat characterization of $\ReLU(k)$, but also prove \Cref{Con:simplified} without any additional assumption since one can show that $\max\{0,x_1,\dots,x_{2^k}\}$ is not contained in $\MAX(2^k)$.

In fact, for $k=1$, it is true that $\ReLU(1)=\MAX(2)$, that is, a function is computable with one hidden layer if and only if it is a linear combination of 2-term max functions.	
However, in \Cref{sec:more}, we show the following theorem.

\begin{restatable}{theorem}{thmricher}\label{thm:richer}
	For every $k\geq2$, the set $\ReLU(k)$ is a strict superset of $\MAX(2^k)$.
\end{restatable}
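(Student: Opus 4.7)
The plan is to construct, for every $k\ge 2$, an explicit function $f_k \in \ReLU(k)\setminus\MAX(2^k)$, establishing the strict inclusion. My candidate is
\[
  f_k(x_1,\dots,x_{2k}) \;\coloneqq\; \sigma\Bigl(\textstyle\sum_{i=1}^{k}\max\{x_{2i-1},x_{2i}\}\Bigr),
\]
which lies in $\ReLU(2)\subseteq\ReLU(k)$: the first hidden layer computes the $k$ binary maxima $\max\{x_{2i-1},x_{2i}\}$ using the three-ReLU gadget of \Cref{Fig:Max2Num}, and the second hidden layer applies a single outer ReLU to their sum. By repeatedly applying the tropical identity $\max\{a,b\}+\max\{c,d\}=\max\{a+c,a+d,b+c,b+d\}$, one expands the inner sum to see that $f_k$ is in fact a single $(2^k+1)$-term max,
\[
  f_k(x) \;=\; \max\Bigl(\{0\}\cup\bigl\{\textstyle\sum_{i=1}^{k} x_{s_i} : s_i\in\{2i-1,2i\}\bigr\}\Bigr).
\]
In particular $f_k$ is convex CPWL with $2^k+1$ distinct gradients, and its Newton polytope $P_k \coloneqq \Newt(f_k) \subset \R^{2k}$ is the pyramid with apex at the origin over the $k$-dimensional combinatorial parallelotope $\sum_{i=1}^{k}\conv\{e_{2i-1},e_{2i}\}$ (whose $2^k$ vertices are all extreme in the pyramid), so $P_k$ has exactly $2^k+1$ vertices.

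To prove $f_k\notin\MAX(2^k)$, I would argue by contradiction. Any representation $f_k=\sum_j c_j g_j$ with $g_j$ a $2^k$-term max function can be regrouped by the sign of $c_j$ as $f_k = G-H$, with $G$ and $H$ positive combinations of $2^k$-term max functions and hence convex CPWL. Newton-polytope additivity applied to the identity $G = f_k + H$ of convex CPWL functions yields $\Newt(G) = P_k + \Newt(H)$, so $P_k$ appears as a Minkowski summand of $\Newt(G)$. But $\Newt(G)$ is itself a Minkowski sum of Newton polytopes of the individual $g_j$, each with at most $2^k$ vertices. Hence $P_k$ must be a Minkowski summand of a finite Minkowski sum of $(\le 2^k)$-vertex polytopes. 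I would then invoke (a) the classical fact that a pyramid over any base is Minkowski-indecomposable, combined with (b) a primality principle stating that an indecomposable polytope appearing as a summand of $\sum_j Q_j$ must coincide, up to translation and positive scaling, with some $Q_j$. This forces $P_k$ to equal (a scalar multiple of) one of the summands, which is impossible since $P_k$ has $2^k+1$ vertices whereas each summand has at most $2^k$.

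The crux and main obstacle lies in justifying (a) and (b). For $k=2$, the polytope $P_2$ is the square pyramid in $3$-space (embedded in $\R^4$), and Minkowski-indecomposability can be verified by direct analysis of the edge directions at the apex, ruling out every possible $(|V_1|,|V_2|)$-split of the five vertices. For general $k$, a uniform argument proceeds either by induction on $k$ or via a Shephard-type characterization in terms of the normal fan at the apex, whose extreme rays encode the base parallelotope. Principle (b) is subtle in dimensions $\ge 3$, where unique Minkowski factorization fails; the cleanest workaround is to pass to support functions and argue directly that $h_{P_k}$ has a distinctive non-smooth behavior at the normal cone of the apex (where $2^k$ linearity regions of $P_k$ meet) that cannot arise as a difference of positive combinations of support functions of $(\le 2^k)$-vertex polytopes without one of those support functions already containing $h_{P_k}$ as a summand.
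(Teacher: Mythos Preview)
Your candidate $f_k$ actually lies in $\MAX(2^k)$ for every $k\ge 2$, so the approach fails at the outset. You note yourself that the Newton polytope $P_k$ is a $(k{+}1)$-dimensional pyramid embedded in $\R^{2k}$; equivalently, the gradients of $f_k$ span only a $(k{+}1)$-dimensional linear subspace, and $f_k$ is translation-invariant along the complementary $(k{-}1)$-dimensional subspace (for $k=2$, along the direction $(1,1,-1,-1)$, as one checks directly). Quotienting by these invariance directions realises $f_k$ as a CPWL function on $\R^{k+1}$. By Wang--Sun (\Cref{Thm:wang}), every CPWL function on $\R^{k+1}$ lies in $\MAX_{k+1}(k+2)$, and since $k+2\le 2^k$ for all $k\ge 2$, pulling back along the projection gives $f_k\in\MAX_{2k}(2^k)\subseteq\MAX(2^k)$. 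In particular, for $k=2$ the square pyramid $P_2$ genuinely \emph{is} a Minkowski summand of a finite sum of $({\le}4)$-vertex polytopes, so the hoped-for ``primality principle'' (b) is not merely subtle but outright false in this instance; no support-function workaround can rescue it.

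The paper's construction avoids precisely this pitfall. Its witness function is defined on $\R^{2^k}$ (not $\R^{2k}$) and is built so that its underlying polyhedral complex possesses an $(n{-}1)$-dimensional face, lying in a single hyperplane, that contains no line. This pointedness is exactly the hypothesis of the paper's key structural lemma (\Cref{prop:main2}, proved via an induction on polyhedral complexes), which then rules out any representation as a linear combination of $n$-term maxima. By contrast, every cell of the polyhedral complex of your $f_k$ contains the $(k{-}1)$-dimensional invariance subspace in its lineality space, so every facet contains a line and no such obstruction is available.
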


To achieve this result, the key technical ingredient is the theory of polyhedral complexes associated with CPWL functions. This way, we provide important insights concerning the richness of the class $\ReLU(k)$. As a by-product, the results in \Cref{sec:more} imply that $\MAX_n(n)$ is a strict subset of $\CPWL_n=\MAX_n(n+1)$, which was conjectured by Wang and Sun~\cite{wang2005generalization} in 2005, but has been open since then.

So far, we have focused on understanding the smallest depth needed to express CPWL functions using neural networks with ReLU activations. In \Cref{sec:width}, we complement these results by upper bounds on the sizes of the networks needed for expressing arbitrary CPWL functions. In particular, we show the following theorem.

\begin{restatable}{theorem}{thmwidthbound}\label{thm:width-bound}
	Let $f\colon\R^n\to\R$ be a CPWL function with $p$ affine pieces. Then $f$ can be represented by a ReLU NN with depth $\lceil\log_2(n+1)\rceil+1$ and width $\mathcal{O}(p^{2n^2+3n+1})$.
\end{restatable}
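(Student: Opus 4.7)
The plan is to combine the Wang--Sun decomposition (\Cref{Thm:wang}), in the strengthened quantitative form that the authors promise to prove in \Cref{sec:width}, with the logarithmic-depth max construction from \Cref{Lem:max}.

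The first step is to write $f$ as an explicit linear combination
\[
f = \sum_{i=1}^{N} c_i \cdot \max\{\ell_{i,1}, \ldots, \ell_{i,n+1}\},
\]
where each $\ell_{i,j}$ is an affine function derived from the affine pieces of $f$, and to control $N$ as a function of $p$ and $n$. The existence of such a decomposition is \Cref{Thm:wang}; what is needed beyond that is an explicit polynomial bound on $N$. To obtain this, I would analyse the polyhedral complex $\mathcal{P}_f$ of $f$. Since $f$ has $p$ affine pieces, the facets of $\mathcal{P}_f$ lie on at most $\binom{p}{2}$ hyperplanes in $\R^n$, and the number of faces of the induced hyperplane arrangement is therefore polynomial in $p$ of degree depending on $n$. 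By redoing the Wang--Sun construction locally at each face (rather than via the purely algebraic argument of~\cite{wang2005generalization}), one builds $f$ from a polynomial number of $(n+1)$-term maxima, one family per face, and careful bookkeeping yields a bound of the form $N = \mathcal{O}(p^{q(n)})$ with $q(n) \leq 2n^2+3n$.

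The second step is to realise each of the $N$ maxima as a ReLU subnetwork. \Cref{Lem:max} together with the binary-tree pairing depicted in \Cref{Fig:Max2Num} and \Cref{Fig:Max4Num} gives, for each $(n+1)$-term max, a subnetwork of depth $\lceil\log_2(n+1)\rceil+1$ and width $\mathcal{O}(n)$ (the first hidden layer is the widest, using three neurons per pair of inputs). I would then place all $N$ subnetworks in parallel on the shared input layer and absorb both the final affine transformations of the subnetworks and the scalar combination $\sum_i c_i(\cdot)$ into a single output linear map. The resulting ReLU NN has depth $\lceil\log_2(n+1)\rceil+1$ and total width
\[
\mathcal{O}(N \cdot n) \;=\; \mathcal{O}\bigl(n \cdot p^{2n^2+3n}\bigr) \;\subseteq\; \mathcal{O}\bigl(p^{2n^2+3n+1}\bigr),
\]
as required.

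The main obstacle is exclusively in the first step: the original proof of Wang and Sun does not come equipped with an explicit polynomial bound on the number of max-terms, and the algebraic manipulations used there do not obviously translate to such a bound. One therefore has to rebuild the decomposition from scratch, exploiting the geometric structure of $\mathcal{P}_f$ to ensure that the combinatorial explosion stays polynomial of controlled degree in $p$. Once this quantitative refinement of \Cref{Thm:wang} is established, the translation to a neural network through \Cref{Lem:max} and the width accounting are entirely routine.
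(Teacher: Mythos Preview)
Your proposal correctly identifies that the second step (parallel composition of $(n+1)$-term max subnetworks via \Cref{Lem:max}) is routine, and that all the work lies in obtaining a quantitative decomposition into polynomially many $(n+1)$-term maxima. However, you do not actually carry out that first step: you assert that ``redoing the Wang--Sun construction locally at each face'' together with ``careful bookkeeping'' yields $N=\mathcal{O}(p^{2n^2+3n})$, but you give no mechanism for why the number of terms should be controlled by the face count of the arrangement rather than blowing up during the algebraic manipulations. Indeed, the exponent $2n^2+3n+1=(2n+1)(n+1)$ factors in a very specific way that your outline does not explain.

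The paper takes a genuinely different route that makes this factorisation transparent. It does \emph{not} attempt a direct quantitative refinement of Wang--Sun. Instead it first writes $f=g-h$ as a difference of two \emph{convex} CPWL functions, each with at most $p^{2n+1}$ pieces (\Cref{Prop:decomp}, via an explicit choice $h(x)=\sum_{i<j}\max\{a_i^Tx+b_i,\,a_j^Tx+b_j\}$). It then proves a strengthened decomposition \emph{only for the convex case} (\Cref{Thm:main}): a convex CPWL function with $q$ pieces is an integer combination of $(n+1)$-term maxima, each of whose terms is drawn from the original $q$ pieces, so there are at most $q^{n+1}$ such maxima. The proof of this convex statement uses extended Newton polyhedra and Radon's theorem, not a face-by-face analysis of $\mathcal{P}_f$. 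Applying it with $q=p^{2n+1}$ gives width $\mathcal{O}((p^{2n+1})^{n+1})=\mathcal{O}(p^{2n^2+3n+1})$. The DC-decomposition detour is precisely what makes the bookkeeping tractable; your direct attack on the nonconvex Wang--Sun construction lacks a comparable structural lever.
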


We arrive at this result by introducing a novel application of recently established interrelations between neural networks and tropical geometry.

\Cref{thm:width-bound} improves upon a previous bound by He et al.~\cite{he2020relu} because it is polynomial in $p$ if $n$ is regarded as fixed constant, while the bounds in \cite{he2020relu} are exponential in $p$.
In subsequent work to the first version of our article, it was shown that the width of the network can be drastically decreased if one allows more depth (in the order of $\log (p)$ instead of $\log (n)$)~\cite{chen2022improved}.

Let us remark that there are different definitions of the \emph{number of pieces} $p$ of a CPWL function $f$ in the literature, compare the discussions in~\cite{chen2022improved,he2020relu} about \emph{pieces} versus \emph{linear components}. Our bounds work with any of these definitions since they apply to the smallest possible way to define $p$, called \emph{linear components} in~\cite{chen2022improved}: for our purposes, $p$ can be defined as the smallest number of affine functions such that, at each point, $f$ is equal to one of these affine functions. Since all other definitions of the \emph{number of pieces} are at least that large, our bounds are valid for these definitions as well.

Finally, in \Cref{Sec:Polytopes}, we provide an outlook how these interactions between tropical geometry and NNs could possibly also be useful to provide a full, unconditional proof of \Cref{Con:main} by means of polytope theory. This yields another equivalent rephrasing of \Cref{Con:main} which is stated purely in the language of basic operations on polytopes and does not involve neural networks any more.

We conclude in \Cref{Sec:discuss} with a discussion of further open research questions.

\subsection{Further Related Work}\label{Sec:Lit}\begin{samepage}
	\paragraph{Depth versus size} Soon\end{samepage} after the original universal approximation theorems~\cite{cybenko1989approximation,hornik1991approximation}, concrete bounds were obtained on the number of neurons needed in the hidden layer to achieve a certain level of accuracy. The literature on this is vast and we refer to a small representative sample here~\cite{barron1993universal,barron1994approximation,mhaskar1993approximation,pinkus1999approximation,mhaskar1996neural,mhaskar1995degree}. More recent research has focused on how deeper networks can have exponentially or super exponentially smaller size compared to shallower networks~\cite{vardi2021size,Arora:DNNwithReLU, eldan2016power, hanin2019universal, hanin2017approximating, liang2017deep,  nguyen2018neural, raghu2017expressive, safran2017depth, Telgarsky15, telgarsky2016benefits, yarotsky2017error}. See also~\cite{gribonval2021approximation} for another perspective on the relationship between expressivity and architecture, and the references therein. %We reiterate that the list of references above is far from complete.

\paragraph{Mixed-integer optimization and machine learning} Over the past decade, a growing body of work has emerged that explores the interplay between mixed-integer optimization and machine learning. On the one hand, researchers have attempted to improve mixed-integer optimization algorithms by exploiting novel techniques from machine learning~\cite{bonami2018learning,gasse2019exact,he2014learning,khalil2016learning,khalil2017learning,kruber2017learning,lodi2017learning,alvarez2017machine}; see also~\cite{bengio2020machine} for a recent survey. On the flip side, mixed-integer optimization techniques have been used to analyze function classes represented by neural networks~\cite{serra2018bounding,anderson2020strong,fischetti2017deep,serra2020empirical,serra2020lossless}.
In \Cref{Sec:MIP} below, we show another new use of mixed-integer optimization tools for understanding function classes represented by neural networks.

\paragraph{Design of training algorithms} We believe that a better understanding of the function classes represented exactly by a neural architecture also has benefits in terms of understanding the complexity of the training problem. For instance, in work by Arora et al.~\cite{Arora:DNNwithReLU}, an understanding of single layer ReLU networks enables the design of a globally optimal algorithm for solving the empirical risk minimization (ERM) problem, that runs in polynomial time in the number of data points in fixed dimension. See also~\cite{goel2017reliably,goel2018learning,goel2019learning,dey2020approximation,boob2022complexity,GKMR21,froese2022computational,abrahamsen2021training,bienstock2018principled,bertschinger2022training,Chen2022_LearningFTP,khalife2022neural} for similar lines of work.

\paragraph{Neural Networks and Tropical Geometry}
A recent stream of research involves the interplay between neural networks and tropical geometry. The piecewise linear functions computed by neural networks can be seen as (tropical quotients of) tropical polynomials. Linear regions of these functions correspond to vertices of so-called \emph{Newton polytopes} associated with these tropical polynomials. Applications of this correspondence include bounding the number of linear regions of a neural network~\cite{Zhang:Tropical, charisopoulos2018tropical, montufar2022sharp} and understanding decision boundaries \cite{alfarra2022decision}. In \Cref{sec:width} we present a novel application of tropical concepts to understand neural networks.
We refer to \cite{maragos2021tropical} for a recent survey of connections between machine learning and tropical geometry, as well as to the textbooks by Maclagan and Sturmfels~\cite{maclagan2015introduction} and Joswig~\cite{ETC} for in-depth introductions to tropical geometry and tropical combinatorics.

\section{Conditional Lower Depth Bounds via Mixed-Integer Programming}\label{Sec:MIP}

In this section, we provide a computer-aided proof that, under a natural, yet unproven assumption, the function $f(x)\coloneqq\max\{0,x_1,x_2,x_3,x_4\}$ cannot be represented by a 3-layer NN. It is worth to note that, to the best of our knowledge, no CPWL function is known for which the non-existence of a 3-layer NN can be proven without additional assumptions. For ease of notation, we write $x_0\coloneqq0$.

We first prove that we may restrict ourselves to NNs without biases. This holds true independent of our assumption, which we introduce afterwards.

\begin{definition}
	A function $g\colon\R^n\to\R^m$ is called \emph{positively homogeneous} if it satisfies~$g(\lambda x) = \lambda g(x)$ for all~$\lambda\ge 0$.
\end{definition}

\begin{definition}
	For an NN given by transformations $T^{(\ell)}(x)=A^{(\ell)}x+b^{(\ell)}$, we define the corresponding \emph{homogenized NN} to be the NN given by $\tilde{T}^{(\ell)}(x)=A^{(\ell)}x$ with all biases set to zero.
\end{definition}

\begin{proposition}\label{prop:wlognobias}
	If an NN computes a positively homogeneous function, then the corresponding homogenized NN computes the same function.
\end{proposition}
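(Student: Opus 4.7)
The plan is to exploit the fact that the ReLU function $\sigma$ is itself positively homogeneous, and therefore the only obstruction to the entire NN being positively homogeneous is the biases. The biases can be made asymptotically negligible by evaluating the NN on a large scalar multiple $\lambda x$ of the input and then dividing by $\lambda$; the positive homogeneity of $f$ then forces the resulting bias-free limit to agree with $f$ itself.

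Concretely, first I would fix $x\in\R^{n_0}$ and, for $\lambda>0$, trace the activations $x_\lambda^{(\ell)}$ and $y_\lambda^{(\ell)}$ of the original network on input $\lambda x$. Using positive homogeneity of $\sigma$ (so that $\sigma(\lambda y)=\lambda\sigma(y)$ for $\lambda\geq 0$) and continuity of $\sigma$, I would prove by induction on $\ell\in[k]$ that
\[
\frac{1}{\lambda} y_\lambda^{(\ell)} \xrightarrow{\lambda\to\infty} \tilde y^{(\ell)},\qquad \frac{1}{\lambda} x_\lambda^{(\ell)} \xrightarrow{\lambda\to\infty} \tilde x^{(\ell)},
\]
where $\tilde y^{(\ell)}, \tilde x^{(\ell)}$ are the corresponding activations of the homogenized network on input $x$. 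The base case is immediate from $\frac{1}{\lambda}(A^{(1)}(\lambda x)+b^{(1)}) = A^{(1)}x + b^{(1)}/\lambda \to A^{(1)}x = \tilde y^{(1)}$, and the inductive step uses $\frac{1}{\lambda}\sigma(y_\lambda^{(\ell)}) = \sigma(y_\lambda^{(\ell)}/\lambda) \to \sigma(\tilde y^{(\ell)}) = \tilde x^{(\ell)}$ together with $A^{(\ell+1)}(x_\lambda^{(\ell)}/\lambda) + b^{(\ell+1)}/\lambda \to A^{(\ell+1)}\tilde x^{(\ell)} = \tilde y^{(\ell+1)}$. Since the output layer has no bias, applying $A^{(k+1)}$ to $x_\lambda^{(k)}/\lambda$ yields $\frac{1}{\lambda} f(\lambda x) \to \tilde f(x)$, where $\tilde f$ denotes the function computed by the homogenized NN.

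On the other hand, positive homogeneity of $f$ gives $\frac{1}{\lambda}f(\lambda x) = f(x)$ for every $\lambda>0$, so the left-hand side is the constant $f(x)$ and its limit is $f(x)$. Combining this with the limit identified above yields $f(x) = \tilde f(x)$, which is the claim. The case $x = 0$ (or the trivial case $\lambda = 0$) needs no separate argument since $f(0)=0$ by positive homogeneity and $\tilde f(0)=0$ because the homogenized NN has no biases.

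I do not anticipate a genuine obstacle here; the argument is essentially a limiting procedure. The only mild subtlety is making sure each step is justified, in particular the interchange of the limit $\lambda\to\infty$ with the application of $\sigma$ and of the affine maps, which is handled by continuity of both $\sigma$ and the linear maps and by the explicit homogeneity identity $\sigma(\lambda y) = \lambda \sigma(y)$ that removes the need to commute the limit with $\sigma$ in a nontrivial way.
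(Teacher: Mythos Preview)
Your proof is correct. Both your argument and the paper's rest on the same core observation---that scaling the input by a large factor $\lambda$ makes the biases negligible---but the implementations differ. The paper proves by induction that the difference between the original and homogenized partial computations is bounded by a constant $C_\ell$ independent of $x$ (using that $\sigma$ is $1$-Lipschitz and spectral norm bounds), and then derives a contradiction: if $g(x)\neq\tilde g(x)$ for some $x$, scaling $x$ by $\frac{C_k+1}{\delta}$ would force the difference to exceed $C_k$. Your version instead computes the limit $\lim_{\lambda\to\infty}\frac{1}{\lambda}f(\lambda x)=\tilde f(x)$ directly, layer by layer, using the identity $\sigma(\lambda y)=\lambda\sigma(y)$ and continuity of $\sigma$; positive homogeneity of $f$ then immediately gives $f(x)=\tilde f(x)$. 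Your route is a bit more streamlined since it avoids the explicit Lipschitz and norm estimates and the contradiction wrapper, while the paper's approach has the minor advantage of producing an explicit quantitative bound on the discrepancy at each layer.
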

\begin{proof}
	Let $g\colon\R^{n_0}\to\R^{n_{k+1}}$ be the function computed by the original NN and $\tilde{g}$ the one computed by the homogenized NN.
	Further, for any $0\leq\ell\leq k$, let \[g^{(\ell)}=T^{(\ell+1)}\circ\sigma\circ T^{(\ell)}\circ\dots\circ T^{(2)}\circ\sigma\circ T^{(1)}\] be the function computed by the sub-NN consisting of the first $(\ell+1)$-layers and let~$\tilde{g}^{(\ell)}$ be the function computed by the corresponding homogenized sub-NN.
	We first show by induction on $\ell$ that the norm of~$\norm{g^{(\ell)}(x)-\tilde{g}^{(\ell)}(x)}$ is bounded by a global constant that only depends on the parameters of the NN but not on~$x$.
	
	For $\ell=0$, we have $\norm{g^{(0)}(x)-\tilde{g}^{(0)}(x)}=\norm{b^{(1)}}\eqqcolon C_0$, settling the induction base. For the induction step, let $\ell\geq1$ and assume that $\norm{g^{(\ell-1)}(x)-\tilde{g}^{(\ell-1)}(x)}\leq C_{\ell-1}$, where $C_{\ell-1}$ only depends on the parameters of the NN. Since a component-wise application of the ReLU activation function has Lipschitz constant 1, this implies~\mbox{$\norm{(\sigma\circ g^{(\ell-1)})(x)-(\sigma\circ\tilde{g}^{(\ell-1)})(x)}\leq C_{\ell-1}$}. Using the spectral matrix norm $\norm{A}$ of a matrix $A$, we obtain:
	\begin{align*}
		\norm{g^{(\ell)}(x)-\tilde{g}^{(\ell)}(x)}~&=~\norm{b^{(\ell+1)} + A^{(\ell+1)}((\sigma\circ g^{(\ell-1)})(x)-(\sigma\circ\tilde{g}^{(\ell-1)})(x))}\\
		&\leq~\norm{b^{(\ell+1)}} + \norm{A^{(\ell+1)}}\cdot C_{\ell-1}~\eqqcolon~C_{\ell}
	\end{align*}
	Since the right-hand side only depends on NN parameters, the induction is completed.
	
	Finally, we show that $g=\tilde{g}$. For the sake of contradiction, suppose that there is an~\mbox{$x\in\R^{n_0}$} with~$\norm{g(x)-\tilde{g}(x)}=\delta>0$. Let $x'\coloneqq \frac{C_k+1}{\delta}x$; then, by positive homogeneity of $g$ (by assumption) and $\tilde{g}$ (by construction and because the ReLU function is positively homogeneous), it follows that $\norm{g(x')-\tilde{g}(x')}=C_k+1>C_k$, contradicting the property shown above. Thus, we have $g=\tilde{g}$.
\end{proof}

Since $f=\max\{0,x_1,x_2,x_3,x_4\}$ is positively homogeneous, \Cref{prop:wlognobias} implies that, if there is a 3-layer NN computing $f$, then there also is one that has no biases. Therefore, in the remainder of this section, we only consider NNs without biases and assume implicitly that all considered CPWL functions are positively homogeneous. In particular, any piece of such a CPWL function is linear and not only affine linear.

Observe that, for the function $f$, the only points of non-differentiability (a.k.a.\ \emph{breakpoints}) are at places where at least two of the five numbers $x_0=0$, $x_1$, $x_2$, $x_3$, and~$x_4$ are equal. Hence, if some neuron of an NN computing $f$ introduces breakpoints at other places, these breakpoints must be canceled out by other neurons. Therefore, we find it natural to work under the assumption that such breakpoints need not be introduced at all in the first place. 

To make this assumption formal, let $H_{ij}=\{x\in\R^4\mid x_i=x_j\}$, for \mbox{$0\leq i < j \leq 4$}, be ten hyperplanes in $\R^4$ and $H=\bigcup_{0\leq i < j \leq 4} H_{ij}$ be the corresponding hyperplane arrangement.
This is the intersection of the so-called \emph{braid arrangement} in five dimensions with the hyperplane $x_0=0$~\cite{stanley2004introduction}.
The \emph{regions} or \emph{cells} of $H$ are defined to be the closures of the connected components of $\R^4\setminus H$. It is easy to see that these regions are in one-to-one correspondence to the $5!=120$ possible orderings of the five numbers $x_0=0$, $x_1$, $x_2$, $x_3$, and $x_4$. More precisely, for a permutation $\pi$ of the five indices $[4]_0=\{0,1,2,3,4\}$, the corresponding region is the polyhedron  
\[C_\pi~\coloneqq~\{x\in\R^4\mid x_{\pi(0)}\leq x_{\pi(1)}\leq x_{\pi(2)}\leq x_{\pi(3)}\leq x_{\pi(4)}\}.\]

\begin{definition}
	We say that a (positively homogeneous) CPWL function $g$ is \emph{$H$-conforming}, if it is linear within any of these regions of $H$, that is, if it only has breakpoints where the relative ordering of the five values $x_0=0$, $x_1$, $x_2$, $x_3$, $x_4$ changes. Moreover, an NN is said to be \emph{$H$-conforming} if the output of each neuron contained in the NN is $H$-conforming.
\end{definition}
See \Cref{Fig:assumption} for an illustration of the definition in the (simpler) two-dimensional case. Note that, by the definition, an NN is $H$-conforming if and only if, for all layers $\ell\in[k]$, the intermediate function \mbox{$\sigma\circ T^{(\ell)}\circ\sigma\circ T^{(\ell-1)}\circ\dots\circ\sigma\circ T^{(1)}$} is $H$-conforming.

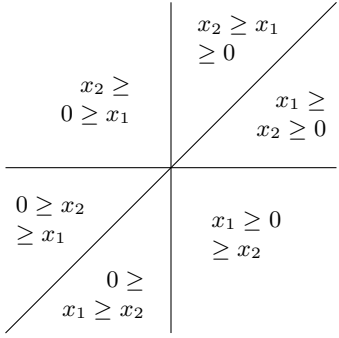
\begin{figure}[t]
	\centering
	\begin{tikzpicture}
		\draw[] (-2.2,-2.2) -- (2.2,2.2);
		\draw[] (0,-2.2) -- (0,2.2);
		\draw[] (-2.2,0) -- (2.2,0);
		\footnotesize
		\node[align=right] at (1.6,0.7) {$x_1\geq$\\$x_2\geq 0$};
		\node[align=left] at (-1.6,-0.7) {$0\geq x_2$\\$\geq x_1$};
		\node[align=left] at (1,-0.9) {$x_1\geq 0$\\$\geq x_2$};
		\node[align=right] at (-1,0.9) {$x_2\geq$\\$0\geq x_1$};
		\node[align=left] at (0.9,1.7) {$x_2\geq x_1$\\$\geq 0$};
		\node[align=right] at (-0.9,-1.7) {$0\geq$\\$x_1\geq x_2$};
	\end{tikzpicture}
	\caption{A function is $H$-conforming if the set of breakpoints is a subset of the hyperplane arrangement $H$. The arrangement $H$ consists of all hyperplanes where two of the coordinates (possibly including $x_0=0$) are equal. Here,~$H$ is illustrated for the (simpler) two-dimensional case, where it consists of three hyperplanes that divide the space into six cells.}
	\label{Fig:assumption}
\end{figure}

As argued above, it is plausible that considering $H$-conforming NNs is enough to prove \Cref{Con:main}. In other words, we conjecture that, if there exists a 3-layer NN computing the function $f(x)=\max\{0,x_1,x_2,x_3,x_4\}$, then there also exists one that is $H$-conforming. This motivates the following theorem, which we prove computer-aided by means of mixed-integer programming.

\thmmaxwithassumption*

The remainder of this section is devoted to proving this theorem. The rough outline of the proof is as follows. We first study some geometric properties of the hyperplane arrangement $H$. This will show that each of the $120$ cells of $H$ is a simplicial polyhedral cone spanned by $4$ extreme rays. In total, there are $30$ such rays (because rays are used multiple times to span different cones). This implies that each $H$-conforming function is uniquely determined by its values on the $30$ rays and, therefore, the set of $H$-conforming functions of type $\R^4\to\R$ is a $30$-dimensional vector space. We then use linear algebra to show that the space of functions generated by $H$\=/conforming two-layer NNs is a $14$-dimensional subspace. Moreover, with two hidden layers, at least $29$ of the~$30$ dimensions can be generated and $f$ is not contained in this $29$-dimensional subspace. So the remaining question is whether the $14$ dimensions producible with the first hidden layer can be combined in such a way that after applying a ReLU activation in the second hidden layer, we do not end up within the $29$-dimensional subspace. We model this question as a mixed-integer program (MIP). Solving the MIP yields that we always end up within the $29$-dimensional subspace, implying that $f$ cannot be represented by a 3-layer NN. This provides a computational proof of \Cref{Thm:maxWithAssumption}.

Let us start with investigating the structure of the hyperplane arrangement~$H$. For readers familiar with the interplay between hyperplane arrangements and polytopes, it is worth noting that~$H$ is dual to a combinatorial equivalent of the 4\=/dimensional permutahedron. Hence, what we are studying in the following are some combinatorial properties of the permutahedron.

Recall that the regions of $H$ are given by the 120 polyhedra  
\[C_\pi~\coloneqq~\{x\in\R^4\mid x_{\pi(0)}\leq x_{\pi(1)}\leq x_{\pi(2)}\leq x_{\pi(3)}\leq x_{\pi(4)}\}\subseteq\R^4\]
for each permutation $\pi$ of $[4]_0$, where $x_0$ is used as a replacement for $0$.
With this representation, one can see that $C_\pi$ is a pointed polyhedral cone (with the origin as its only vertex) spanned by the four half-lines (a.k.a.~\emph{rays})

\begin{align*}
	R_{\{\pi(0)\}}~&\coloneqq~ \{x\in\R^4\mid x_{\pi(0)}\leq x_{\pi(1)}= x_{\pi(2)}= x_{\pi(3)}= x_{\pi(4)}\},\\
	R_{\{\pi(0),\pi(1)\}}~&\coloneqq~ \{x\in\R^4\mid x_{\pi(0)}= x_{\pi(1)}\leq x_{\pi(2)}= x_{\pi(3)}= x_{\pi(4)}\},\\
	R_{\{\pi(0),\pi(1),\pi(2)\}}~&\coloneqq~ \{x\in\R^4\mid x_{\pi(0)}= x_{\pi(1)}= x_{\pi(2)}\leq x_{\pi(3)}= x_{\pi(4)}\},\\
	R_{\{\pi(0),\pi(1),\pi(2),\pi(3)\}}~&\coloneqq~ \{x\in\R^4\mid x_{\pi(0)}= x_{\pi(1)}= x_{\pi(2)}= x_{\pi(3)}\leq x_{\pi(4)}\}.
\end{align*}

Observe that these objects are indeed rays anchored at the origin because the three equalities define a one-dimensional subspace of $\R^4$ and the inequality cuts away one of the two directions.

With that notation, we see that each of the $120$ cells of $H$ is a simplicial cone spanned by four out of the $30$ rays $R_S$ with $\emptyset\subsetneq S\subsetneq[4]_0$. For each such set $S$, denote its complement by $\bar{S}\coloneqq[4]_0\setminus S$. Let us use a generating vector $r_S\in \R^4$ for each of these rays such that $R_S=\cone r_S$ as follows: If $0\in S$, then $r_S\coloneqq\mathds{1}_{\bar{S}}\in\R^4$, otherwise $r_S\coloneqq-\mathds{1}_{S}\in\R^4$, where for each $S\subseteq[4]$, the vector $\mathds{1}_S\in\R^4$ contains entries~$1$ at precisely those index positions that are contained in $S$ and entries $0$ elsewhere. For example, $r_{\{0,2,3\}}=(1,0,0,1)\in\R^4$ and $r_{\{1,4\}}=(-1,0,0,-1)\in\R^4$. Then, the set~$R$ containing conic generators of all the $30$ rays of $H$ consists of the $30$ vectors \mbox{$R=(\{0,1\}^4\cup\{0,-1\}^4)\setminus\{0\}^4$}.

Let $\mathcal{S}^{30}$ be the space of all $H$-conforming CPWL functions of type $\R^4\to\R$. We show that $\mathcal{S}^{30}$ is a $30$-dimensional vector space.

\begin{lemma}\label{lem:iso}
	The map $g\mapsto (g(r))_{r\in R}$ that evaluates a function~$g\in\mathcal{S}^{30}$ at the $30$ rays in $R$ is an isomorphism between $\mathcal{S}^{30}$ and $\R^{30}$. In particular, $\mathcal{S}^{30}$ is a $30$\=/dimensional vector space. 
\end{lemma}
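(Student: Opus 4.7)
The plan is to verify that $\Phi\colon g\mapsto(g(r))_{r\in R}$ is a linear bijection between $\mathcal{S}^{30}$ and $\R^{30}$. Linearity is immediate from the definition of pointwise evaluation. For injectivity, I first observe that each region $C_\pi$ is four-dimensional: taking $i$ with $\pi(i)=0$ and setting $x_{\pi(\ell)}:=\ell-i$ yields an interior point satisfying $x_{\pi(0)}<x_{\pi(1)}<\dots<x_{\pi(4)}$. Hence $C_\pi$ is a pointed four-dimensional cone generated by only four rays, and those four generating vectors in $R$ must therefore be linearly independent and form a basis of $\R^4$. If $\Phi(g)=0$, then on every cell $C_\pi$ the restriction of $g$ is a linear function (with no constant term, since positively homogeneous functions are considered throughout this section) vanishing on a basis of $\R^4$, hence identically zero on $C_\pi$. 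As the cells cover $\R^4$, this forces $g\equiv 0$.

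For surjectivity, given arbitrary prescribed values $(v_r)_{r\in R}\in\R^{30}$, I define $g$ cell by cell: on $C_\pi$, let $g_\pi\colon\R^4\to\R$ be the unique linear function sending each of the four generators $r$ of $C_\pi$ to the value $v_r$, which exists and is unique by the linear independence just established, and set $g(x):=g_\pi(x)$ for $x\in C_\pi$. The resulting $g$ is linear on every cell, hence $H$-conforming by definition, provided it is well-defined and continuous. The crucial step is consistency on overlaps: for any two cells $C_\pi$ and $C_{\pi'}$, I must verify that $g_\pi$ and $g_{\pi'}$ agree on $C_\pi\cap C_{\pi'}$. Since $H$ is a central hyperplane arrangement, its cells form a simplicial fan, so $C_\pi\cap C_{\pi'}$ is a common face of both simplicial cones and is therefore generated by exactly those rays in $R$ that belong to both $C_\pi$ and $C_{\pi'}$. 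On each such common generator, $g_\pi$ and $g_{\pi'}$ are both defined to take the prescribed value $v_r$, and by linearity they then agree on the linear span of these common rays, which contains $C_\pi\cap C_{\pi'}$.

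The most delicate step I anticipate is identifying $C_\pi\cap C_{\pi'}$ precisely as the cone over the shared generators. I plan to verify this by inspecting the defining inequalities: each face of $C_\pi$ is obtained by promoting some subset of the adjacent inequalities $x_{\pi(j)}\le x_{\pi(j+1)}$ to equalities, and the corresponding face is the conic hull of exactly those generators $R_{\{\pi(0),\dots,\pi(j')\}}$ whose defining pattern of equalities is consistent with the promoted ones. The intersection $C_\pi\cap C_{\pi'}$ is cut out by the union of both inequality systems, and an explicit check identifies it as the face spanned by exactly the rays lying in both cells. Once this combinatorial bookkeeping is in place, well-definedness of $g$ follows, continuity is automatic on the union of closed cells that agree on overlaps, and surjectivity is established; combined with injectivity, this yields the claimed isomorphism and in particular $\dim\mathcal{S}^{30}=30$.
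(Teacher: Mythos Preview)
Your proposal is correct and follows essentially the same approach as the paper: both arguments rest on the fact that the cells $C_\pi$ are simplicial cones whose extreme rays together form the set $R$, so that prescribing values on $R$ determines a unique linear function on each cell and hence a unique $H$-conforming function globally. The paper's proof is considerably terser, simply asserting that ``given the function values on the extreme rays of these cones, there is a unique positively homogeneous, continuous continuation that is linear within each of the 120 cones,'' whereas you unpack this into separate injectivity and surjectivity arguments and explicitly address consistency on overlaps via the fan structure of the arrangement. One small caution: the phrase ``since $H$ is a central hyperplane arrangement, its cells form a simplicial fan'' reads as if centrality alone implies simpliciality, which is false in general; what is true is that any central arrangement yields a polyhedral fan (so intersections of cells are common faces), and you have already established simpliciality of the maximal cones separately by counting generators.
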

\begin{proof}
	First note that $\mathcal{S}^{30}$ is closed under addition and scalar multiplication. Therefore, it is a subspace of the vector space of continuous functions of type $\R^4\to\R$, and thus, in particular, a vector space.
	We show that the map $g\mapsto (g(r))_{r\in R}$ is in fact a vector space isomorphism. The map is obviously linear, so we only need to show that it is a bijection. In order to do so, remember that~$\R^4$ is the union of the $5!=120$ simplicial cones $C_\pi$. In particular, given the function values on the extreme rays of these cones, there is a unique positively homogeneous, continuous continuation that is linear within each of the 120 cones. This implies that the considered map is a bijection between~$\mathcal{S}^{30}$ and~$\R^{30}$.
\end{proof}

The previous lemma also provides a canonical basis of the vector space $\mathcal{S}^{30}$: the one consisting of all CPWL functions attaining value $1$ at one ray $r\in R$ and value~$0$ at all other rays. However, it turns out that for our purposes it is more convenient to work with a different basis. To this end, let $g_M(x)=\max_{i\in M} x_i$ for each \mbox{$M\subseteq[4]_0$} with $M\notin\{\emptyset,\{0\}\}$. These~$30$ functions contain, among other functions, the four (linear) coordinate projections \mbox{$g_{\{i\}}(x)=x_i$}, $i\in[4]$, and the function $f(x)=g_{[4]_0}(x)=\max\{0,x_1,x_2,x_3,x_4\}$.

\begin{lemma}
	The $30$ functions $g_M(x)=\max_{i\in M} x_i$ with $\{\emptyset,\{0\}\}\not\ni M\subseteq[4]_0$ form a basis of $\mathcal{S}^{30}$.
\end{lemma}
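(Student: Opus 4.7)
My plan is to reduce the statement, via \Cref{lem:iso}, to proving linear independence of the $30$ functions $g_M$, which I would then establish by evaluating a hypothetical linear relation at two antipodal families of rays and invoking Möbius inversion on the subset lattice of $[4]$ twice.

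First I would check that each $g_M$ belongs to $\mathcal{S}^{30}$: the function $\max_{i \in M} x_i$ (with the convention $x_0 = 0$) is positively homogeneous and continuous, and its only breakpoints lie on the hyperplanes $H_{ij}$ with $i,j \in M$, $i \ne j$, all of which belong to $H$. Since $\mathcal{S}^{30}$ is $30$-dimensional by \Cref{lem:iso} and we have exactly $30$ candidate functions, it suffices to prove linear independence.

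To do so, assume $\sum_M c_M g_M \equiv 0$. The key asymmetry I would exploit is that $0 \in M$ forces $g_M \ge 0$ pointwise, so evaluating at componentwise non-positive points annihilates all ``$0 \in M$'' terms. I would therefore first evaluate at $-\mathds{1}_T$ for every nonempty $T \subseteq [4]$: the value is $0$ when $0 \in M$, and equals $-1$ or $0$ when $0 \notin M$ according to whether $M \subseteq T$. The identity thus reduces to $\sum_{\emptyset \ne M \subseteq T,\, 0 \notin M} c_M = 0$ for every nonempty $T \subseteq [4]$, and Möbius inversion on the Boolean lattice forces $c_M = 0$ for every $M$ not containing $0$.

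It remains to handle the terms with $0 \in M$. Writing $M = \{0\} \cup M'$ with $\emptyset \ne M' \subseteq [4]$ and $d_{M'} \coloneqq c_{\{0\} \cup M'}$, evaluation of the remaining identity at $\mathds{1}_T$ yields $\sum_{M' :\, M' \cap T \ne \emptyset} d_{M'} = 0$; with $D \coloneqq \sum_{M'} d_{M'}$ this rearranges to $D = \sum_{\emptyset \ne M' \subseteq [4] \setminus T} d_{M'}$ for every nonempty $T \subseteq [4]$. Taking $T = [4]$ gives $D = 0$, while as $T$ varies over the other nonempty subsets, $[4] \setminus T$ ranges over all nonempty proper subsets of $[4]$, so a second Möbius inversion forces $d_{M'} = 0$ for every $M' \subsetneq [4]$; then $D = 0$ forces $d_{[4]} = 0$ as well. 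The real obstacle is thus finding the right test points: a brute-force determinant computation for the $30 \times 30$ matrix $(g_M(r))_{M,\, r \in R}$ from \Cref{lem:iso} would be opaque, whereas the decomposition of the index set according to whether $0 \in M$, combined with the two antipodal families $\pm \mathds{1}_T$, cleanly splits the problem into two standard Möbius inversion subproblems on the subset lattice of $[4]$.
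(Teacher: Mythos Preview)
Your argument is correct, and it takes a genuinely different route from the paper. The paper's proof simply evaluates the $30$ functions $g_M$ at the $30$ rays in $R$ and appeals to a computer verification that the resulting $30\times30$ matrix is nonsingular. You instead give a self-contained, human-checkable argument: evaluating a putative linear relation at the two antipodal families $-\mathds{1}_T$ and $\mathds{1}_T$ (which together exhaust $R$) decouples the problem according to whether $0\in M$, and each half then reduces to a standard M\"obius inversion on the Boolean lattice of $[4]$. Your approach thus replaces the black-box determinant check with a transparent combinatorial argument and explains \emph{why} the basis works (each half is, after reindexing by $T$ or $[4]\setminus T$, an upper-triangular system), at the modest cost of a slightly longer write-up.
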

\begin{proof}
	Evaluating the $30$ functions $g_M$ at all $30$ rays $r\in R$ yields $30$ vectors in~$\R^{30}$. It can be easily verified (e.g., using a computer) that these vectors form a basis of $\R^{30}$. Thus, due to the isomorphism of \Cref{lem:iso}, the functions $g_M$ form a basis of~$\mathcal{S}^{30}$.
\end{proof}

Next, we focus on particular subspaces of $\mathcal{S}^{30}$ generated by only some of the $30$ functions~$g_M$. We prove that they correspond to the spaces of functions computable by $H$-conforming $2$- and $3$-layer NNs, respectively.

To do so, let $\mathcal{B}^{14}$ be the set of the $14$ basis functions $g_M$ with~\mbox{$\{\emptyset,\{0\}\}\not\ni M\subseteq[4]_0$} and $\abs{M}\leq 2$. Let $\mathcal{S}^{14}$ be the $14$-dimensional subspace spanned by $\mathcal{B}^{14}$. Similarly, let~$\mathcal{B}^{29}$ be the set of the $29$ basis functions $g_M$ with $\{\emptyset,\{0\}\}\not\ni M\subsetneq[4]_0$ (all but $[4]_0$). Let~$\mathcal{S}^{29}$ be the $29$-dimensional subspace spanned by $\mathcal{B}^{29}$. 

\begin{lemma}\label{lem:14}
	The space $\mathcal{S}^{14}$ consists of all functions computable by $H$-conforming $2$-layer NNs.
\end{lemma}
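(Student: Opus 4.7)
The plan is to prove the two inclusions separately, using \Cref{prop:wlognobias} and the standing assumption in this section that all NNs are bias-free and positively homogeneous, so each hidden neuron computes $\max\{0, a \cdot x\}$ for some $a \in \R^4$.

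For the inclusion $\mathcal{S}^{14} \subseteq \{\text{functions computable by an $H$-conforming 2-layer NN}\}$, I would check that every basis function in $\mathcal{B}^{14}$ is realizable by a small $H$-conforming network via the elementary identities $x_i = \max\{0, x_i\} - \max\{0, -x_i\}$, $g_{\{0,i\}} = \max\{0, x_i\}$ (a single ReLU), and $g_{\{i,j\}} = \max\{0, x_i - x_j\} + x_j$. Each hidden neuron used has its breakpoint on a hyperplane $\{x_i = 0\}$ or $\{x_i = x_j\}$, both of which belong to $H$, so the constructed networks are $H$-conforming. Since merging hidden layers gives a network computing the sum, arbitrary linear combinations of these basis functions are computable, realizing all of $\mathcal{S}^{14}$.

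For the reverse direction, suppose $f = \sum_{j} c_j \max\{0, a_j \cdot x\}$ is computed by an $H$-conforming 2-layer NN. For each $a_j \neq 0$, the $H$-conformance of the hidden neuron forces its unique breakpoint hyperplane $L_j \coloneqq \{a_j \cdot x = 0\}$ to miss the interior of every cone $C_\pi$; otherwise the neuron would switch between $0$ and a nonzero linear expression inside that cone, violating linearity on $C_\pi$. Since the interiors of the $C_\pi$ exhaust $\R^4 \setminus H$, this gives $L_j \subseteq H = \bigcup_{0 \le i < k \le 4} H_{ik}$. The crucial step is then the dimension argument that a $3$-dimensional affine subspace cannot be written as a finite union of strictly lower-dimensional affine subspaces, which forces $L_j$ to coincide with some $H_{ik}$. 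Hence $a_j$ is a nonzero scalar multiple of $e_k$ (when $i = 0$) or $e_i - e_k$ (when $i \ge 1$); either way, $\max\{0, a_j \cdot x\}$ rewrites inside $\mathcal{S}^{14}$ via identities like $\max\{0, -x_k\} = \max\{0, x_k\} - x_k$ and $\max\{0, x_i - x_k\} = \max\{x_i, x_k\} - x_k$, and summing over $j$ then places $f$ in $\mathcal{S}^{14}$.

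The main difficulty is really only this dimension-counting step, which converts the analytic $H$-conformance condition on a single hidden neuron into the rigid combinatorial constraint that its breakpoint hyperplane equals one of the ten hyperplanes $H_{ik}$; once that is in place, everything else is routine bookkeeping with the explicit identities above.
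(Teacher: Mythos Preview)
Your proposal is correct and follows essentially the same route as the paper's proof: both directions hinge on the observation that an $H$-conforming hidden neuron $\max\{0,a^Tx\}$ must have its breakpoint hyperplane equal to one of the ten $H_{ij}$, after which the rewriting into the basis $\mathcal{B}^{14}$ is immediate. You are simply more explicit than the paper on two points---you spell out the $H$-conformance of the first-direction construction neuron by neuron, and you justify via the dimension argument why $L_j\subseteq H$ forces $L_j=H_{ik}$, whereas the paper asserts this in one sentence.
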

\begin{proof}
	Each function in $\mathcal{S}^{14}$ is a linear combination of $2$-term max functions by definition. Hence, by \Cref{Lem:max}, it can be represented by a 2-layer NN.
	
	Conversely, we show that any function representable by a 2-layer NN is indeed contained in~$\mathcal{S}^{14}$. It suffices to show that the output of every neuron in the first (and only) hidden layer of an $H$-conforming ReLU~NN is in $\mathcal{S}^{14}$ because the output of a 2-layer NN is a linear combination of such outputs. Let $a\in\R^4$ be the first-layer weights of such a neuron, computing the function $g_a(x)\coloneqq\max\{a^T x, 0\}$, which has the hyperplane $\{x\in\R^4\mid a^T x = 0\}$ as breakpoints (or is constantly zero). Since the NN must be $H$-conforming, this must be one of the ten hyperplanes $x_i = x_j$, $0\leq i<j\leq 4$. Thus, $g_a(x)=\max\{\lambda (x_i - x_j), 0\}$ for some $\lambda\in\R$. If $\lambda \geq 0$, it follows that $g_a = \lambda g_{\{i,j\}} - \lambda g_{\{j\}}\in \mathcal{S}^{14}$, and if $\lambda \leq 0$, we obtain $g_a = -\lambda g_{\{i,j\}} + \lambda g_{\{i\}}\in \mathcal{S}^{14}$. This concludes the proof.
\end{proof}

For 3-layer NNs, an analogous statement can be made. However, only one direction can be easily seen.

\begin{lemma} 
	Any function in $\mathcal{S}^{29}$ can be represented by an $H$-conforming $3$-layer NN.
\end{lemma}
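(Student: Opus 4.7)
The plan is to show that each of the $29$ basis functions $g_M\in\mathcal{B}^{29}$ admits an $H$-conforming $3$-layer NN realisation, and then to parallel-compose these sub-networks---sharing the input layer and linearly combining their second-layer outputs in the final transformation---into a single $3$-layer NN computing an arbitrary element of $\mathcal{S}^{29}$. Since $M\subsetneq[4]_0$ forces $|M|\le 4$, each basis function $g_M$ is the maximum of at most four of the numbers $x_0=0,x_1,\dots,x_4$, so the standard max-stacking construction of \Cref{Lem:max} (cf.\ \Cref{Fig:Max4Num}) produces a $2$-hidden-layer NN computing~$g_M$.

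The crux is to verify that this construction is $H$-conforming. Every first-layer neuron has the form $\max\{x_i-x_j,0\}$ (possibly with $j=0$), whose unique breakpoint is the hyperplane $x_i=x_j$, one of the ten hyperplanes making up $H$, so $H$-conformity of the first layer follows from the same characterisation used in the proof of \Cref{lem:14}. For the second hidden layer, the non-trivial neurons compute $\max\{h(x),0\}$ where, by construction, $h$ has one of the forms $\max\{x_i,x_j\}-\max\{x_k,x_l\}$ (in the $|M|=4$ case), $\max\{x_i,x_j\}-x_k$ (in the $|M|=3$ case), or $\pm\max\{x_k,x_l\}$ (the helper terms in the outer $2$-term max identity). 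Within every cell $C_\pi$ of $H$ the ordering of $\{x_0,x_1,\dots,x_4\}$ is fixed, so each inner max collapses to a single coordinate, and $h|_{C_\pi}$ reduces to a linear expression of the form $x_s-x_t$ or $\pm x_s$ whose sign is determined by $\pi$. Consequently $\max\{h,0\}$ agrees on $C_\pi$ with one of $h$ or $0$, is linear there, and is therefore $H$-conforming.

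For basis functions with $|M|\le 2$ the natural construction uses only a single hidden layer, so I pad to depth three via an identity pass-through: for any non-negative first-layer post-activation $u$, the second-layer neuron $\max\{u,0\}=u$ preserves both the realised function and its $H$-conformity. Concatenating the $29$ sub-networks and summing their second-layer outputs with the desired coefficients in the output transformation then yields an $H$-conforming $3$-layer NN computing any element of $\mathcal{S}^{29}$.

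The main point to guard against is that a second-layer pre-activation $h$ might change sign inside a single cell $C_\pi$, injecting a breakpoint outside $H$. The cell-wise reduction $h|_{C_\pi}=x_s-x_t$ described above rules this out, but the argument really does rely on the specific max-stacking form of $h$: an arbitrary linear combination of first-layer outputs need not have constant sign on each cell of $H$, so the particular choice of second-layer weights coming from the standard construction is essential.
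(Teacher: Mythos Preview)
Your proposal is correct and follows the same underlying approach as the paper: each basis function $g_M$ with $|M|\le 4$ is realised via the standard max-stacking construction of \Cref{Lem:max}, and an arbitrary element of $\mathcal{S}^{29}$ is obtained by linearly combining these in the output layer. The paper's own proof is in fact much terser than yours---it simply invokes \Cref{Lem:max} and stops, without explicitly verifying the $H$-conformity claimed in the statement. Your cell-wise argument (that on each $C_\pi$ the inner maxima collapse to single coordinates, so every second-layer pre-activation reduces to a linear form $x_s-x_t$ of fixed sign) correctly fills this gap; the padding discussion for $|M|\le 2$ is likewise sound, though strictly speaking one could also absorb those linear-combination-of-2-term-max pieces directly into the output transformation of the larger network.
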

\begin{proof}
	As in the previous lemma, each function in $\mathcal{S}^{29}$ is a linear combination of $4$-term max functions by definition. Hence, by \Cref{Lem:max}, it can be represented by a 3-layer NN.
\end{proof}

Our goal is to prove the converse as well: any $H$-conforming function represented by a 3-layer NN is in $\mathcal{S}^{29}$. Since $f(x)=\max\{0,x_1,x_2,x_3,x_4\}$ is the 30th basis function, which is linearly independent from $\mathcal{B}^{29}$ and thus not contained in $\mathcal{S}^{29}$, this implies \Cref{Thm:maxWithAssumption}. To achieve this goal, we first provide another characterization of $\mathcal{S}^{29}$, which can be seen as an orthogonal direction to $\mathcal{S}^{29}$ in $\mathcal{S}^{30}$. For a function $g\in\mathcal{S}^{30}$, let
\[
\phi(g)\coloneqq \sum_{\emptyset\subsetneq S\subsetneq[4]_0} (-1)^{\abs{S}} g(r_S)
\]
be a linear map from $\mathcal{S}^{30}$ to $\R$.

\begin{lemma}\label{lem:orthogonal}
	A function $g\in\mathcal{S}^{30}$ is contained in $\mathcal{S}^{29}$ if and only if $\phi(g)=0$.
\end{lemma}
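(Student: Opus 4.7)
The plan is a short linear-algebra dimension argument. Observe first that $\phi\colon\mathcal{S}^{30}\to\R$ is a linear functional, so its kernel is either all of $\mathcal{S}^{30}$ or a hyperplane of dimension $29$. Since $\mathcal{S}^{29}$ is itself a $29$-dimensional subspace of $\mathcal{S}^{30}$, it suffices to verify two things: (i) $\phi$ vanishes on every basis function in $\mathcal{B}^{29}$, and (ii) $\phi(g_{[4]_0})\neq 0$. Statement (i) gives $\mathcal{S}^{29}\subseteq\ker\phi$, statement (ii) gives $\ker\phi\neq\mathcal{S}^{30}$, and together they force $\ker\phi=\mathcal{S}^{29}$, which is exactly the claimed equivalence (using the decomposition $\mathcal{S}^{30}=\mathcal{S}^{29}\oplus\R\,g_{[4]_0}$ for the ``only if'' direction if desired).

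Both (i) and (ii) reduce to a direct evaluation of $g_M(r_S)=\max_{i\in M}(r_S)_i$, which is very explicit given the shape of the rays: when $0\in S$ the vector $r_S=\mathds{1}_{\bar S}$ has entries in $\{0,1\}$, so $g_M(r_S)=1$ if $M\setminus(S\cup\{0\})\neq\emptyset$ and $g_M(r_S)=0$ otherwise; when $0\notin S$ the vector $r_S=-\mathds{1}_S$ has entries in $\{0,-1\}$, so $g_M(r_S)=-1$ if $\emptyset\neq M\subseteq S$ (which forces $0\notin M$) and $g_M(r_S)=0$ otherwise. Substituting these values into $\phi(g_M)$ and re-indexing the ``$0\in S$'' part by $S'\coloneqq S\setminus\{0\}\subseteq[4]$ turns $\phi(g_M)$ into a short combination of sums of the form $\sum_{M'\subseteq S'\subseteq[4]}(-1)^{|S'|}$ and truncations thereof to $S'\subsetneq[4]$. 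By the classical identity $\sum_{T\subseteq A}(-1)^{|T|}=0$ whenever $A\neq\emptyset$, every such inner sum vanishes as long as the ``target'' set $M'$ is a proper subset of $[4]$.

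A short case analysis on whether $0\in M$ shows that for every $M\in\mathcal{B}^{29}$ (i.e.\ $M\neq\emptyset,\{0\},[4]_0$) the relevant $M'$ is a proper subset of $[4]$, so all the inner sums cancel and $\phi(g_M)=0$. The single case in which this cancellation breaks down is $M=[4]_0$: then the contribution from $0\notin S$ is still zero, but the contribution from $0\in S$ gives $\phi(g_{[4]_0})=-\sum_{S'\subsetneq[4]}(-1)^{|S'|}=-(0-1)=1\neq 0$. This establishes (i) and (ii) simultaneously and finishes the proof. The only real obstacle is bookkeeping: one has to keep track of the roles of $0$ in $M$ and in $S$ at the same time, but no idea beyond inclusion--exclusion on the Boolean lattice $2^{[4]}$ is needed.
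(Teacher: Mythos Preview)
Your proposal is correct and follows the same approach as the paper: both argue that $\phi$ is a linear functional, that $\phi$ vanishes on every $g_M\in\mathcal{B}^{29}$, and that $\phi(g_{[4]_0})\neq 0$, so $\ker\phi=\mathcal{S}^{29}$. The only difference is that the paper defers the verification ``$\phi(g_M)=0$ for $M\in\mathcal{B}^{29}$ and $\phi(g_{[4]_0})\neq 0$'' to a computer check, whereas you carry it out by hand via inclusion--exclusion on $2^{[4]}$; your computation is correct and in particular yields $\phi(g_{[4]_0})=1$.
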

\begin{proof}
	Any $g\in\mathcal{S}^{30}$ can be represented as a unique linear combination of the $30$ basis functions $g_M$ and is contained in $\mathcal{S}^{29}$ if and only if the coefficient of $f=g_{[4]_0}$ is zero.
	One can easily check (with a computer) that $\phi$ maps all functions in $\mathcal{B}^{29}$ to $0$, but not the 30th basis function $f$. Thus, $g$ is contained in $\mathcal{S}^{29}$ if and only if it satisfies $\phi(g)=0$.
\end{proof}

In order to make use of our assumption that the NN is $H$-conforming, we need the following insight about when the property of being $H$-conforming is preserved after applying a ReLU activation.

\begin{lemma}\label{lem:conforming}
	Let $g\in\mathcal{S}^{30}$. The function $h=\sigma\circ g$ is $H$-conforming (and thus in~$\mathcal{S}^{30}$ as well) if and only if there is no pair of sets $\emptyset\subsetneq S\subsetneq S' \subsetneq[4]_0$ with $g(r_{S})$ and~$g(r_{S'})$ being nonzero and having different signs.
\end{lemma}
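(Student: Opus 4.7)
The plan is to reduce $H$-conformity of $h=\sigma\circ g$ to a cellwise sign condition on $g$, and then recognise that condition as exactly the pair condition in the statement.

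First, I would note that the breakpoints of $h$ are the union of the breakpoints of $g$ (which already lie in $H$ since $g\in\mathcal{S}^{30}$) with the zero locus of $g$ restricted to each cell of $H$ on which $g$ is not identically zero. Only the latter can obstruct $H$-conformity. Fix a cell $C_\pi$, which by the structural description established earlier is a simplicial cone spanned by the four rays $r_{S_1},\ldots,r_{S_4}$ of a maximal chain $\emptyset\subsetneq S_1\subsetneq S_2\subsetneq S_3\subsetneq S_4\subsetneq [4]_0$ with $|S_i|=i$. Since $g$ is linear on $C_\pi$, say $g(x)=a_\pi^T x$, every $x\in C_\pi$ written as $x=\sum_{i=1}^4\alpha_i r_{S_i}$ with $\alpha_i\ge 0$ satisfies $g(x)=\sum_{i=1}^4\alpha_i g(r_{S_i})$.

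From this expression I would perform a sign analysis. If all four values $g(r_{S_i})$ are $\ge 0$, then $g\ge 0$ on $C_\pi$ and hence $h=g$ there; symmetrically, if all are $\le 0$, then $h=0$ on $C_\pi$. Either way $h$ is linear on $C_\pi$ and introduces no new breakpoints. If, however, $g(r_{S_i})>0$ and $g(r_{S_j})<0$ for some $i,j$, then $a_\pi\ne 0$ and the affine functional $a_\pi^T x$ takes opposite strict signs at the nearby interior points $r_{S_i}+\varepsilon(r_{S_1}+\cdots+r_{S_4})$ and $r_{S_j}+\varepsilon(r_{S_1}+\cdots+r_{S_4})$ for all sufficiently small $\varepsilon>0$; a suitable convex combination of these two interior points lies in the open interior of $C_\pi$ and has $g$-value $0$. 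Thus $h$ has a breakpoint in the relative interior of $C_\pi$; but this interior is disjoint from $H$ by the very definition of the cells of $H$, contradicting $H$-conformity.

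Combining the two directions, $h$ is $H$-conforming if and only if for every maximal chain of proper nonempty subsets of $[4]_0$ the four values $g(r_{S_i})$ do not simultaneously contain a strictly positive and a strictly negative entry. To finish, I would translate this chain-based condition into the pair-based statement of the lemma: any pair $\emptyset\subsetneq S\subsetneq S'\subsetneq [4]_0$ can be completed into such a maximal chain by inserting sets of the missing cardinalities between $\emptyset$, $S$, $S'$, and $[4]_0$, so the pair condition implies the chain condition; conversely every two distinct sets in a chain form such a pair, giving the reverse implication. The main delicate point is the geometric claim in the previous paragraph: opposite-sign values at two extreme rays must produce a breakpoint of $h$ in the open interior of $C_\pi$, not merely on its boundary (where it would harmlessly lie inside $H$). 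The perturbation-plus-convex-combination argument sketched above handles this, but one has to verify that $\varepsilon>0$ can be chosen small enough that the strict signs are preserved, which is where I would take extra care.
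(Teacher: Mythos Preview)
Your proposal is correct and follows essentially the same approach as the paper: both exploit that the extreme rays of a cell $C_\pi$ are indexed by a maximal chain of subsets, reduce $H$-conformity of $h$ to $g$ not changing sign on any cell, and identify this with the pair condition. Your perturbation argument to place the breakpoint in the open interior of $C_\pi$ is a bit more explicit than the paper's treatment, which simply asserts that a linear function taking both strictly positive and strictly negative values on a cell acquires breakpoints within that cell after applying $\sigma$; otherwise the arguments coincide.
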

\begin{proof}
	The key observation to prove this lemma is the following: for two rays $r_S$ and~$r_{S'}$, there exists a cell $C$ of the hyperplane arrangement $H$ for which both $r_S$ and~$r_{S'}$ are extreme rays if and only if $S\subsetneq S'$ or $S'\subsetneq S$.
	
	Hence, if there exists a pair of sets $\emptyset\subsetneq S\subsetneq S' \subsetneq[4]_0$ with $g(r_{S})$ and $g(r_{S'})$ being nonzero and having different signs, then the function $g$ restricted to $C$ is a linear function with both strictly positive and strictly negative values. Therefore, after applying the ReLU activation, the resulting function $h$ has breakpoints within~$C$ and is not $H$-conforming.
	
	Conversely, if for each pair of sets $\emptyset\subsetneq S\subsetneq S' \subsetneq[4]_0$, both $g(r_{S})$ and $g(r_{S'})$ are either nonpositive or nonnegative, then $g$ restricted to any cell $C$ of $H$ is either nonpositive or nonnegative everywhere. In the first case, $h$ restricted to that cell $C$ is the zero function, while in the second case, $h$ coincides with $g$ in $C$. In both cases,~$h$ is linear within all cells and, thus, $H$-conforming.
\end{proof}

Having collected all these lemmas, we are finally able to construct an MIP whose solution proves that any function computed by an $H$-conforming 3-layer NN is in $\mathcal{S}^{29}$. As in the proof of \Cref{lem:14}, it suffices to focus on the output of a single neuron in the second hidden layer. Let $h=\sigma\circ g$ be the output of such a neuron with $g$ being its input. Observe that, by construction, $g$ is a function computed by a $2$-layer NN, and thus, by \Cref{lem:14}, a linear combination of the $14$ functions in~$\mathcal{B}^{14}$. The MIP contains three types of variables, which we denote in bold to distinguish them from constants: 
\begin{itemize}
	\item $14$ continuous variables $\mathbf{a}_M\in[-1,1]$, being the coefficients of the linear combination of the basis of $\mathcal{S}^{14}$ forming $g$, that is, $g=\sum_{g_M\in\mathcal{B}^{14}} \mathbf{a}_M g_M$ (since multiplying~$g$ and $h$ with a nonzero scalar does not alter the containment of~$h$ in $\mathcal{S}^{29}$, we may restrict the variables to $[-1,1]$),
	\item $30$ binary variables $\mathbf{z}_S \in\{0,1\}$ for $\emptyset\subsetneq S\subsetneq[4]_0$, determining whether the considered neuron is strictly active at ray $r_S$, that is, whether $g(r_S)>0$,
	\item $30$ continuous variables $\mathbf{y}_S\in\R$ for $\emptyset\subsetneq S\subsetneq[4]_0$, representing the output of the considered neuron at all rays, that is, $\mathbf{y}_S=h(r_S)$.
\end{itemize}

To ensure that these variables interact as expected, we need two types of constraints:
\begin{itemize}
	\item For each of the $30$ rays $r_S$, $\emptyset\subsetneq S\subsetneq[4]_0$, the following constraints ensure that~$\mathbf{z}_S$ and output $\mathbf{y}_S$ are correctly calculated from the variables $\mathbf{a}_M$, that is, $\mathbf{z}_S=1$ if and only if~$g(r_S)=\sum_{g_{M}\in\mathcal{B}^{14}} \mathbf{a}_{M} g_{M} (r_S)$ is positive, and $\mathbf{y}_S=\max\{0, g(r_S)\}$. Also compare the references given in \Cref{Sec:Lit} concerning MIP models for ReLU units. Note that the restriction of the coefficients $\mathbf{a}_{M}$ to $[-1,1]$ ensures that the absolute value of $g(r_S)$ is always bounded by $14$, allowing us to use $15$ as a replacement for $+\infty$:
	\begin{align}
		\begin{split}\label{eq:MIPrelu}
			\mathbf{y}_S &\geq 0\\
			\mathbf{y}_S &\geq \sum_{g_{M}\in\mathcal{B}^{14}} \mathbf{a}_{M} g_{M} (r_S)\\
			\mathbf{y}_S &\leq 15 \mathbf{z}_S\\
			\mathbf{y}_S &\leq \sum_{g_{M}\in\mathcal{B}^{14}} \mathbf{a}_{M} g_{M} (r_S) + 15(1-\mathbf{z}_S)
		\end{split}
	\end{align}
	Observe that these constraints ensure that one of the following two cases occurs: If~$\mathbf{z}_S=0$, then the first and third line imply $\mathbf{y}_S=0$ and the second line implies that the incoming activation is in fact nonpositive. The fourth line is always satisfied in that case. Otherwise, if $\mathbf{z}_S=1$, then the second and fourth line imply that $\mathbf{y}_S$ equals the incoming activation, and, in combination with the first line, this has to be nonnegative. The third line is always satisfied in that case. Hence, the set of constraints \eqref{eq:MIPrelu} correctly models the ReLU activation function.
	\item For each of the $150$ pairs of sets $\emptyset\subsetneq S\subsetneq S' \subsetneq[4]_0$, the following constraints ensure that the property in \Cref{lem:conforming} is satisfied. More precisely, if one of the variables~$\mathbf{z}_{S}$ or $\mathbf{z}_{S'}$ equals $1$, then the ray of the other set has nonnegative activation, that is, $g(r_{S'})\geq 0$ or  $g(r_{S})\geq 0$, respectively:
	\begin{align}
		\begin{split}			
			\label{eq:MIPconforming}
			\sum_{g_{M}\in\mathcal{B}^{14}} \mathbf{a}_{M} g_{M} (r_S)&\geq 15 (\mathbf{z}_{S'}-1)\\
			\sum_{g_{M}\in\mathcal{B}^{14}} \mathbf{a}_{M} g_{M} (r_{S'})&\geq 15 (\mathbf{z}_S-1)
		\end{split}
	\end{align}
	Observe that these constraints successfully prevent that the two rays $r_S$ and~$r_{S'}$ have nonzero activations with different signs. Conversely, if this is not the case, then we can always satisfy constraints \eqref{eq:MIPconforming} by setting only those variables $\mathbf{z}_S$ to value $1$ where the activation of ray $r_S$ is \emph{strictly} positive. (Note that, if the incoming activation is precisely zero, constraints \eqref{eq:MIPrelu} make it possible to choose both values $0$ or $1$ for $\mathbf{z}_S$.) Hence, these constraints are in fact appropriate to model $H$-conformity.
\end{itemize}

In the light of \Cref{lem:orthogonal}, the objective function of our MIP is to maximize~$\phi(h)$, that is, the expression
\[
\sum_{\emptyset\subsetneq S\subsetneq[4]_0} (-1)^{\abs{S}} \mathbf{y}_S.
\]

The MIP has a total of 30 binary and 44 continuous variables, as well as 420 inequality constraints.  The next proposition formalizes how this MIP can be used to check whether a 3-layer NN function can exist outside $\mathcal{S}^{29}$.
%It turns out that this MIP is appropriate to check whether it is possible to go beyond~$\mathcal{S}^{29}$ with a 3-layer~NN.

\begin{proposition}\label{prop:MIP}
	There exists an $H$-conforming 3-layer NN computing a function not contained in~$\mathcal{S}^{29}$ if and only if the objective value of the MIP defined above is strictly positive.
\end{proposition}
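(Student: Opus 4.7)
The plan is to prove the two directions of the iff separately, both by a direct translation between MIP-feasible triples and single second-hidden-layer neurons of an $H$-conforming 3-layer NN.

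For the easier direction ($\Leftarrow$), I would take any feasible triple $(\mathbf{a}^*,\mathbf{y}^*,\mathbf{z}^*)$ attaining a strictly positive MIP objective and construct an explicit network. The function $g^*\coloneqq\sum_{g_M\in\mathcal{B}^{14}}\mathbf{a}^*_M\,g_M\in\mathcal{S}^{14}$ is computable by an $H$-conforming 2-layer NN via \Cref{lem:14}, which I take as the first hidden layer; I attach a single neuron in the second hidden layer computing $h^*\coloneqq\sigma(g^*)$ and an output edge of weight~$1$. Constraints~\eqref{eq:MIPrelu} force $\mathbf{y}^*_S=h^*(r_S)$ at every ray, and constraints~\eqref{eq:MIPconforming} together with \Cref{lem:conforming} imply that $h^*$ is $H$-conforming, so the whole NN is $H$-conforming. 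Its output equals $h^*$, and the MIP objective equals $\phi(h^*)>0$, whence $h^*\notin\mathcal{S}^{29}$ by \Cref{lem:orthogonal}.

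For ($\Rightarrow$), I would start from an $H$-conforming 3-layer NN computing some $F\notin\mathcal{S}^{29}$ and write $F=\sum_i c_i h_i$, where $h_i=\sigma(g_i)$ is the output of the $i$-th second-layer neuron. The $H$-conformity of the first hidden layer forces each $g_i\in\mathcal{S}^{14}$ by the argument in the proof of \Cref{lem:14}, and the $H$-conformity of the whole network forces each $h_i$ to satisfy the pair condition of \Cref{lem:conforming}. Since $\phi(F)\neq 0$ by \Cref{lem:orthogonal}, linearity of $\phi$ yields some $i_0$ with $\phi(h_{i_0})\neq 0$. When $\phi(h_{i_0})>0$, rescaling $g_{i_0}$ by a sufficiently small positive factor brings all coefficients into $[-1,1]$ while multiplying $\phi$ by the same positive factor, which directly produces a feasible MIP solution with strictly positive objective.

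The only genuinely subtle case is $\phi(h_{i_0})<0$, which I plan to handle by a reflection argument using the identity $r_{\bar S}=-r_S$. Setting $g'(x)\coloneqq g_{i_0}(-x)$ and $h'(x)\coloneqq h_{i_0}(-x)=\sigma(g'(x))$, the change of variable $T=\bar S$ together with $(-1)^{5-|T|}=-(-1)^{|T|}$ gives $\phi(h')=-\phi(h_{i_0})>0$. To verify that $h'$ is itself a valid second-layer output, I would use that each basis element of $\mathcal{B}^{14}$ has its reflection back in $\mathcal{S}^{14}$ (e.g.\ $g_{\{i\}}(-x)=-g_{\{i\}}(x)$, $g_{\{0,i\}}(-x)=g_{\{0,i\}}(x)-g_{\{i\}}(x)$, $g_{\{i,j\}}(-x)=g_{\{i,j\}}(x)-g_{\{i\}}(x)-g_{\{j\}}(x)$), and that $H=-H$, so $h'$ is automatically $H$-conforming. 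Rescaling as before produces the required MIP witness. I expect this reflection step to be the main obstacle: without it, one only knows $\phi\leq 0$ on MIP-feasible points, which a priori is compatible with the existence of a 3-layer NN outside $\mathcal{S}^{29}$ via a neuron with $\phi<0$ combined with a negative output weight.
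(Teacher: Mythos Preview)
Your proposal is correct and follows essentially the same approach as the paper: both directions proceed by translating between a single second-hidden-layer neuron and an MIP-feasible triple, and the case $\phi(h_{i_0})<0$ is handled by the same reflection $x\mapsto -x$ (the paper phrases it as ``multiply all first-layer weights by $-1$'', which is exactly your $g'(x)=g_{i_0}(-x)$) together with the identity $r_{\bar S}=-r_S$ and the parity flip $(-1)^{|\bar S|}=-(-1)^{|S|}$. The only cosmetic difference is that you verify $g'\in\mathcal{S}^{14}$ by explicitly reflecting the basis elements of $\mathcal{B}^{14}$, whereas the paper observes that negating the first layer yields another $H$-conforming $2$-layer NN and then invokes \Cref{lem:14}; both arguments are valid and equivalent.
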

\begin{proof}
	For the first direction, assume that such an NN exists. Since its final output is a linear combination of the outputs of the neurons in the second hidden layer, one of these neurons must compute a function $\tilde{h}=\sigma\circ \tilde{g}\notin\mathcal{S}^{29}$, with $\tilde{g}$ being the input to that neuron. By \Cref{lem:orthogonal}, it follows that $\phi(\tilde{h})\neq 0$. Moreover, we can even assume without loss of generality that $\phi(\tilde{h})> 0$, as we argue now. If this is not the case, multiply all first-layer weights of the NN by~$-1$ to obtain a new NN computing function $\hat{h}$ instead of $\tilde{h}$.
	Observing that $r_S=-r_{[4]_0\setminus S}$ for all $r_S\in R$, we obtain~$\hat{h}(r_S)=\tilde{h}(-r_S)=\tilde{h}(r_{[4]_0\setminus S})$ for all $r_S\in R$. Plugging this into the definition of $\phi$ and using that the cardinalities of~$S$ and~$[4]_0\setminus S$ have different parity, we further obtain $\phi(\hat{h})=-\phi(\tilde{h})$. Therefore, we can assume that $\phi(\tilde{h})$ was already positive in the first place.
	
	Using \Cref{lem:14}, the function $\tilde{g}$ can be represented as a linear combination $\tilde{g}=\sum_{g_M\in\mathcal{B}^{14}} \mathbf{\tilde{a}}_M g_M$ of the functions in $\mathcal{B}^{14}$. Let $\alpha\coloneqq \max_{M} \abs{\mathbf{\tilde{a}}_M}$. Note that $\alpha>0$ because otherwise $\tilde{g}$ would be the zero function. Let us define modified functions $g$ and $h$ from $\tilde{g}$ and $\tilde{h}$ as follows. Let $\mathbf{a}_M\coloneqq \mathbf{\tilde{a}}_M/\alpha\in[-1,1]$, $g\coloneqq\sum_{g_M\in\mathcal{B}^{14}} \mathbf{a}_M g_M$, and $h\coloneqq\sigma\circ g$. Moreover, for all rays $r_S\in R$, let $\mathbf{y}_S\coloneqq h(r_S)$, as well as $\mathbf{z}_S\coloneqq 1$ if $\mathbf{y}_S>0$, and $\mathbf{z}_S\coloneqq 0$ otherwise.
	
	It is easy to verify that the variables $\mathbf{a}_M$, $\mathbf{y}_S$, and $\mathbf{z}_S$ defined that way satisfy \eqref{eq:MIPrelu}. Moreover, since the NN is $H$-conforming, they also satisfy \eqref{eq:MIPconforming}. Finally, they also yield a strictly positive objective function value since $\phi(h) = \phi(\tilde{h})/\alpha>0$.
	
	For the reverse direction, assume that there exists an MIP solution consisting of~$\mathbf{a}_M$,~$\mathbf{y}_S$, and~$\mathbf{z}_S$, satisfying \eqref{eq:MIPrelu} and \eqref{eq:MIPconforming}, and having a strictly positive objective function value. Define the functions~\mbox{$g\coloneqq\sum_{g_M\in\mathcal{B}^{14}} \mathbf{a}_M g_M$} and $h\coloneqq\sigma\circ g$. One concludes from \eqref{eq:MIPrelu} that $h(r_S)=\mathbf{y}_S$ for all rays~$r_S\in R$. \Cref{lem:14} implies that~$g$ can be represented by a 2-layer NN. Thus, $h$ can be represented by a 3-layer NN. Moreover, constraints~\eqref{eq:MIPconforming} guarantee that this NN is $H$-conforming. Finally, since the MIP solution has strictly positive objective function value, we obtain $\phi(h)>0$, implying that~$h\notin\mathcal{S}^{29}$.
\end{proof}

In order to use the MIP as part of a mathematical proof, we employed an MIP solver that uses exact rational arithmetics without numerical errors, namely the solver by the Parma Polyhedral Library (PPL) \cite{BagnaraHZ08SCP}. We called the solver from a SageMath (Version 9.0) \cite{sagemath} script on a machine with an Intel Core i7-8700 6-Core 64-bit CPU and 15.5 GB RAM, using the openSUSE Leap 15.2 Linux distribution. SageMath, which natively includes the PPL solver, is published under the GPLv3 license. After a total running time of almost 7 days (153 hours), we obtained optimal objective function value zero. This makes it possible to prove \Cref{Thm:maxWithAssumption}.

\begin{proof}[Proof of \Cref{Thm:maxWithAssumption}]
	Since the MIP has optimal objective function value zero, \Cref{prop:MIP} implies that any function computed by an $H$-conforming $3$-layer NN is contained in $\mathcal{S}^{29}$. In particular, it is not possible to compute the function \mbox{$f(x)=\max\{0,x_1,x_2,x_3,x_4\}$} with an $H$-conforming $3$-layer~NN.
\end{proof}

We remark that state-of-the-art MIP solver Gurobi (version 9.1.1) \cite{gurobi}, which is commercial but offers free academic licenses, is able to solve the same MIP within less than a second, providing the same result. However, Gurobi does not employ exact arithmetics, making it impossible to exclude numerical errors and use it as a mathematical proof.

The SageMath code can be found on GitHub at
\begin{center}
	\url{https://github.com/ChristophHertrich/relu-mip-depth-bound}.
\end{center}
Additionally, the MIP can be found there as~\texttt{.mps} file, a standard format to represent MIPs. This allows one to use any solver of choice to reproduce our result.

\section{Going Beyond Linear Combinations of Max Functions}\label{sec:more}

In this section we prove the following result, showing that NNs with $k$ hidden layers can compute more functions than only linear combinations of $2^k$-term max functions.

\thmricher*

In order to prove this theorem, for each number of hidden layers $k\geq2$, we provide a specific function in~\mbox{$\ReLU(k)\setminus \MAX(2^k)$}. The challenging part is to show that the function is in fact not contained in $\MAX(2^k)$.

\begin{proposition}\label{prop:main}
	For any $n\geq 3$, the function $f\colon\R^n\to\R$ defined by
	\begin{equation}\label{eq:f}
		f(x)=\max\{0,x_1, x_2,\dots,x_{n-3},\,\max\{x_{n-2},x_{n-1}\} + \max\{0,x_n\}\}
	\end{equation}
	is not contained in  $\MAX(n)$.
\end{proposition}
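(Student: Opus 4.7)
The plan is to translate the assertion $f \in \MAX(n)$ into a statement about Minkowski decomposability of a specific polytope, and then derive a contradiction from the combinatorial structure of that polytope. By a homogenization argument in the spirit of Proposition~\ref{prop:wlognobias}, we may assume everything is positively homogeneous. Suppose for contradiction that $f = \sum_i c_i m_i$ with each $m_i$ an $n$-term max function. Grouping the summands by sign of $c_i$, write $f = G - H$, equivalently $G = f + H$, where $G$ and $H$ are non-negative real combinations of the $m_i$ and hence convex CPWL functions.

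To any convex positively homogeneous CPWL function $g\colon\R^n\to\R$, associate the Newton polytope $\Newt(g) \subseteq \R^n$, defined as the convex hull of the gradients of the affine pieces of $g$ (equivalently, $\partial g(0)$). The crucial tool is that Newton polytopes transform additively under function addition, $\Newt(g_1+g_2) = \Newt(g_1) \oplus \Newt(g_2)$, so the identity $G = f + H$ yields the polytope equation
\[
\Newt(f) \oplus \Newt(H) = \Newt(G).
\]
Since each $\Newt(m_i)$ is the convex hull of at most $n$ gradient vectors, both $\Newt(G)$ and $\Newt(H)$ are Minkowski sums of polytopes each having at most $n$ vertices.

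A direct computation gives
\[
P \coloneqq \Newt(f) = \conv\{0, e_1, \dots, e_{n-3}, e_{n-2}, e_{n-1}, e_{n-2}+e_n, e_{n-1}+e_n\},
\]
an $n$-dimensional polytope with exactly $n+2$ vertices. Structurally, $P$ consists of the $(n-1)$-simplex base $\conv\{0, e_1, \dots, e_{n-1}\}$ together with two extra vertices $e_{n-2}+e_n$ and $e_{n-1}+e_n$ that form a parallelogram face with $e_{n-2}$ and $e_{n-1}$, giving in particular two parallel edges of $P$ in direction $e_n$, namely $[e_{n-2}, e_{n-2}+e_n]$ and $[e_{n-1}, e_{n-1}+e_n]$. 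The goal reduces to showing that no polytope $Q$ which is a Minkowski sum of $\leq n$-vertex polytopes has the property that $P \oplus Q$ is also such a Minkowski sum.

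The heart of the proof, and its main obstacle, is precisely this last combinatorial claim. I expect the argument to rest on the tension between $P$'s parallelogram face, which dictates how the two parallel $e_n$-edges must be distributed across the hypothetical Minkowski summands of $\Newt(G)$, and the dimensional constraint that a polytope with at most $n$ vertices in $\R^n$ is at most $(n-1)$-dimensional and, when it uses all $n$ vertices, essentially a simplex. A natural route is to focus on a well-chosen vertex of $P$, such as the origin, which is adjacent to many other vertices of $P$ including the two lifted ones, and to track how the corresponding vertex of $\Newt(G) = P \oplus Q$ must decompose into a sum of vertices drawn from each Minkowski summand; the combinatorial incompatibilities arising from the simplex-plus-parallelogram structure should force some summand of $\Newt(G)$ or $\Newt(H)$ to require more than $n$ vertices, yielding the contradiction.
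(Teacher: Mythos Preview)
Your translation to Newton polytopes is correct: homogenization reduces to positively homogeneous $m_i$, the identity $\Newt(G)=P\oplus\Newt(H)$ holds with $P=\Newt(f)$ as you compute, and both $\Newt(G)$ and $\Newt(H)$ are Minkowski sums of polytopes of dimension at most $n-1$. But the proposal stops exactly where the work begins. You yourself call the final combinatorial claim ``the heart of the proof, and its main obstacle,'' and then offer only a heuristic (``I expect the argument to rest on\ldots''; ``should force\ldots''). Tracking how a single vertex of $P\oplus Q$ splits across summands does not by itself prevent each summand from having at most $n$ vertices, and the parallelogram face of $P$ imposes no obvious local obstruction that survives Minkowski addition with an arbitrary $Q$. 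What you have written is a reformulation, not a proof.

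For comparison, the paper works in the dual picture---the polyhedral complex of $f$ in the domain rather than the Newton polytope in gradient space---and the key idea is a \emph{lineality-space} obstruction rather than vertex counting. Each $n$-term max has epigraph cut out by $n$ inequalities in $\R^{n+1}$, so its underlying domain complex has nontrivial lineality space. An inductive lemma (on dimension, via a facet-function formalism) then shows that for any linear combination of such functions, the union of $(n-1)$-cells contained in a fixed hyperplane on which the function genuinely breaks is either empty or contains a line. For $f$, the relevant facet in the hyperplane $x_1=0$ is pointed, yielding the contradiction. If you want to rescue the polytope route, you need an analogue of this inductive lemma for Minkowski sums of non-full-dimensional summands; the vertex-tracking sketch does not supply it.
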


This means that $f$ cannot be written as a linear combination of $n$-term max functions, which proves a conjecture by \cite{wang2005generalization} that $\MAX_n(n)\subsetneq\CPWL_n$, which has been open since 2005. Previously, it was only known that linear combinations of~$(n-1)$-term maxes are not sufficient to represent any CPWL function defined on~$\R^n$, that is, $\MAX_n(n-1)\subsetneq\CPWL_n$. Lu~\cite{lu2021note} provides a short analytical argument for this fact. 

Before we prove \Cref{prop:main}, we show that it implies \Cref{thm:richer}.

\begin{proof}[Proof of \Cref{thm:richer}]
	For $k\geq2$, let $n\coloneqq2^k$. By \Cref{prop:main}, function $f$ defined in \eqref{eq:f} is not contained in $\MAX(2^k)$. It remains to show that it can be represented using a ReLU NN with $k$ hidden layers. To see this, first observe that any of the $n/2=2^{k-1}$ terms $\max\{0,x_1\}$, $\max\{x_{2i},x_{2i+1}\}$ for $i\in[n/2-2]$, and $\max\{x_{n-2},x_{n-1}\} + \max\{0,x_n\}$ can be expressed by a one-hidden-layer NN since all these are (linear combinations of) $2$-term max functions. Since $f$ is the maximum of these $2^{k-1}$ terms, and since the maximum of $2^{k-1}$ numbers can be computed with $k-1$ hidden layers (\Cref{Lem:max}), this implies that $f$ is in $\ReLU(k)$.
\end{proof}

In order to prove \Cref{prop:main}, we need the concept of polyhedral complexes. A \emph{polyhedral complex} $\mathcal P$ is a finite set of polyhedra such that each face of a polyhedron in~$\mathcal P$ is also in $\mathcal P$, and for two polyhedra $P,Q\in\mathcal{P}$, their intersection $P\cap Q$ is a common face of~$P$ and~$Q$ (possibly the empty face).
Given a polyhedral complex $\mathcal P$ in $\R^n$ and an integer $m\in[n]$, we let $\mathcal P^m$ denote the collection of all $m$-dimensional polyhedra in $\mathcal P$.

For a convex CPWL function $f$, we define its \emph{underlying polyhedral complex} as follows: it is the unique polyhedral complex covering $\R^n$ (i.e., each point in $\R^n$ belongs to some polyhedron in $\mathcal P$) whose $n$-dimensional polyhedra coincide with the domains of the (maximal) affine pieces of $f$. In particular, $f$ is affine linear within each $P\in\mathcal{P}$, but not within any strict superset of a polyhedron in $\mathcal P^n$.

Exploiting properties of polyhedral complexes associated with CPWL functions, we prove the following proposition below.

\begin{proposition}\label{prop:main2}
	Let $f_0\colon\R^n\to\R$ be a convex CPWL function and let $\mathcal P_0$ be the underlying polyhedral complex. If there exists a hyperplane $H\subseteq\R^n$ such that the set
	\[T\coloneqq\bigcup\left\{F\in\mathcal P_0^{n-1}\st F\subseteq H\right\}\]
	is nonempty and contains no line, then $f_0$ cannot be expressed as a linear combination of $n$-term maxima of affine linear functions.
\end{proposition}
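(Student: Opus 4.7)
The plan is a proof by contraposition. Assume $f_0 = \sum_{i=1}^{m} \lambda_i g_i$ with each $g_i$ an $n$-term max; I aim to show that nonemptiness of $T$ forces it to contain an affine line, contradicting the hypothesis.

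\emph{Step 1 (reduction to positive combinations).} Split by sign: write $f_0 = f^+ - f^-$ where $f^+ \coloneqq \sum_{\lambda_i > 0} \lambda_i g_i$ and $f^- \coloneqq \sum_{\lambda_i < 0} |\lambda_i| g_i$. Both $f^+$ and $f^-$ are convex CPWL as nonnegative combinations of the convex $g_i$'s, so $f^+ = f_0 + f^-$ is a sum of two convex CPWL functions. In such a sum no codimension-one face can cancel, since jumps across these faces are nonnegative and add up; hence, as set-theoretic unions, the $(n-1)$-faces of $\mathcal{P}_{f^+}$ coincide with the union of those of $\mathcal{P}_{f_0}$ and of $\mathcal{P}_{f^-}$. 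Restricting to those contained in $H$, this yields $T_{f^+} = T \cup T_{f^-}$, and the same additivity iterated gives $T_{f^+} = \bigcup_{i \in I^+} T_{g_i}$.

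\emph{Step 2 (lineality lemma for $n$-term maxes).} For any $n$-term max $g = \max\{\ell_1, \dots, \ell_n\}$, every maximal $(n-1)$-face of $\mathcal{P}_g$ contained in $H$ contains an affine line. Indeed, such a face must lie in a breakpoint hyperplane $\{\ell_k = \ell_{k'}\}$ coinciding with $H$, and within this $(n-1)$-dimensional hyperplane it is cut out by the at most $n-2$ inequalities $\ell_k \geq \ell_j$ for $j \neq k, k'$. By elementary dimension counting, a polyhedron in $(n-1)$-dimensional space defined by at most $n-2$ linear inequalities has lineality space of dimension at least one. Consequently every nonempty $T_{g_i}$ contains a line, and so does $T_{f^+}$.

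\emph{Step 3 (descent from $T_{f^+}$ to $T$).} To transfer line-containment from $T_{f^+}$ to its subset $T$, pick a line $L \subseteq T_{g_i}$ for some $i \in I^+$. Along $L$ the jump $\mu^+$ of $f^+$ across $H$ is bounded below by the positive constant $\lambda_i \mu^i$, where $\mu^i$ is the constant jump of $g_i$ along the face containing $L$. Writing $\mu^0, \mu^-$ for the analogous nonnegative jumps of $f_0, f^-$, we have $\mu^0 + \mu^- = \mu^+$ pointwise along $L$, and $L \cap T = \{y \in L : \mu^0(y) > 0\}$, a finite union of closed intervals of $L$ since all three jump functions are piecewise constant with finitely many values along $L$. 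If $L \subseteq T$ the desired line is found; otherwise on $L \setminus T$ we have $\mu^+ = \mu^- > 0$, so that portion of $L$ must lie in $T_{g_{i'}}$ for some $i' \in I^-$. Applying Step 2 to this $T_{g_{i'}}$ yields another line parallel to $L$, and iterating this argument on the finitely many indices $\{g_j\}$ is intended to exhaust all possibilities and eventually produce a line wholly inside $T$.

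\emph{Main obstacle.} The subtle point is Step 3: since $T \subseteq T_{f^+}$ and lineality is not inherited by subsets, one must rule out pathological configurations in which $T_{f^-}$ systematically ``chases'' every line of $T_{f^+}$, keeping $T$ line-free. To do so rigorously requires careful exploitation of both the convexity of $f_0$ and the combinatorial rigidity that each $T_{g_j}$'s lineality directions are prescribed by the null spaces of the gradient-difference vectors $a_{j,k} - a_{j,k'}$; this is where the main work of the proof lies.
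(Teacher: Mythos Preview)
Your Steps 1 and 2 are correct and essentially match what the paper establishes: the fact that each $n$-term max $g_i$ has an underlying polyhedral complex with nontrivial lineality space is exactly the paper's key structural input (obtained there by looking at the epigraph in $\R^{n+1}$, which is defined by only $n$ inequalities). The decomposition $T_{f^+}=T\cup T_{f^-}$ by non-cancellation of convex ``jumps'' is also sound.

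The genuine gap is Step 3, and you are candid about it. The argument you sketch does not close: from a line $L\subseteq T_{g_i}$ you only get that $L\setminus T\subseteq T_{f^-}$, which says nothing about any \emph{single} $T_{g_{i'}}$; even if some portion of $L$ lands in $T_{g_{i'}}$, the line that Step~2 provides inside $T_{g_{i'}}$ has no reason to be parallel to $L$, nor to have any special relation to $T$. Your proposed ``iteration over finitely many indices'' has no well-founded descent measure, and in fact the adversary you describe---$T_{f^-}$ systematically covering the complement of $T$ along every line of every $T_{g_i}$---is precisely the situation one must exclude. So as written this is a plan, not a proof.

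The paper sidesteps this obstacle entirely by an induction on the ambient dimension rather than a combinatorial chase. It packages the data as maps from top cells to the group of affine functions, defines associated facet-functions, and proves a lemma: if $\phi=\sum_i\phi_i$ with each $\mathcal P_i$ having nontrivial lineality space, then for every hyperplane $H$ the set $\{F\in\mathcal P^{n-1}:F\subseteq H,\ \psi(F)\neq 0\}$ is empty or contains a line. The inductive step restricts everything to $H$, where the $\phi_i$ whose lineality direction lies in $H$ again have nontrivial lineality space in dimension $n-1$, and the others contribute zero; a boundary facet of the (assumed line-free) $S$ inside $H$ then witnesses a violation one dimension lower. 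This dimension-drop is the idea your Step 3 is missing.
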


Again, before we proceed to the proof of \Cref{prop:main2}, we show that it implies \Cref{prop:main}.

\begin{proof}[Proof of \Cref{prop:main}]
	Observe that $f$ (defined in \eqref{eq:f}) has the alternate representation
	\[
	f(x)=\max\{0,\, x_1,\, x_2,\, \dots,\, x_{n-3},\, x_{n-2},\, x_{n-1},\, x_{n-2} + x_n,\, x_{n-1} + x_n\}
	\]
	as a maximum of $n+2$ terms.
	Let $\mathcal{P}$ be its underlying polyhedral complex. 
	Let the hyperplane $H$ be defined by $x_1=0$.
	
	Observe that any facet in $\mathcal{P}^{n-1}$ is a polyhedron defined by two of the $n+2$ terms that are equal and at least as large as each of the remaining $n$ terms. Hence, the only facet that could possibly be contained in $H$ is
	\[F\coloneqq\{x\in\R^n\mid x_1=0 \geq x_2,\, \dots,\, x_{n-3},\, x_{n-2},\, x_{n-1},\, x_{n-2} + x_n,\, x_{n-1} + x_n\}.\]
	
	Note that $F$ is indeed an $(n-1)$-dimensional facet in $\mathcal{P}^{n-1}$, because, for example, a small ball around $(0,-1,\dots,-1)\in \R^n$ intersected with $H$ is contained in $F$.
	
	Finally, we need to show that $F$ is pointed, that is, it contains no line. A well-known fact from polyhedral theory says if there is any line in $F$ with direction~\mbox{$d\in \R^n\setminus\{0\}$}, then $d$ must satisfy the defining inequalities with equality. However, only the zero vector does this. Hence, $F$ cannot contain a line.
	
	Therefore, when applying \Cref{prop:main2} to $f$ with underlying polyhedral complex~$\mathcal{P}$ and hyperplane~$H$, we have $T=F$, which is nonempty and contains no line. Hence, $f$ cannot be written as linear combination of $n$-term maxima.
\end{proof}

The remainder of this section is devoted to proving \Cref{prop:main2}.
In order to exploit properties of the underlying polyhedral complex of the considered CPWL functions, we will first introduce some terminology, notation, and results related to polyhedral complexes in $\R^n$ for any $n\ge1$.

\begin{definition}
	Given an abelian group $(G,+)$, we define $\mathcal F^n(G)$ as the family of all functions $\phi$ of the form $\phi\colon\mathcal P^n\to G$, where $\mathcal P$ is a polyhedral complex that covers~$\R^n$. We say that $\mathcal P$ is the \emph{underlying} polyhedral complex, or the polyhedral complex \emph{associated} with $\phi$.
\end{definition}

Just to give an intuition of the reason for this definition, let us mention that later we will choose $(G,+)$ to be the set of affine linear maps $\R^n\to\R$ with respect to the standard operation of sum of functions. Moreover, given a convex CPWL function $f\colon\R^n\to\R$ with underlying polyhedral complex $\mathcal P$, we will consider the following function $\phi\in\mathcal F^n(G)$: for every $P\in\mathcal P^n$, $\phi(P)$ will be the affine linear map that coincides with $f$ over $P$. It can be helpful, though not necessary, to keep this in mind when reading the next definitions and observations.

It is useful to observe that the functions in $\mathcal F^n(G)$ can also be described in a different way.
Before explaining this, we need to define an ordering between the two elements of each pair of opposite halfspaces. 
More precisely, let $H$ be a hyperplane in~$\R^n$ and let~$H',H''$ be the two closed halfspaces delimited by $H$. We choose an arbitrary rule to say that $H'$ ``precedes'' $H''$, which we write as $H'\prec H''$.\footnote{In case one wants to see such a rule explicitly, this is a possible way: Fix an arbitrary $\bar x\in H$. We can say that $H'\prec H''$ if and only if $\bar x+e_i\in H'$, where $e_i$ is the first vector in the standard basis of $\R^d$ that does not lie on $H$ (i.e., $e_1,\dots,e_{i-1}\in H$ and $e_i\notin H$). Note that this definition does not depend on the choice of $\bar x$.} We can then extend this ordering rule to those pairs of $n$-dimensional polyhedra of a polyhedral complex in~$\R^n$ that share a facet. Specifically, given a polyhedral complex~$\mathcal P$ in $\R^n$, let $P',P''\in\mathcal P^n$ be such that $F\coloneqq P'\cap P''\in\mathcal P^{n-1}$. Further, let $H$ be the unique hyperplane containing $F$. We say that $P'\prec P''$ if the halfspace delimited by $H$ and containing $P'$ precedes the halfspace delimited by $H$ and containing $P''$.

We can now explain the alternate description of the functions in $\mathcal F^n(G)$, which is based on the following notion.

\begin{definition}
	Let $\phi\in\mathcal F^n(G)$, with associated polyhedral complex $\mathcal P$. The {\em facet-function} associated with $\phi$ is the function $\psi\colon\mathcal P^{n-1}\to G$ defined as follows: given \mbox{$F\in \mathcal P^{n-1}$}, let $P',P''$ be the two polyhedra in $\mathcal P^n$ such that $F=P'\cap P''$, where $P'\prec P''$; then we set $\psi(F)\coloneqq\phi(P')-\phi(P'')$.
\end{definition}

Although it will not be used, we observe that knowing $\psi$ is sufficient to reconstruct~$\phi$ up to an additive constant. This means that a function $\phi'\in\mathcal F^n(G)$ associated with the same polyhedral complex $\mathcal P$ has the same facet-function $\psi$ if and only if there exists~\mbox{$g\in G$} such that $\phi(P)-\phi'(P)=g$ for every $P\in\mathcal P^n$. (However, it is not true that every function~\mbox{$\psi\colon\mathcal P^{n-1}\to G$} is the facet-function of some function in $\mathcal F^n(G)$.)

We now introduce a sum operation over $\mathcal F^n(G)$.

\begin{definition}\label{def:sum}
	For functions $\phi_1,\dots,\phi_p\in\mathcal F^n(G)$ with associated polyhedral complexes $\mathcal P_1,\dots,\mathcal P_p$, the sum $\phi\coloneqq\phi_1+\dots+\phi_p$ is the function in $\mathcal F^n(G)$ defined as follows:
	\begin{itemize}
		\item the polyhedral complex associated with $\phi$ is \[\mathcal P\coloneqq\{P_1\cap\dots\cap P_p\mid P_i\in\mathcal P_i\mbox{ for every $i$}\};\]
		\item given $P\in\mathcal P^n$, $P$ can be uniquely obtained as $P_1\cap\dots\cap P_p$, where $P_i\in\mathcal P^n_i$ for every $i$; we then define
		\[\phi(P)=\sum_{i=1}^p\phi_i(P_i).\]
	\end{itemize}
\end{definition}

The term ``sum'' is justified by the fact that when $\mathcal P_1=\dots=\mathcal P_p$ (and thus $\phi_1,\dots,\phi_p$ have the same domain) we obtain the standard notion of the sum of functions.

The next results shows how to compute the facet-function of a sum of functions in~$\mathcal F^n(G)$.

\begin{observation}\label{obs:psi}
	With the notation of \Cref{def:sum}, let $\psi_1,\dots,\psi_p$ be the facet-functions associated with $\phi_1,\dots,\phi_p$, and let $\psi$ be the facet-function associated with $\phi$. Given $F\in\mathcal P^{n-1}$, let $I$ be the set of indices $i\in\{1,\dots,p\}$ such that $\mathcal P_i^{n-1}$ contains a (unique) element $F_i$ with $F\subseteq F_i$. Then
	\begin{equation}\label{eq:psi-sum}
		\psi(F)=\sum_{i\in I}\psi_i(F_i).
	\end{equation}
\end{observation}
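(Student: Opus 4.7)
The plan is to unpack the definition of $\psi(F)$ and then exploit the sum structure given in Definition~\ref{def:sum}. Let $F \in \mathcal{P}^{n-1}$ and write $F = P' \cap P''$ with $P', P'' \in \mathcal{P}^n$ and $P' \prec P''$. By Definition~\ref{def:sum} there are unique $P'_i, P''_i \in \mathcal{P}_i^n$ with $P' = P'_1 \cap \dots \cap P'_p$ and $P'' = P''_1 \cap \dots \cap P''_p$, and therefore
\[
\psi(F) \;=\; \phi(P') - \phi(P'') \;=\; \sum_{i=1}^{p} \bigl(\phi_i(P'_i) - \phi_i(P''_i)\bigr).
\]
Thus it suffices to show that the $i$-th summand equals $\psi_i(F_i)$ when $i \in I$ and vanishes otherwise.

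First I would argue that $P'_i = P''_i$ if and only if $i \notin I$. If $P'_i = P''_i$, then $F$ lies inside a single element of $\mathcal{P}_i^n$, so it cannot be contained in any element of $\mathcal{P}_i^{n-1}$, giving $i \notin I$ and a vanishing summand. Conversely, if $P'_i \neq P''_i$, then $P'_i \cap P''_i \supseteq F$ is a common proper face of two distinct $n$-polyhedra; since it has dimension at least $n-1$, it is precisely an element $F_i \in \mathcal{P}_i^{n-1}$ with $F \subseteq F_i$, so $i \in I$.

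For $i \in I$, I still need to check that the pair $(P'_i, P''_i)$ really is the pair of $n$-polyhedra of $\mathcal{P}_i$ flanking the unique $F_i \in \mathcal{P}_i^{n-1}$ that contains $F$, with the $\prec$-ordering preserved. Pick a point $x$ in the relative interior of $F$ and a sufficiently small ball $B$ around $x$; let $H$ be the hyperplane containing $F$. Since $F \subseteq F_i \subseteq H$, this same $H$ governs the orderings in both $\mathcal{P}$ and $\mathcal{P}_i$. The set $B \setminus H$ splits into components $B^{\prec}$ and $B^{\succ}$ on the $\prec$- and $\succ$-sides of $H$, and $B^{\prec} \subseteq P' \subseteq P'_i$ forces $P'_i$ to coincide with the unique $n$-polyhedron of $\mathcal{P}_i$ on the $\prec$-side of $H$ at $x$; analogously for $P''_i$. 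Hence $F_i = P'_i \cap P''_i$, $P'_i \prec P''_i$, and $\phi_i(P'_i) - \phi_i(P''_i) = \psi_i(F_i)$. Summing over $i \in I$ yields \eqref{eq:psi-sum}.

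The main delicacy is this last identification step: the relations $P' \prec P''$ and $P'_i \prec P''_i$ a priori live in different polyhedral complexes, so the proof must combine the uniqueness of the decomposition from Definition~\ref{def:sum} with the local structure of each $\mathcal{P}_i$ near the relative interior of $F$, and the observation that both orderings are determined by the same hyperplane $H$. Once these are in place, the per-index contributions telescope into the claimed formula.
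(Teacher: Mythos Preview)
Your proof is correct and follows essentially the same approach as the paper: decompose $P'$ and $P''$ via Definition~\ref{def:sum}, then analyze each summand $\phi_i(P'_i)-\phi_i(P''_i)$ according to whether $P'_i=P''_i$ or not. The paper phrases the case distinction slightly differently (via $\dim(P'_i\cap P''_i)$ being $n$ or $n-1$) and is terser about why $i\notin I$ when $P'_i=P''_i$ (it notes that the relative interior of $F$ lies in the interior of $P'_i$); conversely, you are more explicit than the paper about verifying $P'_i\prec P''_i$, which the paper leaves implicit.
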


\begin{proof}
	Let $P',P''$ be the two polyhedra in $\mathcal P^n$ such that $F=P'\cap P''$, with $P'\prec P''$. We have $P'=P'_1\cap\dots\cap P'_p$ and $P''=P''_1\cap\dots\cap P''_p$ for a unique choice of $P'_i,P''_i\in\mathcal P_i^n$ for every $i$. Then
	\begin{equation}\label{eq:psi}
		\psi(F)=\phi(P')-\phi(P'')=\sum_{i=1}^p(\phi_i(P'_i)-\phi_i(P''_i)).
	\end{equation}
	Now fix $i\in[p]$. Since $F\subseteq P'_i\cap P''_i$, $\dim(P'_i\cap P''_i)\ge n-1$. If $\dim(P'_i\cap P''_i)=n-1$, then $F_i\coloneqq P'_i\cap P''_i\in\mathcal P^{n-1}_i$ and $\phi_i(P'_i)-\phi_i(P''_i)=\psi_i(F_i)$. Furthermore, $i\in I$ because~$F\subseteq F_i$.
	If, on the contrary, $\dim(P'_i\cap P''_i)=n$, the fact that $\mathcal P_i$ is a polyhedral complex implies that $P'_i=P''_i$, and thus $\phi_i(P'_i)-\phi_i(P''_i)=0$. Moreover, in this case $i\notin I$: this is because $P'\cup P''\subseteq P'_i$, which implies that the relative interior of $F$ is contained in the relative interior of $P'_i$. With these observations, from \eqref{eq:psi} we obtain \eqref{eq:psi-sum}.
\end{proof}

\begin{definition}\label{def:refinement}
	Fix $\phi\in\mathcal F^n(G)$, with associated polyhedral complex $\mathcal P$. Let $H$ be a hyperplane in $\R^n$, and let $H',H''$ be the closed halfspaces delimited by $H$. Define the polyhedral complex
	\[\widehat{\mathcal P}=\{P\cap H\mid P\in\mathcal P\}\cup\{P\cap H'\mid P\in\mathcal P\}\cup\{P\cap H''\mid P\in\mathcal P\}.\]
	The \emph{refinement} of $\phi$ with respect to $H$ is the function $\widehat\phi\in\mathcal F^n(G)$ with associated polyhedral complex $\widehat {\mathcal P}$ defined as follows: given $\widehat P\in\widehat{\mathcal P}^n$, $\widehat\phi(\widehat P)\coloneqq\phi(P)$, where $P$ is the unique polyhedron in $\mathcal P$ that contains $\widehat P$.
\end{definition}

The next results shows how to compute the facet-function of a refinement.

\begin{observation}\label{obs:hat-psi}
	With the notation of \Cref{def:refinement}, let $\psi$ be the facet-function associated with~$\phi$. Then, the facet-function $\widehat\psi$ associated with $\widehat\phi$ is given by
	\[\widehat\psi(\widehat F)=
	\begin{cases}
		\psi(F) & \mbox{if there exists a (unique) $F\in\mathcal P^{n-1}$ containing $\widehat F$}\\
		0 &\mbox{otherwise},
	\end{cases}\]
	for every~$\widehat F\in\widehat{\mathcal P}^{n-1}$.
\end{observation}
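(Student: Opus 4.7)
The plan is to work directly from the definition of the facet function: for a given $\widehat F\in\widehat{\mathcal P}^{n-1}$, I would identify the unique pair $\widehat P',\widehat P''\in\widehat{\mathcal P}^{n}$ with common facet $\widehat F$ and $\widehat P'\prec\widehat P''$, and then evaluate $\widehat\phi(\widehat P')-\widehat\phi(\widehat P'')$ via the refinement rule $\widehat\phi(\widehat P)=\phi(P)$, where $P\in\mathcal P$ is the unique polyhedron containing $\widehat P$. The overall structure is a case distinction driven by where the affine hull $L$ of $\widehat F$ lies.

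Observing that every cell of $\widehat{\mathcal P}$ has the form $P\cap H^{*}$ with $P\in\mathcal P$ and $H^{*}\in\{H,H',H''\}$, a dimension count shows that each $\widehat F\in\widehat{\mathcal P}^{n-1}$ falls into exactly one of two cases: (i) $\widehat F$ is contained in some $F\in\mathcal P^{n-1}$, in which case $F$ is unique and has the same affine hull $L$ as $\widehat F$; or (ii) $\widehat F$ lies in no such $F$, in which case $\widehat F=Q\cap H$ for some $Q\in\mathcal P^{n}$ and in particular $L=H$. These two cases match the two branches of the claimed formula.

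In case (i), let $P',P''\in\mathcal P^{n}$ be the two cells with $F=P'\cap P''$ and $P'\prec P''$. I would show that the two polyhedra of $\widehat{\mathcal P}^{n}$ adjacent along $\widehat F$ are $\widehat P'=P'\cap H^{*}$ and $\widehat P''=P''\cap H^{*}$ for a common halfspace $H^{*}\in\{H',H''\}$, or simply $P',P''$ themselves when $F\subseteq H$. The reason is that $\widehat P'$ must lie in a cell of $\mathcal P^{n}$ on the $P'$-side of $L$, and the only such cell having $\widehat F$ on its boundary is $P'$; an analogous statement yields $\widehat P''\subseteq P''$. The coincidence of the halfspace $H^{*}$ follows from the dimension of $\widehat F$: if instead $\widehat P'\subseteq H'$ and $\widehat P''\subseteq H''$ while $F\not\subseteq H$, then $\widehat F\subseteq F\cap H$, which has dimension at most $n-2$, a contradiction. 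Since the ordering $\prec$ depends only on $L$ and on the sides of $L$, we inherit $\widehat P'\prec\widehat P''$, and the refinement rule yields $\widehat\psi(\widehat F)=\phi(P')-\phi(P'')=\psi(F)$.

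In case (ii), let $Q\in\mathcal P^{n}$ be the unique cell with $\widehat F\subseteq Q\cap H$; uniqueness follows because any two distinct cells of $\mathcal P^{n}$ both containing the $(n-1)$-dimensional set $\widehat F$ would meet in some $F\in\mathcal P^{n-1}$ with $\widehat F\subseteq F$, contradicting the hypothesis of case (ii). The two adjacent polyhedra of $\widehat{\mathcal P}^{n}$ are then $Q\cap H'$ and $Q\cap H''$, and the refinement rule gives $\widehat\phi$-value $\phi(Q)$ on both, so $\widehat\psi(\widehat F)=\phi(Q)-\phi(Q)=0$. The main obstacle is the bookkeeping in case (i): one must verify that the two adjacent cells of $\widehat{\mathcal P}^{n}$ genuinely lie in the same halfspace of $H$ and that the ordering $\prec$ is inherited unchanged from $\mathcal P$ to $\widehat{\mathcal P}$. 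Both are elementary consequences of the dimension constraint on $\widehat F$, but they deserve explicit justification so that no refinement artifact can spoil the identity.
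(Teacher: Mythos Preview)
Your proposal is correct and follows essentially the same approach as the paper: a two-case analysis according to whether $\widehat F$ lies in some $F\in\mathcal P^{n-1}$, identifying the two adjacent $n$-cells of $\widehat{\mathcal P}$ and evaluating $\widehat\phi$ via the refinement rule. The only difference is that you supply more explicit bookkeeping (same affine hull, same halfspace, inherited ordering) than the paper, which simply asserts $F=P'\cap P''$ and $P'\prec P''$ in case~(i) without further comment.
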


\begin{proof}
	Let $\widehat P',\widehat P''$ be the polyhedra in $\widehat{\mathcal P}^n$ such that $\widehat F=\widehat P'\cap\widehat P''$, with $\widehat P'\prec\widehat P''$. Further, let $P',P''$ be the unique polyhedra in $\mathcal P^n$ that contain $\widehat P',\widehat P''$ (respectively). It might happen that $P'=P''$.
	
	If there is $F\in\mathcal P^{n-1}$ containing $\widehat F$, then the fact that $\mathcal P$ is a polyhedral complex implies that $F=P'\cap P''$. Note that $P'\neq P''$ and $P'\prec P''$ in this case. Thus $\widehat\psi(\widehat F)=\widehat\phi(\widehat P')-\widehat\phi(\widehat P'')=\phi(P')-\phi(P'')=\psi(F)$.
	
	Assume now that no element of $\mathcal P^{n-1}$ contains $\widehat F$. Then there exists $P\in\mathcal P^n$ such that $\widehat F=P\cap H$ and $H$ intersects the interior of $P$. Note that $P=P'=P''$ in this case. Then $\widehat P'=P\cap H'$ and $\widehat P''=P\cap H''$ (or vice versa). It follows that $\widehat\psi(\widehat F)=\widehat\phi(\widehat P')-\widehat\phi(\widehat P'')=\phi(P)-\phi(P)=0$.
\end{proof}

We now prove that the operations of sum and refinement commute: the refinement of a sum is the sum of the refinements.

\begin{observation}\label{obs:hat-phi}
	Let $\phi_1,\dots,\phi_p\in\mathcal F^n(G)$ be $p$ functions with associated polyhedral complexes $\mathcal P_1,\dots,\mathcal P_p$.
	Define $\phi\coloneqq\phi_1+\dots+\phi_p$.
	Let $H$ be a hyperplane in $\R^n$, and let $H',H''$ be the closed halfspaces delimited by $H$. Then $\widehat\phi=\widehat\phi_1+\dots+\widehat\phi_p$.
\end{observation}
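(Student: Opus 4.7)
The plan is to verify that both sides of the claimed equality are the same element of $\mathcal F^n(G)$ by checking (i) that they have the same underlying polyhedral complex and (ii) that they take the same value on each $n$-dimensional polyhedron of that complex.

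First I would identify the polyhedral complex on each side. Write $\mathcal P$ for the polyhedral complex of $\phi$, so $\mathcal P = \{P_1\cap\dots\cap P_p \mid P_i\in\mathcal P_i\}$ by \Cref{def:sum}, and let $\widehat{\mathcal P}$ be its refinement with respect to $H$ (the complex of $\widehat\phi$). On the other side, $\widehat\phi_1+\dots+\widehat\phi_p$ has complex $\{\widehat P_1\cap\dots\cap\widehat P_p \mid \widehat P_i\in\widehat{\mathcal P}_i\}$. Any element of this latter set has the form $\bigcap_i(Q_i\cap K_i) = (\bigcap_i Q_i)\cap(\bigcap_i K_i)$ with $Q_i\in\mathcal P_i$ and $K_i\in\{H,H',H''\}$. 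Since $\bigcap_iQ_i\in\mathcal P$ and the intersection $\bigcap_i K_i$ is always one of $H$, $H'$, $H''$ (using $H'\cap H''=H$), this lies in $\widehat{\mathcal P}$. Conversely, any element $Q\cap K$ of $\widehat{\mathcal P}$ with $Q=Q_1\cap\dots\cap Q_p\in\mathcal P$ and $K\in\{H,H',H''\}$ can be rewritten as $\bigcap_i(Q_i\cap K)$, giving the reverse inclusion.

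Next I would verify the value equality on $n$-dimensional polyhedra. Fix $\widehat P\in\widehat{\mathcal P}^n$. Since $P\cap H$ is $(n-1)$-dimensional for every $P\in\mathcal P^n$ with $P\cap H\ne\emptyset$, the only way $\widehat P$ can be $n$-dimensional is $\widehat P=P\cap K$ for some $P\in\mathcal P^n$ and $K\in\{H',H''\}$. By \Cref{def:sum}, write $P=P_1\cap\dots\cap P_p$ uniquely with $P_i\in\mathcal P_i^n$, so $\widehat P=\bigcap_i(P_i\cap K)$. Because $\widehat P$ is $n$-dimensional and $\widehat P\subseteq P_i\cap K$, each $P_i\cap K$ has nonempty interior in $\R^n$, hence $P_i\cap K\in\widehat{\mathcal P}_i^n$; moreover this decomposition of $\widehat P$ is unique since its ambient polytope $P$ and its halfspace $K$ are determined by $\widehat P$. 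Now applying \Cref{def:refinement} and then \Cref{def:sum} on each side gives
\[
\widehat\phi(\widehat P) \;=\; \phi(P) \;=\; \sum_{i=1}^p \phi_i(P_i) \;=\; \sum_{i=1}^p \widehat\phi_i(P_i\cap K) \;=\; (\widehat\phi_1+\dots+\widehat\phi_p)(\widehat P),
\]
which is the desired identity.

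I do not anticipate a serious obstacle; the whole argument is bookkeeping about intersections of polyhedra with the halfspaces $H',H''$. The only place that requires a moment of care is verifying that, for an $n$-dimensional refined cell $\widehat P=P\cap K$, the cells $P_i\cap K$ in the decomposition are genuinely $n$-dimensional (so they lie in $\widehat{\mathcal P}_i^n$ and the sum definition applies). This is handled by the dimension argument above, namely that $\widehat P\subseteq P_i\cap K$ has full-dimensional interior in $\R^n$.
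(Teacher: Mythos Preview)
Your proposal is correct and follows essentially the same approach as the paper: verify that the two functions share the same underlying polyhedral complex, then check the value equality on each $n$-dimensional cell via the chain $\widehat\phi(\widehat P)=\phi(P)=\sum_i\phi_i(P_i)=\sum_i\widehat\phi_i(P_i\cap K)=(\widehat\phi_1+\dots+\widehat\phi_p)(\widehat P)$. The paper simply asserts that the complexes agree (``it can be verified''), whereas you spell out the two inclusions via the observation that $\bigcap_i K_i\in\{H,H',H''\}$; your added care that each $P_i\cap K$ is $n$-dimensional (because it contains the $n$-dimensional $\widehat P$) is exactly the small point one needs to invoke \Cref{def:sum} on the right-hand side.
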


\begin{proof}
	Define $\widetilde\phi\coloneqq\widehat\phi_1+\dots+\widehat\phi_p$.
	It can be verified that $\widehat\phi$ and $\widetilde\phi$ are defined on the same poyhedral complex, which we denote by $\widehat P$.
	We now fix $\widehat P\in\widehat{\mathcal P}^n$ and show that $\widehat\phi(\widehat P)=\widetilde\phi(\widehat P)$.
	
	Since $\widehat P\in\widehat{\mathcal P}^n$, it is $n$-dimensional and either contained in $H'$ or $H''$. Since both cases are symmetric, let us focus on $\widehat{P}\subseteq H'$. This means, we can write it as $\widehat P=P_1\cap\dots\cap P_p\cap H'$, where $P_i\in\mathcal P_i^n$ for every $i$. Then 
	\[\widehat\phi(\widehat P)=\phi(P_1\cap\dots\cap P_p)=\sum_{i=1}^p\phi_i(P_i)=\sum_{i=1}^p\widehat\phi_i(P_i\cap H')=\widetilde\phi(P_1\cap\dots\cap P_p\cap H')=\widetilde\phi(P),\]
	where the first and third equations follow from the definition of refinement, while the second and fourth equations follow from the definition of the sum.
\end{proof}

The \emph{lineality space} of a (nonempty) polyhedron $P=\{x\in\R^n\mid Ax\leq b\}$ is the null space of the constraint matrix $A$. In other words, it is the set of vectors $y\in\R^n$ such that for every $x\in P$ the whole line~$\{x+\lambda y\mid \lambda\in\R\}$ is a subset of $P$. We say that the lineality space of $P$ is \emph{trivial}, if it contains only the zero vector, and \emph{nontrivial} otherwise.

Given a polyhedron $P$, it is well-known that all nonempty faces of $P$ share the same lineality space. Therefore, given a polyhedral complex $\mathcal P$ that covers $\R^n$, all the nonempty polyhedra in~$\mathcal P$ share the same lineality space $L$. We will call $L$ the lineality space of $\mathcal P$.

\begin{lemma}\label{lemma:main}
	Given an abelian group $(G,+)$, pick $\phi_1,\dots,\phi_p\in\mathcal F^n(G)$, with associated polyhedral complexes $\mathcal P_1,\dots,\mathcal P_p$. Assume that for every $i\in[p]$ the lineality space of $\mathcal P_i$ is nontrivial. Define $\phi\coloneqq\phi_1+\dots+\phi_p$, $\mathcal P$ as the underlying polyhedral complex, and $\psi$ as the facet-function of $\phi$. Then for every hyperplane $H\subseteq\R^n$, the set
	\[S\coloneqq\bigcup\left\{F\in\mathcal P^{n-1}\mid F\subseteq H,\,\psi(F)\ne0\right\}\]
	is either empty or contains a line.
\end{lemma}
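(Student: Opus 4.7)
The approach is to reduce to a simpler setting where every $\mathcal P_i$ has lineality parallel to $H$, and then split into two cases based on the joint lineality. The main technical work is concentrated in the case where the lineality spaces intersect trivially.

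\emph{Reduction.} For any facet $F\in\mathcal P^{n-1}$ with $F\subseteq H$ and any $F_i\in\mathcal P_i^{n-1}$ containing $F$, since both polyhedra are $(n-1)$-dimensional and $F\subseteq F_i$, their affine hulls coincide, so $F_i\subseteq H$. By Observation~\ref{obs:psi}, this means that if $\mathcal P_i$ has no $(n-1)$-dimensional facet contained in $H$ (let $I_H\coloneqq\{j\in[p]:\mathcal P_j\text{ has a facet contained in }H\}$), then $i$ never contributes to $\psi(F)$ for any $F\in\mathcal P^{n-1}$ with $F\subseteq H$. Replacing $\phi$ by $\phi'\coloneqq\sum_{i\in I_H}\phi_i$ yields a coarser subdivision in which each facet $F'\in(\mathcal P')^{n-1}$ with $F'\subseteq H$ is exactly the union of the facets $F\in\mathcal P^{n-1}$ it contains, all sharing the common value $\psi(F)=\psi'(F')$; hence the set $S$ is unchanged. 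After this reduction we may assume every remaining $L_i$ is parallel to $H$, that is, $L_i\subseteq H_0$, where $H_0$ denotes the linear hyperplane parallel to $H$.

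\emph{Easy case.} Let $L\coloneqq\bigcap_{i=1}^p L_i$. Every facet $F\in\mathcal P^{n-1}$ is an intersection $\bigcap_i C_i$ with $C_i\in\mathcal P_i$, so the lineality of $F$ contains $L$. If $L\neq\{0\}$, every nonempty facet in $S$ already contains a line in any direction of $L$, and the conclusion follows immediately.

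\emph{Hard case, main obstacle.} The remaining case $L=\{0\}$ is the technical heart. Here individual facets of $S$ are pointed, yet $S$ may still contain a line that crosses several facets meeting along a common lower-dimensional face of $\mathcal P$. The plan is to analyze the recession fan of $\mathcal P$ restricted to $H$: since each $\mathcal P_i|_H$ has lineality $L_i$, its recession fan in $H_0$ consists of cones containing $L_i$, and the fan of $\mathcal P|_H$ is their common refinement. Labelling the maximal cones of this fan by the corresponding sums $\sum_i\psi_i(F_i^\infty)$, where $F_i^\infty$ is the facet of $\mathcal P_i$ on $H$ seen at infinity in the given direction, the task reduces to finding a direction $v\in H_0$ such that both $v$ and $-v$ lie in recession cones of facets of $S$, together with a point of $H$ through which the line in direction $v$ traverses only facets of $S$. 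Establishing the existence of such a direction and meeting point is the main obstacle; I expect the argument to proceed by induction on the ambient dimension $n$, with the base case $n=2$ immediate (the reduction forces $\psi$ to be constant on the one-dimensional hyperplane $H$), using the nontriviality of each $L_i\subseteq H_0$ to reduce to a lower-dimensional instance, for example by slicing along a hyperplane of $H$ transverse to some $L_i$.
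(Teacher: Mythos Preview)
Your reduction to the case where every $L_i\subseteq H_0$ is correct and corresponds exactly to what the paper does (restricting to the index set $J$ of those $i$ with $L_i$ parallel to $H$). The easy case $\bigcap_i L_i\neq\{0\}$ is also fine. The gap is entirely in the hard case, which you correctly identify as the heart of the matter but do not actually prove.

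Two concrete issues with your sketch. First, the recession-fan idea is shakier than it looks: after the reduction, the facets of $\mathcal P_i$ lying in $H$ need not cover $H$ (take $n=3$, $L_i=\R e_1$, and $\mathcal P_i$ induced from the complex in the $(x_2,x_3)$-plane with maximal cells $\{x_2\le0\}$, $\{x_2\ge0,x_3\ge0\}$, $\{x_2\ge0,x_3\le0\}$; only the half of $H=\{x_3=0\}$ with $x_2\ge0$ is covered by a facet of $\mathcal P_i$). So ``$\mathcal P_i|_H$'' is not a complex covering $H$, and the labelling $\sum_i\psi_i(F_i^\infty)$ is not defined on all of $H_0$. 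Second, your hint to slice ``transverse to some $L_i$'' points in the wrong direction: such a slice would destroy the nontrivial lineality of that $\mathcal P_i$, which is precisely the hypothesis the inductive step needs to preserve.

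The paper's proof avoids both issues with a single clean move. It first \emph{refines} everything by $H$ (so that $H$ is covered by facets of each $\widehat{\mathcal P}_i$), and then observes that the restriction of the facet-function $\widehat\psi$ to $H\cong\R^{n-1}$ is itself an element $\phi'\in\mathcal F^{n-1}(G)$ which, by Observations~\ref{obs:psi}--\ref{obs:hat-phi}, decomposes as $\phi'=\sum_{i\in J}\phi'_i$, where each $\phi'_i$ has the same nontrivial lineality $L_i\subseteq H_0$ as $\mathcal P_i$. The set $S$ coincides with $\bigcup\{F\in\mathcal Q^{n-1}:\phi'(F)\neq0\}$. If $S$ is nonempty and contains no line, then $S\neq H$, so $S$ has a ``boundary'' face $F_0\in\mathcal Q^{n-2}$ across which $\phi'$ jumps; hence $\psi'(F_0)\neq0$, and taking $H'=\operatorname{aff}(F_0)\subseteq H$ reproduces the hypotheses of the lemma in dimension $n-1$ (with the new $S'\subseteq S$ still line-free), contradicting the inductive hypothesis. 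The key idea you are missing is to treat $\psi$-on-$H$ as the new $\phi$ and to look at the \emph{boundary} of $S$ inside $H$ rather than at infinity.
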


\begin{proof}
	The proof is by induction on $n$. For $n=1$, the assumptions imply that all~$\mathcal P_i$ are equal to~$\mathcal P$, and each of these polyhedral complexes has $\R$ as its only nonempty face. Since $\mathcal P^{n-1}$ is empty, no hyperplane $H$ such that $S\ne\emptyset$ can exist.
	
	Now fix $n\ge2$. Assume by contradiction that there exists a hyperplane $H$ such that~$S$ is nonempty and contains no line. 
	Let $\widehat\phi$ be the refinement of $\phi$ with respect to~$H$, $\widehat{\mathcal P}$ be the underlying polyhedral complex, and $\widehat\psi$ be the associated facet-function. Further, we define $\mathcal Q\coloneqq\{P\cap H\mid P\in\widehat{\mathcal P}\}$, which is a polyhedral complex that covers~$H$. Note that if $H$ is identified with $\R^{n-1}$ then we can think of $\mathcal Q$ as a polyhedral complex that covers $\R^{n-1}$, and the restriction of $\widehat\psi$ to $\mathcal Q^{n-1}$, which we denote by $\phi'$, can be seen as a function in $\mathcal F^{n-1}(G)$. We will prove that $\phi'$ does not satisfy the lemma, contradicting the inductive hypothesis.
	
	Since $\phi=\phi_1+\dots+\phi_p$, by \Cref{obs:hat-phi} we have $\widehat\phi=\widehat\phi_1+\dots+\widehat\phi_p$.
	Note that for every $i\in[p]$ the hyperplane $H$ is covered by the elements of $\widehat{\mathcal P}^{n-1}$. This implies that for every $\widehat F\in\widehat{\mathcal P}^{n-1}$ and $i\in[p]$ there exists $\widehat F_i\in\widehat{\mathcal P}^{n-1}_i$ such that $\widehat F\subseteq\widehat F_i$. Then, by \Cref{obs:psi}, $\widehat\psi(\widehat F)=\widehat\psi_1(\widehat F_1)+\dots+\widehat\psi_p(\widehat F_p)$.
	
	Now, additionally suppose that $\widehat{F}$ is contained in $H$, that is, $\widehat{F}\in\mathcal{Q}^{n-1}$.
	Let~$i\in[p]$ be such that the lineality space of $\mathcal P_i$ is not a subset of the linear space parallel to~$H$. Then no element of~$\mathcal P_i^{n-1}$ contains $\widehat F_i$. By \Cref{obs:hat-psi}, $\widehat\psi_i(\widehat F_i)=0$. We then conclude that
	\[\widehat\psi(\widehat F)=\sum_{i\in J}\widehat\psi_i(\widehat F_i)\quad\mbox{for every $\widehat F\in{\mathcal Q}^{n-1}$},\]
	where $J$ is the set of indices $i$ such that the lineality space of $\mathcal P_i$ is a subset of the linear space parallel to $H$. This means that
	\[\phi'=\sum_{i\in J}\phi'_i,\]
	where $\phi'_i$ is the restriction of $\widehat\psi_i$ to $\mathcal Q_i^{n-1}$, with $\mathcal Q_i\coloneqq\{P\cap H\mid P\in\widehat{\mathcal P_i}\}$.
	Note that for every $i\in J$ the lineality space of $\mathcal Q_i$ is clearly nontrivial, as it coincides with the lineality space of $\mathcal P_i$.
	
	Now pick any $\widehat F\in\mathcal Q^{n-1}$. Note that if there exists $F\in\mathcal P^{n-1}$ such that $\widehat F\subseteq F$, then~$\widehat F=F$. It then follows from \Cref{obs:hat-psi} that
	\[\bigcup\left\{\widehat F\in\mathcal Q^{n-1}\st\widehat\psi(\widehat F)\ne0\right\}=S.\]
	In other words, 
	\begin{equation}\label{eq:S}
		\bigcup\left\{F\in\mathcal Q^{n-1}\st\phi'(F)\ne0\right\}=S.
	\end{equation}
	
	Since $S\ne H$ (as $S$ contains no line), there exists a polyhedron $F\in\mathcal Q^{n-1}$ such that~$F\subseteq S$ and $F$ has a facet $F_0$ which does not belong to any other polyhedron in~$\mathcal Q^{n-1}$ contained in $S$. Then the facet-function $\psi'$ associated with $\phi'$ satisfies~$\psi'(F_0)\ne0$. Let~$H'$ be the $(n-2)$-dimensional affine space containing $F_0$. Then the set
	\[S'\coloneqq\bigcup\left\{F\in\mathcal Q^{n-2}\st F\subseteq H',\,\psi'(F)\ne0\right\}\]
	is nonempty, as $F_0\subseteq S'$. Furthermore, we claim that $S'$ contains no line. To see why this is true, take any $F\in\mathcal Q^{n-2}$ such that $F\subseteq H'$ and $\psi'(F)\ne0$, and let $F',F''$ be the two polyhedra in $\mathcal Q^{n-1}$ having $F$ as facet. Then $\phi'(F')\ne\phi'(F'')$, and thus at least one of these values (say $\phi'(F')$) is nonzero. Then, by \eqref{eq:S}, $F'\subseteq S$, and thus also $F\subseteq S$. This shows that $S'\subseteq S$ and therefore $S'$ contains no line. 
	
	We have shown that $\phi'$ does not satisfy the lemma. This contradicts the inductive assumption that the lemma holds in dimension $n-1$.
\end{proof}

Finally, we can use this lemma to prove \Cref{prop:main2}.

\begin{proof}[Proof of \Cref{prop:main2}]
	Assume for the sake of a contradiction that
	\[f_0(x)=\sum_{i=1}^p\lambda_i\max\{\ell_{i1}(x),\dots,\ell_{in}(x)\}\quad\mbox{for every $x\in\R^n$},\]
	where $p\in\N$, $\lambda_1,\dots,\lambda_p\in\R$ and $\ell_{ij}\colon\R^n\to\R$ is an affine linear function for every~\mbox{$i\in[p]$} and~$j\in[n]$.
	Define $f_i(x)\coloneqq\lambda_i\max\{\ell_{i1}(x),\dots,\ell_{in}(x)\}$ for every $i\in[p]$, which is a CPWL function.
	
	Fix any $i\in[p]$ such that $\lambda_i\ge0$. Then $f_i$ is convex. Note that its epigraph \[E_i\coloneqq\{(x,z)\in\R^n\times\R\mid z\ge \ell_{ij}(x)\mbox{ for $j\in[n]$}\}\] is a polyhedron in $\R^{n+1}$ defined by $n$ inequalities, and thus has nontrivial lineality space. Furthermore, no line orthogonal to the $x$-space is contained in $E_i$. Since the underlying polyhedral complex $\mathcal P_i$ of $f_i$ consists of the orthogonal projections of the faces of $E_i$ (excluding $E_i$ itself) onto the $x$-space, this implies that $\mathcal P_i$ has also nontrivial lineality space. (More precisely, the lineality space of $\mathcal P_i$ is the projection of the lineality space of $E_i$.)
	
	If $\lambda_i<0$, then $f_i$ is concave. By arguing as above on the convex function $-f_i$, one obtains that the underlying polyhedral complex $\mathcal P_i$ has again nontrivial lineality space. Thus this property holds for every $i\in[p]$.
	
	The set of affine linear functions $\R^n\to\R$ forms an abelian group (with respect to the standard operation of sum of functions), which we denote by $(G,+)$. 
	For every~$i\in[p]_0$, let $\phi_i$ be the function in $\mathcal F^n(G)$ with underlying polyhedral complex $\mathcal P_i$ defined as follows: for every $P\in\mathcal P_i^n$, $\phi_i(P)$ is the affine linear function that coincides with $f_i$ over $P$. Define $\phi\coloneqq\phi_1+\dots+\phi_p$ and let $\mathcal P$ be the underlying polyhedral complex.
	
	Note that for every $P\in\mathcal P^n$, $\phi(P)$ is precisely the affine linear function that coincides with $f_0$ within~$P$.
	However, $\mathcal P$ may not coincide with $\mathcal P_0$, as there might exist~\mbox{$P',P''\in\mathcal P^d$} sharing a facet such that $\phi(P')=\phi(P'')$; when this happens, $f_0$ is affine linear over~\mbox{$P'\cup P''$} and therefore $P'$ and $P''$ are merged together in $\mathcal P_0$. Nonetheless, $\mathcal P$ is a refinement of $\mathcal P_0$, i.e., for every $P\in\mathcal P_0^n$ there exist $P_1,\dots,P_k\in\mathcal P^n$ (for some $k\ge1$) such that $P=P_1\cup\dots\cup P_k$. Moreover, $\phi_0(P)=\phi(P_1)=\dots=\phi(P_k)$. Denoting by $\psi$ the facet-function associated with $\phi$, this implies for a facet $F\in\mathcal P^{n-1}$ that $\psi(F)=0$ if and only if $F$ is not subset of any facet $F'\in\mathcal P_0^{n-1}$.
	
	Let $H$ be a hyperplane as in the statement of the proposition. The above discussion shows that 
	\[T=\bigcup\left\{F\in\mathcal P_0^{n-1}\st F\subseteq H\right\}=\bigcup\left\{F\in\mathcal P^{n-1}\st F\subseteq H,\,\psi(F)\ne0\right\}.\]
	Using $S\coloneqq T$, we obtain a contradiction to \Cref{lemma:main}.
\end{proof}

\section{A Width Bound for Neural Networks with Small Depth}\label{sec:width}

While the proof of \Cref{Thm:CPWL} by Arora et al.~\cite{Arora:DNNwithReLU} shows that \[\CPWL_n = \ReLU_n(\lceil\log_2(n+1)\rceil),\] it does not provide any bound on the width of the NN required to represent any particular CPWL function. The purpose of this section is to prove that for fixed dimension $n$, the required width for exact, depth-minimal representation of a CPWL function can be polynomially bounded in the number $p$ of affine pieces; specifically by $p^{\mathcal{O}(n^2)}$. This improves previous bounds by He et al.~\cite{he2020relu} and is closely related to works that bound the number of linear pieces of an NN as a function of the size \cite{montufar2014regions,pascanu2014number,raghu2017expressive,montufar2022sharp}.
It can also be seen as a counterpart, in the context of exact representations, to quantitative universal approximation theorems that bound the number of neurons required to achieve a certain approximation guarantee; see, e.g.,~\cite{barron1993universal,barron1994approximation,pinkus1999approximation,mhaskar1996neural,mhaskar1995degree}.

\subsection{The Convex Case}

We first derive our result for the case of convex CPWL functions and then use this to also prove the general nonconvex case. 
Our width bound is a consequence of the following theorem about convex CPWL functions, for which we are going to provide a geometric proof later.

\begin{theorem}\label{Thm:main}
	Let $f(x) = \max\{a_i^T x + b_i \mid i\in[p]\}$ be a convex CPWL function with $p$ pieces defined on $\R^n$. Then $f$ can be written as \[f(x) = \sum_{\substack{S\subseteq[p],\\\abs{S}\leq n+1}} c_S \max\{a_i^T x + b_i\mid i\in S\}\]
	with coefficients $c_S\in\Z$.
\end{theorem}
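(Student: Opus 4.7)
The plan is to exploit the tropical-geometric correspondence between convex CPWL functions and their Newton polytopes. Associate to $f$ the \emph{extended Newton polytope} $\tilde N := \conv\{(a_i, b_i) : i \in [p]\} \subseteq \R^{n+1}$, so that $f(x) = \max_i\{a_i^T x + b_i\}$ equals the support function of $\tilde N$ evaluated at the direction $(x, 1)$. The linear pieces of $f$ correspond bijectively to the vertices of the \emph{upper hull} $U$ of $\tilde N$, i.e., the union of faces whose outward normal has positive last coordinate. For any $S \subseteq [p]$, the function $g_S(x) := \max_{i \in S}\{a_i^T x + b_i\}$ is analogously the support function of $\conv\{(a_i, b_i) : i \in S\}$ at $(x, 1)$, and its extended Newton polytope is exactly $\conv\{(a_i, b_i) : i \in S\}$.

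Next, I fix a triangulation $\mathcal{T}$ of $U$ whose vertex set is a subset of $\{(a_i, b_i) : i \in [p]\}$ (such triangulations exist because $U$ is a polytopal $n$-ball). Each maximal simplex $\Delta \in \mathcal{T}$ is $n$-dimensional and hence has exactly $n+1$ vertices, indexing a set $S_\Delta \subseteq [p]$ with $|S_\Delta| = n+1$; every face of $\mathcal{T}$ has at most $n+1$ vertices. The goal is then to prove, by induction on the number of maximal simplices in $\mathcal{T}$, an identity of the form
\begin{equation*}
    f(x) = \sum_{\tau \in \mathcal{T}} c_\tau\, g_{S_\tau}(x) \qquad \text{with } c_\tau \in \Z.
\end{equation*}
For the inductive step, one selects a maximal simplex $\Delta^\ast$ attached to the remainder of $\mathcal{T}$ along a single $(n-1)$-dimensional face $F$ (such a ``free simplex'' is guaranteed to exist by shellability of simplicial balls), lets $f'$ be the convex CPWL function whose extended Newton polytope is the convex hull of the vertices of $U$ not lying in the interior of $\Delta^\ast$, and writes $f = f' + g_{S_{\Delta^\ast}} - g_{S_F}$. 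The inductive hypothesis applied to $f'$ then yields the full $\Z$-combination, while $g_{S_F}$ is an $n$-term max and is trivially of the required form.

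The main obstacle is to establish the local splitting identity, equivalently $\min(f', g_{S_{\Delta^\ast}}) = g_{S_F}$. This requires a careful geometric analysis of the dual polyhedral subdivision of $\R^n$: the region dual to $\Delta^\ast$ is exactly where $g_{S_{\Delta^\ast}}$ realizes $f$ and dominates $f'$; the complementary region is where $f'$ realizes $f$ and dominates $g_{S_{\Delta^\ast}}$; and the cross-over occurs along the hyperplanes dual to $F$, on which both functions coincide with the max over the vertices of $F$. Once this local identity is verified, integrality of the coefficients is automatic since at every induction step the new contributions are $\pm 1$, and the bound $|S_\tau| \le n+1$ persists because every face of a triangulation of the $n$-dimensional $U$ has at most $n+1$ vertices.
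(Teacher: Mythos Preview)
Your approach via triangulating the upper hull is genuinely different from the paper's, which never touches triangulations or shellings. Instead, the paper applies Radon's theorem to the points $a_1,\dots,a_p\in\R^n$ to obtain a subset $U\subsetneq[p]$ for which a clean inclusion--exclusion identity holds:
\[
\sum_{\substack{W\subseteq U\\ |W|\text{ even}}}\max_{i\in[p]\setminus W}\ell_i
\;=\;
\sum_{\substack{W\subseteq U\\ |W|\text{ odd}}}\max_{i\in[p]\setminus W}\ell_i.
\]
Only the term $W=\emptyset$ involves all $p$ pieces, so this immediately rewrites $f$ as a $\Z$-combination of at most $(p-1)$-term maxima over subsets of $[p]$, and one iterates. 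The identity itself is proved by comparing support functions of the corresponding Newton polyhedra, using that for every direction $c$ some index $u\in U$ is dominated by the complement.

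Your proposal, as written, has a genuine gap at the shelling step. You assert that a maximal simplex $\Delta^\ast$ attached to the rest along a \emph{single} facet always exists, appealing to ``shellability of simplicial balls''. Shellability only guarantees that the last simplex meets the previous ones in a \emph{union} of facets, not a single one. Concretely, take $n=2$ and the four points $v_1=(0,0,0)$, $v_2=(1,0,0)$, $v_3=(0,1,0)$, $v_4=(\tfrac13,\tfrac13,1)$ in $\R^3$. The upper hull consists of the three triangles containing $v_4$; this is already simplicial, so it is the only triangulation on this vertex set. Each of the three triangles shares \emph{two} edges with the remaining two, so none is attached along a single facet, and no free vertex exists to delete. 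Your induction cannot start on this example, yet the theorem certainly applies to $f(x)=\max\{0,x_1,x_2,\tfrac{x_1+x_2}{3}+1\}$.

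There is also a secondary imprecision: the definition of $f'$ as the function whose Newton polytope is ``the convex hull of the vertices of $U$ not lying in the interior of $\Delta^\ast$'' is vacuous (no vertex lies in the interior of a simplex). The intended meaning is presumably to drop the unique vertex of $\Delta^\ast$ outside $F$, but that vertex need not be unique to $\Delta^\ast$ in general, and even when it is, one must still argue that on the region where $\ell_{i^\ast}$ is the global maximum the \emph{second-largest} piece comes from $S_F$---your sketch asserts this via duality but does not prove it. A correct triangulation-based proof is possible, but it needs a more careful inclusion--exclusion over all attaching faces (not just a single $F$), or a different inductive scheme such as inducting on $p$ via a placing/pulling construction. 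The paper's Radon-based identity sidesteps all of this.
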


For the convex case, this yields a stronger version of \Cref{Thm:wang}, stating that any (not necessarily convex) CPWL function can be written as a linear combination of~\mbox{$(n+1)$-term} maxima. \Cref{Thm:main} is stronger in the sense that it guarantees that all pieces of the $(n+1)$-term maxima must be pieces of the original function. This makes it possible to bound the total number of these $(n+1)$-term maxima and, therefore, the size of an NN representing $f$, as we will see in the proof of the following theorem.

\begin{restatable}{theorem}{thmconvexbound}\label{thm:convex-bound}
	Let $f\colon\R^n\to\R$ be a convex CPWL function with $p$ affine pieces. Then~$f$ can be represented by a ReLU NN with depth $\lceil\log_2(n+1)\rceil+1$ and width $\mathcal{O}(p^{n+1})$.
\end{restatable}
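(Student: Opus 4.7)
The plan is to derive \Cref{thm:convex-bound} as a direct consequence of \Cref{Thm:main}, by running the networks for all the $(n+1)$-term maxima in parallel and combining them linearly in the output layer.

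First, I would apply \Cref{Thm:main} to write
\[
f(x) \;=\; \sum_{\substack{S\subseteq[p]\\ 1\le|S|\le n+1}} c_S\,g_S(x),
\qquad g_S(x)\coloneqq\max\{a_i^T x + b_i\mid i\in S\}.
\]
The number of summands is at most $\sum_{k=1}^{n+1}\binom{p}{k}=\mathcal{O}(p^{n+1})$, since $n$ is regarded as fixed. Each $g_S$ is a max of at most $n+1$ affine functions of $x$, hence a $(2^{\lceil\log_2(n+1)\rceil})$-term max, so by \Cref{Lem:max} it lies in $\ReLU_n(\lceil\log_2(n+1)\rceil)$.

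Next, I would inspect the construction behind \Cref{Lem:max}: the binary-tree realization of the maximum of $m\le n+1$ numbers uses the 3-neuron gadget of \Cref{Fig:Max2Num} $\mathcal{O}(m)$ times, with at most $\mathcal{O}(n)$ neurons in any single hidden layer and exactly $\lceil\log_2(n+1)\rceil$ hidden layers in total (padding shallower sub-trees by representing the identity via $x=\max\{0,x\}-\max\{0,-x\}$ where needed, which uses only a constant number of neurons per layer). Thus every $g_S$ is computable by a subnetwork of depth $\lceil\log_2(n+1)\rceil+1$ and width $\mathcal{O}(n)$.

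I would then stack all $\mathcal{O}(p^{n+1})$ subnetworks in parallel on the same input layer, which gives a network of depth $\lceil\log_2(n+1)\rceil+1$ and total width
\[
\mathcal{O}(n)\cdot\mathcal{O}(p^{n+1}) \;=\; \mathcal{O}(p^{n+1}),
\]
where the suppressed constant depends only on $n$. The final linear output transformation applies the coefficients $c_S$ from \Cref{Thm:main} to the respective subnetwork outputs, yielding~$f(x)$. Since this layer is linear it does not change the depth, and the claimed bounds on depth and width are obtained.

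There is no real obstacle once \Cref{Thm:main} is in hand; the only bookkeeping care is (i) padding the subnetworks computing $g_S$ with $|S|<n+1$ so that they all have the same depth $\lceil\log_2(n+1)\rceil+1$, which costs only $\mathcal{O}(1)$ extra neurons per subnetwork, and (ii) observing that the factor $\mathcal{O}(n)$ coming from the per-subnetwork width is absorbed into the $\mathcal{O}(\cdot)$ constant when $n$ is treated as fixed, so that the final width bound is indeed $\mathcal{O}(p^{n+1})$.
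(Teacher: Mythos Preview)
Your proof is correct and follows essentially the same approach as the paper: apply \Cref{Thm:main}, build each $(n+1)$-term max in parallel using the construction from \Cref{Lem:max}, and combine linearly at the output, bounding the number of subsets $S$ by $p^{n+1}$. The additional bookkeeping you include (padding to uniform depth and the $\mathcal{O}(n)$ per-subnetwork width being absorbed into the constant) is not spelled out in the paper but is consistent with it.
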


\begin{proof}
	Using the representation of \Cref{Thm:main}, we can construct an NN computing $f$ by computing all the $(n+1)$-term max functions in parallel with the construction of \Cref{Lem:max} (similar to the proof by Arora et al.~\cite{Arora:DNNwithReLU} to show \Cref{Thm:CPWL}). This results in an NN with the claimed depth. Moreover, the width is at most a constant times the number of these $(n+1)$-term max functions. This number can be bounded in terms of the number of possible subsets $S\subseteq[p]$ with $\abs{S}\leq n+1$, which is at most~$p^{n+1}$.
\end{proof}

Before we present the proof of \Cref{Thm:main}, we show how we can generalize its consequences to the nonconvex case.

\subsection{The General (Nonconvex) Case}

It is a well-known fact that every CPWL function can be expressed as a difference of two convex CPWL functions, see, e.g., \cite[Theorem 1]{wang2004general}. This allows us to derive the general case from the convex case. What we need, however, is to bound the number of affine pieces of the two convex CPWL functions in terms of the number of pieces of the original function. Therefore, we consider a specific decomposition for which such bounds can easily be achieved.

\begin{proposition}\label{Prop:decomp}
	Let $f\colon\R^n\to\R$ be a CPWL function with $p$ affine pieces. Then, one can write~$f$ as $f=g-h$ where both $g$ and $h$ are convex CPWL functions with at most $p^{2n+1}$ pieces.
\end{proposition}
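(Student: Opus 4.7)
The plan is to follow the classical convexification idea (cf.\ Theorem~1 of~\cite{wang2004general}): add enough convex kinks to $f$ to obtain a convex function $g$, and then let $h\coloneqq g-f$, which is convex by construction. The key trick is to add these kinks only along hyperplanes already determined by pairs of affine pieces of $f$, so that the combinatorial complexities of both $g$ and $h$ are controlled by the number of regions of a hyperplane arrangement of size $O(p^2)$.

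Let $\ell_1,\dots,\ell_p$ denote the affine functions defining the $p$ pieces of $f$, and let $H_1,\dots,H_m$ be the distinct hyperplanes of the form $\{x\in\R^n\mid \ell_i(x)=\ell_j(x)\}$ for $i\neq j$ with $\ell_i\neq \ell_j$; then $m\le\binom{p}{2}$. Write $H_k=\{x\in\R^n\mid c_k^\top x=d_k\}$ and define the convex ramp $\phi_k(x)\coloneqq\max\{0,\,c_k^\top x-d_k\}$, whose only kink is along $H_k$. Whenever two pieces $i,j$ of $f$ meet across a facet lying on $H_k$, the difference $\ell_j-\ell_i$ vanishes on $H_k$ and therefore has the form $\beta_{ij}^{(k)}(c_k^\top x-d_k)$ for some scalar $\beta_{ij}^{(k)}$. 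Choose
\[
\lambda_k\coloneqq\max\bigl\{0,\ \max_{(i,j)}(-\beta_{ij}^{(k)})\bigr\}\ge 0,
\]
the inner maximum ranging over all pairs adjacent across some facet on $H_k$, and set
\[
h\coloneqq\sum_{k=1}^{m}\lambda_k\phi_k,\qquad g\coloneqq f+h,
\]
so that $f=g-h$. Clearly $h$ is convex as a nonnegative combination of convex ramps. To verify that $g$ is convex, I would invoke the standard CPWL convexity criterion (the gradient jump across every $(n-1)$-dimensional facet is a nonnegative multiple of the outward normal): at a facet on $H_k$ between pieces $i,j$ of $f$, the gradient jump of $g$ equals $(\beta_{ij}^{(k)}+\lambda_k)c_k$, which is a nonnegative multiple of $c_k$ by the choice of $\lambda_k$; and at a new kink introduced by some $\phi_k$ passing through the interior of a piece of $f$, the gradient jump is simply $\lambda_k c_k$, again nonnegative in direction $c_k$.

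For the count, both $g$ and $h$ are affine on each $n$-dimensional region of the hyperplane arrangement $\{H_1,\dots,H_m\}$, and Zaslavsky's bound gives $\sum_{j=0}^n\binom{m}{j}=O(m^n)=O(p^{2n})\le p^{2n+1}$ for the number of such regions (the extra factor of $p$ absorbs dimension-dependent constants, and the degenerate case $p=1$ is trivial). The main subtlety of the argument is that a \emph{single} scalar $\lambda_k$ must simultaneously convexify every facet of the polyhedral complex of $f$ lying on the same hyperplane $H_k$, even though different facets on $H_k$ may separate different pairs of pieces with different jump magnitudes; the inner maximum in the definition of $\lambda_k$ is precisely what handles this, and it is crucial that this convexification does not create breakpoints off the arrangement $\bigcup_k H_k$, since otherwise the piece count would blow up.
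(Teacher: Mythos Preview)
Your argument is correct and follows essentially the same route as the paper: define $h$ as a sum of convex functions whose kinks live only on the hyperplanes $\{\ell_i=\ell_j\}$, set $g=f+h$, verify convexity of $g$, and bound the pieces of both via the region count of that hyperplane arrangement. The only cosmetic difference is that the paper takes $h=\sum_{i<j}\max\{\ell_i,\ell_j\}$ (no coefficients to tune) and checks convexity of $g$ by restricting to lines, whereas you build $h$ from one weighted ramp per hyperplane and use the facet gradient-jump criterion; just be careful that your $\lambda_k$ is taken over the correctly oriented pairs (piece on the $\{c_k^\top x<d_k\}$ side listed first), since $\beta_{ji}^{(k)}=-\beta_{ij}^{(k)}$.
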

\begin{proof}
	Suppose the $p$ affine pieces of $f$ are given by $x\mapsto a_i^Tx+b_i$, $i\in[p]$. Define the function $h(x)\coloneqq\sum_{1\leq i<j \leq p}\max\{a_i^Tx+b_i,a_j^Tx+b_j\}$ and let $g\coloneqq f+h$. Then, obviously,~$f=g-h$. It remains to show that both $g$ and $h$ are convex CPWL functions with at most $p^{2n+1}$ pieces.
	
	The convexity of $h$ is clear by definition. Consider the ${p\choose2}=\frac{p(p-1)}{2}<p^2$ hyperplanes given by $a_i^Tx+b_i=a_j^Tx+b_j$, $1\leq i<j \leq p$. They divide $\R^n$ into at most~\mbox{${p^2\choose n}+{p^2\choose n-1}+\dots+{p^2\choose 0}\leq p^{2n}$} regions (compare \cite[Theorem~1.3]{edelsbrunner1987algorithms}) in each of which~$h$ is affine. In particular, $h$ has at most $p^{2n}\leq p^{2n+1}$ pieces.
	
	Next, we show that $g=f+h$ is convex. Intuitively, this holds because each possible breaking hyperplane of $f$ is made convex by adding $h$.
	To make this formal, note that by the definition of convexity, it suffices to show that $g$ is convex along each affine line. For this purpose, consider an arbitrary line $x(t)=ta+b$, $t\in\R$, given by $a\in\R^n$ and $b\in\R$. Let $\tilde{f}(t)\coloneqq f(x(t))$, $\tilde{g}(t)\coloneqq g(x(t))$, and $\tilde{h}(t)\coloneqq h(x(t))$. We need to show that $\tilde{g}\colon\R\to\R$ is a convex function. Observe that $\tilde{f}$, $\tilde{g}$, and $\tilde{h}$ are clearly one-dimensional CPWL functions with the property $\tilde{g}=\tilde{f}+\tilde{h}$. Hence, it suffices to show that $\tilde{g}$ is locally convex around each of its breakpoints. Let $t\in\R$ be an arbitrary breakpoint of $\tilde{g}$. If $\tilde{f}$ is already locally convex around $t$, then the same holds for $\tilde{g}$ as well since $\tilde{h}$ inherits convexity from $h$. Now suppose that $t$ is a nonconvex breakpoint of~$\tilde{f}$. Then there exist two distinct pieces of $f$, indexed by $i, j\in[p]$ with $i\neq j$, such that $\tilde{f}(t')=\min\{a_i^Tx(t')+b_i,a_j^Tx(t')+b_j\}$ for all $t'$ sufficiently close to $t$. By construction,~$\tilde{h}(t')$ contains the summand $\max\{a_i^Tx(t')+b_i,a_j^Tx(t')+b_j\}$. Thus, adding this summand to $\tilde{f}$ linearizes the nonconvex breakpoint of $\tilde{f}$, while adding all the other summands preserves convexity. In total, $\tilde{g}$ is locally convex around $t$, which finishes the proof that $g$ is a convex function.
	
	Finally, observe that pieces of $g=f+h$ are always intersections of pieces of $f$ and $h$, for which we have only $p\cdot p^{2n} = p^{2n+1}$ possibilities.
\end{proof}

Having this, we may conclude the following.

\thmwidthbound*

\begin{proof}
	Consider the decomposition $f=g-h$ from \Cref{Prop:decomp}. Using \Cref{thm:convex-bound}, we obtain that both $g$ and $h$ can be represented with the required depth $\lceil\log_2(n+1)\rceil+1$ and with width $\mathcal{O}((p^{2n+1})^{n+1})=\mathcal{O}(p^{2n^2+3n+1})$. Thus, the same holds true for $f$.
\end{proof}

\subsection{Extended Newton Polyhedra of Convex CPWL Functions}

For our proof of \Cref{Thm:main}, we use a correspondence of convex CPWL functions with certain polyhedra, which are known as (extended) Newton polyhedra in tropical geometry~\cite{maclagan2015introduction}. These relations between tropical geometry and neural networks have previously been applied to investigate expressivity of NNs; compare our references in \Cref{Sec:Lit}.

In order to formalize this correspondence, let $\CCPWL_n\subseteq\CPWL_n$ be the set of convex CPWL functions of type $\R^n\to\R$. For $f(x) = \max\{a_i^T x + b_i \mid i\in[p]\}$ in $\CCPWL_n$, we define its so-called \emph{extended Newton polyhedron} to be \[\mathcal{N}(f)\coloneqq\conv(\{(a_i^T, b_i)^T\in\R^n\times\R\mid i\in[p]\})+\cone(\{-e_{n+1}\})\subseteq\R^{n+1},\]
where the ``+'' stands for Minkowski addition.
We denote the set of all possible extended Newton polyhedra in $\R^{n+1}$ as $\Newt_n$. That is, $\Newt_n$ is the set of (unbounded) polyhedra in $\R^{n+1}$ that emerge from a polytope by adding the negative of the $(n+1)$-st unit vector $-e_{n+1}$ as an extreme ray. Hence, a set~\mbox{$P\subseteq\R^{n+1}$} is an element of~$\Newt_n$ if and only if~$P$ can be written as \[P=\conv(\{(a_i^T, b_i)^T\in\R^n\times\R\mid i\in[p]\})+\cone(\{-e_{n+1}\}).\] Conversely, for a polyhedron $P\in\Newt_n$ of this form, let~$\mathcal{F}(P)\in\CCPWL_n$ be the function defined by $\mathcal{F}(P)(x) = \max\{a_i^Tx + b_i \mid i\in[p]\}$.

There is an intuitive way of thinking about the extended Newton polyhedron $P$ of a convex CPWL function $f$: it consists of all hyperplane coefficients $(a^T,b)^T\in\R^n\times\R$ such that $a^Tx + b\leq f(x)$ for all $x\in\R^n$. This also explains why we add the extreme ray~$-e_{n+1}$: decreasing $b$ obviously maintains the property of $a^Tx + b$ being a lower bound on the function $f$. Hence, if a point $(a^T,b)^T$ belongs to the extended Newton polyhedron $P$, then also all points $(a^T,b')^T$ with $b'<b$ should belong to it. Thus, $-e_{n+1}$ should be contained in the recession cone of $P$.

In fact, there is a one-to-one correspondence between elements of $\CCPWL_n$ and $\Newt_n$, which is nicely compatible with some (functional and polyhedral) operations. This correspondence has been studied before in tropical geometry \cite{maclagan2015introduction, ETC}, convex geometry\footnote{$\mathcal{N}(f)$ is the negative of the epigraph of the convex conjugate of $f$.} \cite{HiriartUrrutyLemarechal93b}, as well as neural network literature \cite{Zhang:Tropical, charisopoulos2018tropical, alfarra2022decision, montufar2022sharp}.
We summarize the key findings about this correspondence relevant to our work in the following proposition:

\begin{proposition}\label{Prop:semiringiso}
	Let $n\in\N$ and $f_1,f_2\in\CCPWL_n$. Then it holds that
	\begin{enumerate}[(i)]
		\item the functions $\mathcal{N}\colon \CCPWL_n\to\Newt_n$ and $\mathcal{F}\colon \Newt_n\to\CCPWL_n$ are well-defined, that is, their output is independent from the representation of the input by pieces or vertices, respectively,
		\item $\mathcal{N}$ and $\mathcal{F}$ are bijections and inverse to each other,
		\item $\mathcal{N}(\max\{f_1,f_2\})=\conv(\mathcal{N}(f_1),\mathcal{N}(f_2))\coloneqq\conv(\mathcal{N}(f_1)\cup\mathcal{N}(f_2))$,
		\item $\mathcal{N}(f_1+f_2)=\mathcal{N}(f_1)+\mathcal{N}(f_2)$, where the $+$ on the right-hand side is Minkowski addition.
	\end{enumerate}
\end{proposition}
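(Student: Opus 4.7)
The plan is to first establish the key dual characterization
\[\mathcal{N}(f) = \{(a^T, b)^T \in \R^{n+1} \mid a^T x + b \leq f(x) \text{ for all } x \in \R^n\}\]
for any representation $f = \max\{a_i^T x + b_i \mid i \in [p]\}$. The inclusion ``$\subseteq$'' is immediate: any $(a, b) = \sum_i \lambda_i (a_i, b_i) - t\, e_{n+1}$ with $\lambda_i \geq 0$, $\sum_i \lambda_i = 1$, $t \geq 0$ satisfies $a^T x + b = \sum_i \lambda_i (a_i^T x + b_i) - t \leq f(x) - t \leq f(x)$. For ``$\supseteq$'', I would argue by hyperplane separation: if some $(a,b)$ in the right-hand set lay outside the closed polyhedron $\mathcal{N}(f)$, a separating functional $(x^*, s^*)$ would give $x^{*T} a_i + s^* b_i \leq c < x^{*T} a + s^* b$ for all $i$, and $s^* \geq 0$ since $-e_{n+1}$ is a recession direction. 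If $s^* > 0$, normalize to $s^*=1$ and evaluate at $x=x^*$ to obtain $f(x^*) \leq c < a^T x^* + b$, contradicting the hypothesis; if $s^*=0$, replace $x^*$ by $\lambda x^*$ for large $\lambda > 0$, so that $a^T(\lambda x^*) + b$ grows strictly faster in $\lambda$ than the bound $\lambda c + \max_i b_i$ on $f(\lambda x^*)$. A symmetric argument shows that $\mathcal{F}(P)(x) = \max\{a^T x + b \mid (a,b) \in P\}$: the recession ray $-e_{n+1}$ only decreases the linear form, so the supremum is attained on the compact generating polytope and depends only on $P$.

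Part (i) is now automatic, since both characterizations refer to $f$ (respectively $P$) alone. Part (ii) follows directly: $\mathcal{F}(\mathcal{N}(f))(x)$ is the supremum of $a^T x + b$ over all affine lower bounds on $f$, which equals $f(x)$ because any affine piece of $f$ coinciding with $f$ near $x$ realizes the supremum; conversely, $\mathcal{N}(\mathcal{F}(P)) = \{(a,b) \mid a^T x + b \leq \sup_{(a',b') \in P}(a'^T x + b')\ \forall x\} = P$, the last equality being exactly the separation argument above applied in reverse.

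Parts (iii) and (iv) then reduce to two routine polyhedral identities, which I would prove by direct computation on representations. Writing $f_i = \max\{a_{ij}^T x + b_{ij} \mid j \in [p_i]\}$ and setting $V_i \coloneqq \{(a_{ij}, b_{ij}) \mid j \in [p_i]\}$, the pieces of $\max\{f_1, f_2\}$ are precisely $V_1 \cup V_2$, so (iii) reduces to the identity $\conv(P_1 \cup P_2) = \conv(V_1 \cup V_2) + R$ whenever $P_i = \conv V_i + R$ and $R$ is a convex cone, which is a one-line verification (each side contains the other by writing out the defining combinations). For (iv), one computes $(f_1 + f_2)(x) = \max_{j,k}\{(a_{1j}+a_{2k})^T x + (b_{1j}+b_{2k})\}$, so the pieces form $V_1 + V_2$; combined with the standard identity $\conv(V_1 + V_2) = \conv V_1 + \conv V_2$ and the fact that $\cone(-e_{n+1}) + \cone(-e_{n+1}) = \cone(-e_{n+1})$, this yields $\mathcal{N}(f_1+f_2) = \mathcal{N}(f_1) + \mathcal{N}(f_2)$.

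The only genuinely delicate step is the reverse inclusion in the characterization of $\mathcal{N}(f)$; in particular, the $s^*=0$ case of the separation argument is the one that must be handled carefully, since it is precisely the recession ray $-e_{n+1}$ that distinguishes $\mathcal{N}(f)$ from an ordinary polytope. Once that characterization is in place, every remaining assertion is a translation of an elementary Minkowski-arithmetic identity through the bijection $\mathcal{N} \leftrightarrow \mathcal{F}$.
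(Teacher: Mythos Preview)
Your proof is correct and complete. Note, however, that the paper does not actually give its own proof of this proposition: it is stated as a summary of known facts, with references to the tropical geometry, convex analysis, and neural network literature (and a footnote remarking that $\mathcal{N}(f)$ is the negative of the epigraph of the convex conjugate of $f$). So there is no ``paper's proof'' to compare against in the strict sense.

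That said, your approach is precisely the standard one those references would supply, and in particular your key characterization
\[
\mathcal{N}(f)=\{(a,b)\in\R^{n+1}\mid a^Tx+b\le f(x)\ \text{for all }x\in\R^n\}
\]
is exactly the convex-conjugate description alluded to in the paper's footnote: the right-hand side is $\{(a,b)\mid b\le -f^*(a)\}$, i.e., the reflection of the epigraph of $f^*$ across the last coordinate. Your separation argument (including the careful handling of the $s^*=0$ case forced by the recession direction $-e_{n+1}$) is the clean way to establish this without invoking biconjugation, and once it is in place parts~(i)--(iv) are, as you say, routine polyhedral identities. In short: you have written out exactly the proof the paper chose to omit.
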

An algebraic way of phrasing this proposition is as follows: $\mathcal{N}$ and $\mathcal{F}$ are isomorphisms between the semirings $(\CCPWL_n,\max,+)$ and $(\Newt_n,\conv,+)$.

\subsection{Proof of \Cref{Thm:main}}

The rough idea to prove \Cref{Thm:main} is as follows.
Suppose we have a $p$-term max function $f$ with $p\geq n+2$. By \Cref{Prop:semiringiso}, $f$ corresponds to a polyhedron $P\in\Newt_n$ with at least $n+2$ vertices. Applying a classical result from discrete geometry known as \emph{Radon's theorem} allows us to carefully decompose $P$ into a ``signed''\footnote{Some polyhedra may occur with ``negative'' coefficents in that sum, meaning that they are actually added to $P$ instead of the other polyhedra. The corresponding CPWL functions will then have negative coefficients in the linear combination representing $f$.} Minkowski sum of polyhedra in $\Newt_n$ whose vertices are subsets of at most $p-1$ out of the $p$ vertices of~$P$.
Translating this back into the world of CPWL functions by \Cref{Prop:semiringiso} yields that $f$ can be written as linear combination of $p'$-term maxima with $p'<p$, where each of them involves a subset of the $p$ affine terms of $f$. We can then obtain \Cref{Thm:main} by iterating until every occurring maximum expression involves at most $n+1$ terms.

We start with a proposition that will be useful for our proof of \Cref{Thm:main}. Although its statement is well-known in the discrete geometry community, we include a proof for the sake of completeness.
To show the proposition, we make use of Radon's theorem (compare \cite[Theorem~4.1]{edelsbrunner1987algorithms}), stating that any set of at least $n+2$ points in $\R^n$ can be partitioned into two nonempty subsets such that their convex hulls intersect. 
\begin{proposition}\label{Prop:poly2.0}
	Given $p>n+1$ vectors $z_i=(a_i^T,b_i)^T\in \R^{n+1}$, $i\in[p]$, there exists a nonempty subset $U\subsetneq[p]$ featuring the following property: there is no~$c\in\R^{n+1}$ with~$c_{n+1}\geq0$ and~$\gamma\in\R$ such that
	\begin{align}
		\begin{split}
			c^Tz_i &>\gamma\quad\text{for all~$i\in U$, and}\\
			c^Tz_i &\leq\gamma\quad\text{for all~$i\in [p]\setminus U$.}	
		\end{split}
		\label{eq:prop}
	\end{align}
\end{proposition}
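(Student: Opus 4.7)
The plan is to reduce the claim to a linear-dependence argument that blends Radon's theorem with the fact that the forbidden separating hyperplane must have $c_{n+1}\geq 0$, which I encode as an auxiliary vector in one dimension higher.

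First, I would lift each $z_i$ to $\R^{n+2}$ as $(z_i^T, 1)^T$ and adjoin the vector $(-e_{n+1}^T, 0)^T$, giving $p+1 \geq n+3$ vectors in a space of dimension $n+2$. These are linearly dependent, so there exist scalars $\alpha_1, \dots, \alpha_p, \mu$, not all zero, with
\[
\sum_{i=1}^p \alpha_i z_i = \mu\, e_{n+1} \qquad \text{and} \qquad \sum_{i=1}^p \alpha_i = 0.
\]
Replacing $(\alpha,\mu)$ by $(-\alpha,-\mu)$ if necessary, I may assume $\mu \geq 0$. If all $\alpha_i$ vanished, then $\mu\, e_{n+1} = 0$ would force $\mu = 0$, contradicting nontriviality; hence some $\alpha_i \neq 0$, and combined with $\sum \alpha_i = 0$ this guarantees that both $I^+ \coloneqq \{i : \alpha_i > 0\}$ and $I^- \coloneqq \{i : \alpha_i < 0\}$ are nonempty.

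Next, I set $U \coloneqq I^-$, which is a nonempty proper subset of $[p]$ since $I^+$ is also nonempty. Suppose for contradiction that some $c \in \R^{n+1}$ with $c_{n+1} \geq 0$ and some $\gamma \in \R$ satisfy the separation conditions $c^T z_i > \gamma$ for $i \in I^-$ and $c^T z_j \leq \gamma$ for $j \in [p] \setminus I^-$. Taking the inner product of $c$ with both sides of $\sum_i \alpha_i z_i = \mu\, e_{n+1}$ gives $\sum_i \alpha_i (c^T z_i) = \mu c_{n+1}$. For $i \in I^-$, multiplying the strict inequality $c^T z_i > \gamma$ by $\alpha_i < 0$ reverses it to $\alpha_i c^T z_i < \alpha_i \gamma$; for $i \in I^+$, multiplying $c^T z_i \leq \gamma$ by $\alpha_i > 0$ gives $\alpha_i c^T z_i \leq \alpha_i \gamma$; and for $\alpha_i = 0$ both sides vanish. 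Since $I^- \neq \emptyset$ contributes at least one strict term, summing yields $\mu c_{n+1} = \sum_i \alpha_i c^T z_i < \gamma \sum_i \alpha_i = 0$, contradicting $\mu \geq 0$ and $c_{n+1} \geq 0$.

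The main subtlety is choosing the right type of dependence: ordinary affine dependence of the $z_i$ in $\R^{n+1}$ would suffice when $p \geq n+3$ (and would reduce the argument to classical Radon), but for $p = n+2$ the points $z_i$ may be affinely independent, so such a dependence need not exist. Incorporating the recession direction $-e_{n+1}$ as an extra lifted vector is exactly what makes the argument work uniformly, and the hypothesis $c_{n+1}\geq 0$ is precisely what forces the final product $\mu c_{n+1}$ to be nonnegative.
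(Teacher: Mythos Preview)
Your proof is correct. The approach is essentially equivalent to the paper's, though packaged differently: the paper applies Radon's theorem directly to the projected points $a_i\in\R^n$ (which are at least $n+2$ in number), obtaining convex coefficients $\lambda_i$ with $\sum_{i\in U}\lambda_i a_i=\sum_{i\notin U}\lambda_i a_i$, and then swaps $U$ with its complement if necessary so that $\sum_{i\in U}\lambda_i b_i\le\sum_{i\notin U}\lambda_i b_i$. Your lifting argument, once unpacked, yields exactly the same data: the first $n$ coordinates of $\sum_i\alpha_i z_i=\mu e_{n+1}$ give $\sum_i\alpha_i a_i=0$ with $\sum_i\alpha_i=0$ (an affine dependence among the $a_i$, i.e.\ Radon), and the last coordinate gives $\mu=\sum_i\alpha_i b_i$, so your sign normalization $\mu\ge0$ is precisely the paper's swap. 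What your framing buys is that the role of the recession direction $-e_{n+1}$ (and hence of the constraint $c_{n+1}\ge0$) is made explicit from the outset, whereas the paper's version is slightly shorter by invoking Radon as a black box. Your closing remark about affine dependence of the $z_i$ themselves is a bit of a detour: neither proof needs that, since both ultimately work with the $a_i$'s in $\R^n$.
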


\begin{proof}
	Radon's theorem applied to the at least $n+2$ vectors $a_i$, $i\in[p]$, yields a nonempty subset \mbox{$U\subsetneq[p]$} and coefficients $\lambda_i\in[0,1]$ with $\sum_{i\in U}\lambda_i = \sum_{i\in[p]\setminus U}\lambda_i = 1$ such that $\sum_{i\in U}\lambda_i a_i = \sum_{i\in[p]\setminus U}\lambda_i a_i$. Suppose that~\mbox{$\sum_{i\in U}\lambda_ib_i \leq \sum_{i\in[p]\setminus U}\lambda_ib_i$} without loss of generality (otherwise exchange the roles of $U$ and $[p]\setminus U$).
	
	For any $c$ and $\gamma$ that satisfy \eqref{eq:prop} and $c_{n+1}\geq0$ it follows that
	\[
	\gamma <c^T \sum_{i\in U} \lambda_iz_i \leq c^T \sum_{i\in [p]\setminus U} \lambda_iz_i \leq \gamma,
	\]
	proving that no such $c$ and $\gamma$ can exist.
\end{proof}

The following proposition is a crucial step in order to show that any convex CPWL function with $p>n+1$ pieces can be expressed as an integer linear combination of convex CPWL functions with at most $p-1$ pieces.

\begin{proposition}\label{Prop:one_step}
	Let $f(x) = \max\{a_i^T x + b_i \mid i\in[p]\}$ be a convex CPWL function defined on $\R^n$ with $p>n+1$. Then there exist a subset $U\subseteq[p]$ such that
	\begin{equation}\label{Eq:evenodd}
		\sum_{\substack{W\subseteq U,\\\abs{W}\text{ even}}} \max\{a_i^T x + b_i\mid i\in[p]\setminus W\}
		=
		\sum_{\substack{W\subseteq U,\\\abs{W}\text{ odd}}} \max\{a_i^T x + b_i\mid i\in[p]\setminus W\}
	\end{equation}
\end{proposition}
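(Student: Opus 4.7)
The plan is to transport the claimed CPWL identity across the semiring isomorphism of \Cref{Prop:semiringiso} to obtain an equivalent identity between Minkowski sums of extended Newton polyhedra, and then verify it at the level of support functions using the Radon-style partition furnished by \Cref{Prop:poly2.0}.

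Write $z_i \coloneqq (a_i^T, b_i)^T \in \R^{n+1}$ and let $U \subsetneq [p]$ be the nonempty subset produced by \Cref{Prop:poly2.0}. For each $W \subsetneq [p]$, set $f_W(x) \coloneqq \max\{a_i^T x + b_i : i \in [p]\setminus W\}$, so that its extended Newton polyhedron is $Q_W \coloneqq \conv\{z_i : i \in [p]\setminus W\} + \cone\{-e_{n+1}\}$. By parts (ii) and (iv) of \Cref{Prop:semiringiso}, the target equality is equivalent to the Minkowski-sum identity $\sum_{|W|\text{ even}} Q_W = \sum_{|W|\text{ odd}} Q_W$ between two elements of $\Newt_n$. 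Two polyhedra coincide iff their support functions do, so it suffices to check that $\sum_{W \subseteq U}(-1)^{|W|} h_{Q_W}(c) = 0$ for every $c \in \R^{n+1}$. For $c_{n+1} < 0$ every term is $+\infty$, while for $c_{n+1} \geq 0$ the ray $\cone\{-e_{n+1}\}$ contributes $0$ to the support function, so $h_{Q_W}(c) = M_W \coloneqq \max_{i \in [p]\setminus W} c^T z_i$.

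The next step is to expand $M_W$ by a layer cake. Let $\gamma_1 > \gamma_2 > \cdots > \gamma_s$ be the distinct values of $v_i \coloneqq c^T z_i$, with level sets $L_k \coloneqq \{i : v_i = \gamma_k\}$ and prefixes $T_k \coloneqq L_1 \cup \cdots \cup L_k$. Then
\[
M_W ~=~ \gamma_s ~+~ \sum_{k=1}^{s-1}(\gamma_k - \gamma_{k+1})\,\mathds{1}[T_k \not\subseteq W].
\]
Summing over $W \subseteq U$ against $(-1)^{|W|}$ and using the elementary identities $\sum_{W \subseteq U}(-1)^{|W|} = 0$ and $\sum_{W:\,T_k \subseteq W \subseteq U}(-1)^{|W|} = (-1)^{|U|}\mathds{1}[T_k = U]$ (both valid since $|U|\geq 1$), a short computation gives
\[
\sum_{W \subseteq U}(-1)^{|W|} M_W ~=~ -(-1)^{|U|} \sum_{\substack{1 \leq k \leq s-1 \\ T_k = U}} (\gamma_k - \gamma_{k+1}).
\]

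The final step is to rule out the possibility $T_k = U$. If $T_k = U$ for some $k \in \{1,\dots,s-1\}$, then picking $\gamma \in (\gamma_{k+1}, \gamma_k)$ yields $c^T z_i > \gamma$ exactly for $i \in U$ and $c^T z_i \leq \gamma$ exactly for $i \in [p]\setminus U$, which contradicts the defining property of $U$ (applicable since $c_{n+1} \geq 0$). Hence the alternating sum vanishes for every admissible $c$, the two Minkowski sums agree, and bijectivity of $\mathcal N$ delivers the CPWL identity. I expect the main obstacle to be identifying the right decomposition of $M_W$ so that the alternating sum over $W \subseteq U$ collapses precisely to the ``upper level set'' configuration outlawed by \Cref{Prop:poly2.0}; the layer cake is the natural choice because it writes $M_W$ in terms of set-containment indicators to which M\"obius-style cancellation applies directly.
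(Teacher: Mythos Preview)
Your proof is correct and follows the same overall strategy as the paper: translate via \Cref{Prop:semiringiso} to a Minkowski-sum identity, then verify it by comparing support functions, using the subset $U$ from \Cref{Prop:poly2.0}. The difference lies in how you handle the alternating sum $\sum_{W\subseteq U}(-1)^{|W|}M_W$. You use a layer-cake expansion of $M_W$ together with M\"obius-style cancellation to reduce the sum to the forbidden configuration $T_k=U$. The paper instead observes directly that, since no $(c,\gamma)$ with $c_{n+1}\geq0$ satisfies \eqref{eq:prop}, there must exist some $u\in U$ with $c^Tz_u\leq\max_{i\in[p]\setminus U}c^Tz_i$; toggling membership of this single index $u$ gives a bijection $\varphi_u$ between even and odd subsets of $U$ under which $M_W=M_{\varphi_u(W)}$, so the even and odd sums agree termwise. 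Your route is a bit more machinery-heavy but makes the role of the ``no strict upper level set equals $U$'' condition very transparent; the paper's pairing argument is shorter and avoids the need to decompose $M_W$ at all. One small presentational point: writing $\sum_{W}(-1)^{|W|}h_{Q_W}(c)=0$ is ill-defined when $c_{n+1}<0$ (all terms are $+\infty$); it is cleaner, as the paper does, to state the goal as equality of the two support-function values $\sum_{|W|\text{ even}}h_{Q_W}(c)=\sum_{|W|\text{ odd}}h_{Q_W}(c)$, which is unambiguous even when both sides are infinite.
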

\begin{proof}
	Consider the $p>n+1$ vectors $z_i\coloneqq(a_i^T,b_i)^T\in\R^{n+1}$, $i\in[p]$. Choose $U$ according to \Cref{Prop:poly2.0}. We show that this choice of $U$ guarantees equation~\eqref{Eq:evenodd}.
	
	For $W\subseteq U$, let $f_W(x)=\max\{a_i^T x + b_i\mid i\in[p]\setminus W\}$ and consider its extended Newton polyhedron $P_W=\mathcal{N}(f_W)=\conv(\{z_i\mid i\in[p]\setminus W\})+\cone(\{-e_{n+1}\})$. By \Cref{Prop:semiringiso}, equation~\eqref{Eq:evenodd} is equivalent to
	\[
	P_{\even}\coloneqq
	\sum_{\substack{W\subseteq U,\\\abs{W}\text{ even}}} P_W =\sum_{\substack{W\subseteq U,\\\abs{W}\text{ odd}}} P_W
	\eqqcolon P_{\odd},
	\]
	where the sums are Minkowski sums.
	
	We show this equation by showing that for all vectors $c\in\R^{n+1}$ it holds that
	\begin{equation}\label{Eq:LPequal}
		\max\{c^Tx\mid x\in P_{\even}\}=\max\{c^Tx\mid x\in P_{\odd}\}.
	\end{equation}
	
	Let $c\in\R^{n+1}$ be an arbitrary vector. If $c_{n+1}<0$, both sides of \eqref{Eq:LPequal} are infinite. Hence, from now on, assume that $c_{n+1}\geq 0$. Then, both sides of \eqref{Eq:LPequal} are finite since~$-e_{n+1}$ is the only extreme ray of all involved polyhedra.
	
	Due to our choice of $U$ according to \Cref{Prop:poly2.0}, there exists an index $u\in U$ such that
	\begin{equation}\label{Eq:notExtreme}
		c^Tz_u\leq\max_{i\in[p]\setminus U} c^Tz_i.
	\end{equation}
	We define a bijection $\varphi_u$ between the even and the odd subsets of $U$ as follows:
	\[
	\varphi_u(W) \coloneqq \left\{\begin{array}{ll}
		W\cup\{u\},&\text{if } u\notin W,\\
		W\setminus\{u\},&\text{if } u\in W.\\
	\end{array}\right.
	\]
	That is, $\varphi_u$ changes the parity of $W$ by adding or removing $u$.
	Considering the corresponding polyhedra $P_W$ and $P_{\varphi_u(W)}$, this means that $\varphi_u$ adds or removes the extreme point $z_u$ to or from $P_W$. Due to \eqref{Eq:notExtreme} this does not change the optimal value of maximizing in $c$-direction over the polyhedra, that is, \[\max\{c^Tx\mid x\in P_W\}=\max\{c^Tx\mid x\in P_{\varphi_u(W)}\}.\] Hence, we may conclude
	\begin{align*}
		\max\{c^Tx\mid x\in P_{\even}\}
		&=\sum_{\substack{W\subseteq U,\\\abs{W}\text{ even}}} \max\{c^Tx\mid x\in P_{W}\}\\
		&=\sum_{\substack{W\subseteq U,\\\abs{W}\text{ even}}} \max\{c^Tx\mid x\in P_{\varphi_u(W)}\}\\
		&=\sum_{\substack{W\subseteq U,\\\abs{W}\text{ odd}}} \max\{c^Tx\mid x\in P_{W}\}\\
		&=\max\{c^Tx\mid x\in P_{\odd}\},
	\end{align*}
	which proves \eqref{Eq:LPequal}. Thus, the claim follows.
\end{proof}

With the help of this result, we can now prove \Cref{Thm:main}.

\begin{proof}[Proof of \Cref{Thm:main}]
	Let $f(x) = \max\{a_i^T x + b_i \mid i\in[p]\}$ be a convex CPWL function defined on $\R^n$. Having a closer look at the statement of \Cref{Prop:one_step}, observe that only one term at the left-hand side of \eqref{Eq:evenodd} contains all $p$ affine combinations $a_i^Tx + b_i$. Putting all other maximum terms on the other side, we may write~$f$ as an integer linear combination of maxima of at most $p-1$ summands. Repeating this procedure until we have eliminated all maximum terms with more than $n+1$ summands yields the desired representation.
\end{proof}

\subsection{Potential Approaches to Show Lower Bounds on the Width}\label{sec:discuss_bounds}

In light of the upper width bounds shown in this section, a natural question to ask is whether also meaningful lower bounds can be achieved. This would mean constructing a family of CPWL functions with $p$ pieces defined on $\R^n$ (with different values of~$p$ and~$n$), for which we can prove that a large width is required to represent these functions with NNs of depth $\lceil\log_2(n+1)\rceil+1$.

A trivial and not very satisfying answer follows, e.g., from \cite{raghu2017expressive} or \cite{serra2018bounding}: for fixed input dimension $n$, they show that a function computed by an NN with $k$ hidden layers and width $w$ has at most~$\mathcal{O}(w^{kn})$ pieces. For our setting, this means that an NN with logarithmic depth needs a width of at least $\mathcal{O}(p^{1/(n\log n)})$ to represent a function with $p$ pieces. This is, of course, very far away from our upper bounds.

Similar upper bounds on the number of pieces have been proven by many other authors and are often used to show depth-width trade-offs \cite{montufar2014regions, montufar2022sharp, pascanu2014number, telgarsky2016benefits, Arora:DNNwithReLU}. However, there is a good reason why all these results only give rise to very trivial lower bounds for our setting: the focus is always on functions with considerably many pieces, which then, consequently, need many neurons to be represented (with small depth). However, since the lower bounds we strive for depend on the number of pieces, we would need to construct a family of functions with comparably few pieces that still need a lot of neurons to be represented.
In general, it seems to be a tough task to argue why such functions should exist.

A different approach could leverage methods from complexity theory, in particular from circuit complexity. Neural networks are basically arithmetic circuits with very special operations allowed. In fact, they can be seen as a tropical variant of arithmetic circuits. Showing circuit lower bounds is a notoriously difficult task in complexity theory, but maybe some conditional result (based on common conjectures similar to P~$\neq$~NP) could be established.

We think that the question whether our bounds are tight, or whether at least some non-trivial lower bounds on the width for NNs with logarithmic depth can be shown, is an exciting question for further research.

\section{Understanding Expressivity via Newton Polytopes}\label{Sec:Polytopes}

In \Cref{Sec:MIP}, we presented a mixed-integer programming approach towards proving that deep NNs can strictly represent more functions than shallow ones. However, even if we could prove that it is indeed enough to consider $H$-conforming NNs, this approach would not generalize to deeper networks due to computational limitations. Therefore, different ideas are needed to prove \Cref{Con:main} in its full generality. In this section, we point out that Newton polytopes of convex CPWL functions (similar to what we used in the previous section) could also be a way of proving \Cref{Con:main}. Using a homogenized version of \Cref{Prop:semiringiso}, we provide an equivalent formulation of \Cref{Con:main} that is completely phrased in the language of discrete geometry.

Recall that, by \Cref{prop:wlognobias}, we may restrict ourselves to NNs without biases. In particular, all CPWL functions represented by such NNs, or parts of it, are positively homogeneous. For the associated extended Newton polyhedra (compare \Cref{Prop:semiringiso}), this has the following consequence: all vertices $(a,b)\in\R^n\times\R$ lie in the hyperplane~$b=0$, that is, their $(n+1)$-st coordinate is $0$. Therefore, the extended Newton polyhedron of a positively homogeneous, convex CPWL function $f(x)=\max\{a_i^T x\mid i\in[p]\}$ is completely characterized by the so-called \emph{Newton polytope}, that is, the polytope~\mbox{$\conv(\{a_i\mid i\in[p]\})\subseteq\R^n$}.

To make this formal, let $\overbar{\CCPWL}_n$ be the set of all positively homogeneous, convex CPWL functions of type $\R^n\to\R$ and let $\overbar{\Newt}_n$ be the set of all convex polytopes in $\R^n$. Moreover, for $f(x)=\max\{a_i^T x\mid i\in[p]\}$ in $\overbar{\CCPWL}_n$, let \[\overbar{\mathcal{N}}(f)\coloneqq\conv(\{a_i\mid i\in[p]\})\in\overbar{\Newt}_n\] be the associated Newton polytope of $f$ and for $P=\conv(\{a_i\mid i\in[p]\})\in\overbar{\Newt}_n$ let \[\overbar{\mathcal{F}}(P)(x)=\max\{a_i^T x\mid i\in[p]\}\] be the so-called associated \emph{support function}~\cite{lemarechal1996convex} of $P$ in $\overbar{\CCPWL}_n$.
With this notation, we obtain the following variant of \Cref{Prop:semiringiso}.

\begin{proposition}\label{Prop:semiringisoII}
	Let $n\in\N$ and $f_1,f_2\in\overbar{\CCPWL}_n$. Then it holds that
	\begin{enumerate}[(i)]
		\item the functions $\overbar{\mathcal{N}}\colon \overbar{\CCPWL}_n\to\overbar{\Newt}_n$ and $\overbar{\mathcal{F}}\colon \overbar{\Newt}_n\to\overbar{\CCPWL}_n$ are well-defined, that is, their output is independent from the representation of the input by pieces or vertices, respectively,
		\item $\overbar{\mathcal{N}}$ and $\overbar{\mathcal{F}}$ are bijections and inverse to each other,
		\item $\overbar{\mathcal{N}}(\max\{f_1,f_2\})=\conv(\overbar{\mathcal{N}}(f_1),\overbar{\mathcal{N}}(f_2))\coloneqq\conv(\overbar{\mathcal{N}}(f_1)\cup\overbar{\mathcal{N}}(f_2))$,
		\item $\overbar{\mathcal{N}}(f_1+f_2)=\overbar{\mathcal{N}}(f_1)+\overbar{\mathcal{N}}(f_2)$, where the $+$ on the right-hand side is Minkowski addition.
	\end{enumerate}
\end{proposition}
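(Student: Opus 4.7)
The plan is to derive Proposition~\ref{Prop:semiringisoII} directly from the already-established Proposition~\ref{Prop:semiringiso} by restricting to positively homogeneous functions and projecting away the last coordinate. The central observation is that a convex CPWL function $f(x)=\max\{a_i^Tx+b_i\mid i\in[p]\}$ is positively homogeneous if and only if it admits such a representation with all $b_i=0$; equivalently, $\mathcal{N}(f)\subseteq\{(a,b)\in\R^{n+1}\mid b\leq 0\}$ has all its vertices in the hyperplane $\{b=0\}$, so
\[
\mathcal{N}(f)~=~\bigl(P\times\{0\}\bigr)+\cone(\{-e_{n+1}\})
\]
for a unique polytope $P\subseteq\R^n$, which is precisely $\overbar{\mathcal{N}}(f)$. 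Let $\overbar{\Newt}_n^*\subseteq\Newt_n$ denote the extended Newton polyhedra of this form, and let $\pi\colon\overbar{\Newt}_n^*\to\overbar{\Newt}_n$ be the projection onto the first $n$ coordinates. Then $\pi$ is a bijection whose inverse sends a polytope $P$ to $(P\times\{0\})+\cone(\{-e_{n+1}\})$, and by construction $\overbar{\mathcal{N}}=\pi\circ\mathcal{N}|_{\overbar{\CCPWL}_n}$.

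To prove (i) and (ii), I would first check that $\mathcal{N}$ restricted to $\overbar{\CCPWL}_n$ lands in $\overbar{\Newt}_n^*$: given any representation of $f\in\overbar{\CCPWL}_n$ by pieces with biases $b_i$, positive homogeneity forces each piece to satisfy $a_i^T 0+b_i\leq f(0)=0$ while $\max_i b_i=f(0)=0$, and removing redundant pieces yields a representation with all biases equal to $0$. Combined with well-definedness of $\mathcal{N}$ from Proposition~\ref{Prop:semiringiso}(i), this gives well-definedness of $\overbar{\mathcal{N}}$. Similarly, for $P=\conv\{a_i\}\in\overbar{\Newt}_n$, the function $\overbar{\mathcal{F}}(P)(x)=\max\{a_i^Tx\mid i\in[p]\}$ is the support function of $P$, which depends only on $P$ and is manifestly positively homogeneous and convex; this settles well-definedness of $\overbar{\mathcal{F}}$. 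Since $\pi$ is a bijection between $\overbar{\Newt}_n^*$ and $\overbar{\Newt}_n$, and $\mathcal{N}$ is a bijection between $\CCPWL_n$ and $\Newt_n$ (Proposition~\ref{Prop:semiringiso}(ii)) that restricts to a bijection between $\overbar{\CCPWL}_n$ and $\overbar{\Newt}_n^*$, the composition $\overbar{\mathcal{N}}=\pi\circ\mathcal{N}$ is a bijection with inverse $\overbar{\mathcal{F}}=\mathcal{F}\circ\pi^{-1}$.

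For (iii) and (iv), the operations $\max$ and $+$ on $\overbar{\CCPWL}_n$ preserve positive homogeneity, so the image remains in $\overbar{\CCPWL}_n$. It remains to show that $\pi$ is compatible with convex hull of unions and Minkowski sums: for $Q_1,Q_2\in\overbar{\Newt}_n^*$ of the form $Q_j=(P_j\times\{0\})+\cone(\{-e_{n+1}\})$, one has $\conv(Q_1\cup Q_2)=(\conv(P_1\cup P_2)\times\{0\})+\cone(\{-e_{n+1}\})$ and $Q_1+Q_2=((P_1+P_2)\times\{0\})+\cone(\{-e_{n+1}\})$, so $\pi(\conv(Q_1\cup Q_2))=\conv(\pi(Q_1)\cup\pi(Q_2))$ and $\pi(Q_1+Q_2)=\pi(Q_1)+\pi(Q_2)$. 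Applying Proposition~\ref{Prop:semiringiso}(iii)--(iv) and then $\pi$ yields the desired identities.

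The proof is essentially bookkeeping; there is no serious obstacle, since every needed statement is either contained in Proposition~\ref{Prop:semiringiso} or is an elementary property of Minkowski sums and projections of polyhedra. The only point that deserves a brief verification is that positively homogeneous convex CPWL functions really do correspond to polyhedra with vertices in $\{b=0\}$, which follows from $f(0)=0$ together with $f$ being the pointwise maximum of its affine pieces.
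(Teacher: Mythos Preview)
Your approach is correct and is precisely what the paper intends: the paper does not prove Proposition~\ref{Prop:semiringisoII} at all but simply presents it as the homogeneous ``variant'' of Proposition~\ref{Prop:semiringiso}, so your reduction via the projection $\pi$ is exactly the implied argument. One small point worth tightening: from $f(0)=0$ you only get $b_i\le 0$ with equality for at least one $i$; to conclude that every \emph{non-redundant} piece has $b_i=0$ you should use positive homogeneity on the open region where that piece is active (if $f(x)=a_i^Tx+b_i$ there, then $\lambda(a_i^Tx+b_i)=f(\lambda x)=a_i^T(\lambda x)+b_i$ forces $b_i=0$).
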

In other words, $\overbar{\mathcal{N}}$ and $\overbar{\mathcal{F}}$ are isomorphisms between the semirings $(\overbar{\CCPWL}_n,\max,+)$ and $(\overbar{\Newt}_n,\conv,+)$.

Next, we study which polytopes can appear as Newton polytopes of convex CPWL functions computed by NNs with a certain depth; compare Zhang et al.~\cite{Zhang:Tropical}.

Before we apply the first ReLU activation, any function computed by an NN is linear. Thus, the corresponding Newton polytope is a single point. Starting from that, let us investigate a neuron in the first hidden layer. Here, the ReLU activation function computes a maximum of a linear function and~$0$. Therefore, the Newton polytope of the resulting function is the convex hull of two points, that is, a line segment. After the first hidden layer, arbitrary many functions of this type can be added up. For the corresponding Newton polytopes, this means that we take the Minkowski sum of line segments, resulting in a so-called \emph{zonotope}.

Now, this construction can be repeated layerwise, making use of \Cref{Prop:semiringisoII}: in each hidden layer, we can compute the maximum of two functions computed by the previous layers, which translates to obtaining the new Newton polytope as a convex hull of the union of the two original Newton polytopes. In addition, the linear combinations between layers translate to scaling and taking Minkowski sums of Newton polytopes.

This intuition motivates the following definition. Let $\overbar{\Newt}_n^{(0)}$ be the set of all polytopes in $\R^n$ that consist only of a single point. Then, for each $k\geq 1$, we recursively define
\begin{align*}
	\overbar{\Newt}_n^{(k)}\coloneqq\left\{\sum_{i=1}^{p}\conv(P_i,Q_i)\st P_i,Q_i\in\overbar{\Newt}_n^{(k-1)},\ p\in\N\right\},
\end{align*}
where the sum is a Minkowski sum of polytopes. A first, but not precisely accurate interpretation is as follows: the set $\overbar{\Newt}_n^{(k)}$ contains the Newton polytopes of positively homogeneous, convex CPWL functions representable with a $k$-hidden-layer NN. See \Cref{Fig:Newton} for an illustration of the case $k=2$.

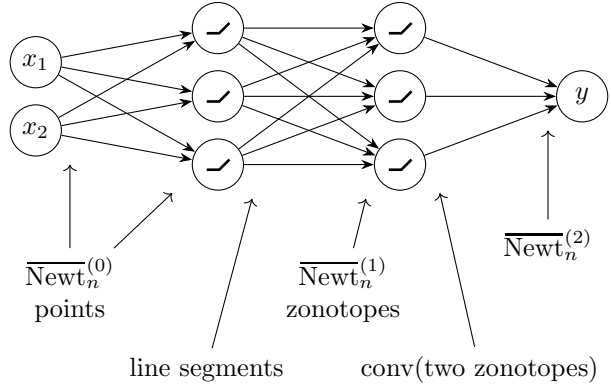
\begin{figure}[t]
	\centering
	\begin{tikzpicture}[every node/.style={transform shape}]\small
		\node[smallneuron] (x1) at (0,6ex) {$x_1$};
		\node[smallneuron] (x2) at (0,0) {$x_2$};
		\node[smallneuron] (n11) at (16ex,9ex) {\relu};
		\node[smallneuron] (n12) at (16ex,3ex) {\relu};
		\node[smallneuron] (n13) at (16ex,-3ex) {\relu};
		\draw[connection] (x1) -- (n11);
		\draw[connection] (x1) -- (n12);
		\draw[connection] (x1) -- (n13);
		\draw[connection] (x2) -- (n11);
		\draw[connection] (x2) -- (n12);
		\draw[connection] (x2) -- (n13);
		\node[smallneuron] (n21) at (32ex,9ex) {\relu};
		\node[smallneuron] (n22) at (32ex,3ex) {\relu};
		\node[smallneuron] (n23) at (32ex,-3ex) {\relu};
		\draw[connection] (n11) -- (n21);
		\draw[connection] (n12) -- (n21);
		\draw[connection] (n13) -- (n21);
		\draw[connection] (n11) -- (n22);
		\draw[connection] (n12) -- (n22);
		\draw[connection] (n13) -- (n22);
		\draw[connection] (n11) -- (n23);
		\draw[connection] (n12) -- (n23);
		\draw[connection] (n13) -- (n23);
		\node[smallneuron] (y) at (48ex,3ex) {$y$};
		\draw[connection] (n21) -- (y);
		\draw[connection] (n22) -- (y);
		\draw[connection] (n23) -- (y);
		
		\node[align=center] (a1) at (3ex,-14ex) {$\overbar{\Newt}_n^{(0)}$\\points};
		\draw[->] (a1) -- (3ex, -3ex);
		\draw[->] (a1) -- (12ex, -5.5ex);
		
		\node (o1) at (15ex,-21ex) {line segments};
		\draw[->] (o1) -- (19ex, -6ex);
		
		\node[align=center] (a2) at (27ex,-14ex) {$\overbar{\Newt}_n^{(1)}$\\zonotopes};
		\draw[->] (a2) -- (29ex, -6ex);
		
		\node (o2) at (39ex,-21ex) {$\conv(\text{two zonotopes})$};
		\draw[->] (o2) -- (35.5ex, -5.5ex);
		
		\node (a3) at (45ex,-10ex) {$\overbar{\Newt}_n^{(2)}$};
		\draw[->] (a3) -- (45ex, 0ex);
	\end{tikzpicture}
	\caption{Set of polytopes that can arise as Newton polytopes of convex CPWL functions computed by (parts of) a 2-hidden-layer NN.}
	\label{Fig:Newton}
\end{figure}

Unfortunately, this interpretation is not accurate for the following reason: our NNs are allowed to have negative weights, which cannot be fully captured by Minkowski sums as introduced above. Therefore, it might be possible that a $k$-hidden-layer NN can compute a convex function with Newton polytope not in $\overbar{\Newt}_n^{(k)}$. Luckily, one can remedy this shortcoming, and even extend the interpretation to the non-convex case, by representing the computed function as difference of two convex functions.

\begin{theorem}\label{Thm:Newton}
	A positively homogeneous (not necessarily convex) CPWL function can be computed by a $k$-hidden-layer NN if and only if it can be written as the difference of two positively homogeneous, convex CPWL functions with Newton polytopes in~$\overbar{\Newt}_n^{(k)}$.
\end{theorem}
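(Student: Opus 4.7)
The plan is to prove both directions of the equivalence by simultaneous induction on $k$, relying on the homogenized isomorphism of \Cref{Prop:semiringisoII} together with a few structural closure properties of $\overbar{\Newt}_n^{(k)}$. Before starting, I would verify three elementary facts by a separate induction on $k$: (a) $\overbar{\Newt}_n^{(k-1)}\subseteq\overbar{\Newt}_n^{(k)}$ (take $p=1$ and $P_1=Q_1$ in the recursive definition, using that $\conv(P,P)=P$), (b) $\overbar{\Newt}_n^{(k)}$ is closed under Minkowski sum (concatenate the sums in the recursive expressions), and (c) $\overbar{\Newt}_n^{(k)}$ is closed under positive scaling (distribute $\lambda$ inside every $\conv$ and invoke the inductive hypothesis on the inner polytopes). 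The base case $k=0$ of the theorem is immediate: a bias-free $0$-hidden-layer NN computes $x\mapsto a^T x$, which equals $a^T x-0$ with both summands convex CPWL and with singleton Newton polytopes in $\overbar{\Newt}_n^{(0)}$; conversely, a difference of positively homogeneous convex CPWL functions with singleton Newton polytopes is linear and therefore computable without any hidden layer.

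For the ``only if'' direction of the inductive step, I would use that the output of a $k$-hidden-layer NN is a linear combination $\sum_j c_j\sigma(g_j)$ where each pre-activation $g_j$ is computed by the truncated $(k-1)$-hidden-layer sub-NN feeding the $j$-th neuron of the last hidden layer. By the inductive hypothesis, $g_j=g_{j,1}-g_{j,2}$ with $\overbar{\mathcal{N}}(g_{j,i})\in\overbar{\Newt}_n^{(k-1)}$. The key identity
\[
\sigma(g_j)=\max\{g_{j,1},g_{j,2}\}-g_{j,2}
\]
rewrites each ReLU as a difference of convex CPWL functions whose Newton polytopes are $\conv(\overbar{\mathcal{N}}(g_{j,1}),\overbar{\mathcal{N}}(g_{j,2}))$ and $\overbar{\mathcal{N}}(g_{j,2})$; both lie in $\overbar{\Newt}_n^{(k)}$ by fact~(a) and the definition of $\overbar{\Newt}_n^{(k)}$. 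Writing $\sigma(g_j)=A_j-B_j$ and splitting $\sum_j c_j(A_j-B_j)$ according to the sign of $c_j$ yields $f=F_1-F_2$ where $F_1,F_2$ are convex and whose Newton polytopes lie in $\overbar{\Newt}_n^{(k)}$ by closures~(b) and~(c).

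For the ``if'' direction, suppose $f=f_1-f_2$ with $\overbar{\mathcal{N}}(f_j)=\sum_{i=1}^{p_j}\conv(P_i^j,Q_i^j)\in\overbar{\Newt}_n^{(k)}$. Applying \Cref{Prop:semiringisoII} parts~(iii) and~(iv) gives $f_j=\sum_i\max\{g_i^j,h_i^j\}$, where the convex CPWL summands $g_i^j,h_i^j$ have Newton polytopes $P_i^j,Q_i^j\in\overbar{\Newt}_n^{(k-1)}$; by the inductive hypothesis applied to the trivial decompositions $g_i^j=g_i^j-0$, $h_i^j=h_i^j-0$, and $g_i^j-h_i^j=g_i^j-h_i^j$, each of these three quantities is computable by a $(k-1)$-hidden-layer NN. Merging all these sub-networks in parallel into one shared $(k-1)$-hidden-layer NN (by taking the union of neurons in each layer and absorbing output-layer linear combinations into the weights of the next ReLU) and invoking the reformulation $\max\{g_i^j,h_i^j\}=\sigma(g_i^j-h_i^j)+h_i^j$, I would place in the $k$-th hidden layer the ReLUs of all three quantities $g_i^j-h_i^j$, $h_i^j$, and $-h_i^j$. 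The output layer then assembles
\[
f_j=\sum_i\bigl[\sigma(g_i^j-h_i^j)+\sigma(h_i^j)-\sigma(-h_i^j)\bigr]\qquad\text{and}\qquad f=f_1-f_2
\]
by choosing appropriate signs, producing a $k$-hidden-layer NN for $f$.

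The main obstacle is the depth bookkeeping in the ``if'' direction: one must certify that the sub-networks of possibly differing inductive depths can be padded up to exactly $k-1$ hidden layers and then merged into a single shared $(k-1)$-hidden-layer NN without any increase in depth, and that the pass-through of $h_i^j$ via the identity $h_i^j=\sigma(h_i^j)-\sigma(-h_i^j)$ correctly synchronizes with the ReLU of $g_i^j-h_i^j$ in the $k$-th hidden layer. While each of these steps is standard, making them precise requires tracking widths and input/output conventions of the inductive NNs. By contrast, the ``only if'' direction is almost purely polyhedral once the identity $\sigma(a-b)=\max\{a,b\}-b$ is deployed, and the closure properties~(a)--(c) do the remaining work.
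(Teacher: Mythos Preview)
Your proposal is correct and follows essentially the same inductive scheme as the paper, built around the identity $\sigma(a-b)=\max\{a,b\}-b$ together with \Cref{Prop:semiringisoII}. The only noteworthy difference is cosmetic: where you establish the auxiliary closure properties (a)--(c) to handle arbitrary output weights $c_j$, the paper instead normalizes the last-layer weights to $\pm 1$ and writes $h_i=\conv(\overbar{\mathcal{N}}(h_i),\overbar{\mathcal{N}}(h_i))$, so that the resulting Minkowski sum lands in $\overbar{\Newt}_n^{(k)}$ directly from the recursive definition without invoking closure under scaling.
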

\begin{proof}
	We use induction on $k$. For $k=0$, the statement is clear since it holds precisely for linear functions. For the induction step, suppose that, for some $k\geq1$, the equivalence is valid up to $k-1$ hidden layers. We prove that it is also valid for $k$ hidden layers.
	
	We need to show two directions. For the first direction, assume that $f$ is an arbitrary, positively homogeneous CPWL function that can be written as $f=g-h$ with \mbox{$\overbar{\mathcal{N}}(g),\overbar{\mathcal{N}}(h)\in\overbar{\Newt}_n^{(k)}$}. We need to show that a $k$-hidden-layer NN can compute~$f$. We show that this is even true for~$g$ and $h$, and hence, also for $f$. By definition of $\overbar{\Newt}_n^{(k)}$, there exist a finite number~$p\in\N$ and polytopes $P_i, Q_i\in \overbar{\Newt}_n^{(k-1)}$, $i\in[p]$, such that $\overbar{\mathcal{N}}(g)=\sum_{i=1}^p\conv(P_i,Q_i)$. By \Cref{Prop:semiringisoII}, we have
	\mbox{$
		g=\sum_{i=1}^p \max\{\overbar{\mathcal{F}}(P_i),\overbar{\mathcal{F}}(Q_i)\}
		$}.
	By induction, $\overbar{\mathcal{F}}(P_i)$ and~$\overbar{\mathcal{F}}(Q_i)$ can be computed by NNs with $k-1$ hidden layers. Since the maximum terms can be computed with a single hidden layer, in total a~$k$-th hidden layer is sufficient to compute $g$. An analogous argument applies to $h$. Thus, $f$ is computable with $k$ hidden layers, completing the first direction.
	
	For the other direction, suppose that $f$ is an arbitrary, positively homogeneous CPWL function that can be computed by a $k$-hidden-layer NN.
	Let us separately consider the~$n_k$ neurons in the $k$-th hidden layer of the NN. Let $a_i$, $i\in[n_k]$, be the weight of the connection from the $i$-th neuron in that layer to the output. Without loss of generality, we have $a_i\in\{\pm1\}$, because otherwise we can normalize it and multiply the weights of the incoming connections to the $i$-th neuron with $\abs{a_i}$ instead. Moreover, let us assume that, by potential reordering, there is some $m\leq n_k$ such that $a_i=1$ for $i\leq m$ and~$a_i=-1$ for $i>m$. With these assumptions, we can write
	\begin{equation}\label{eq:outputlayer}
		f=\sum_{i=1}^{m} \max\{0,f_i\} - \sum_{i=m+1}^{n_k} \max\{0,f_i\},
	\end{equation}
	where each $f_i$ is computable by a $(k-1)$-hidden-layer NN, namely the sub-NN computing the input to the $i$-th neuron in the $k$-th hidden layer.
	
	By induction, we obtain $f_i=g_i-h_i$ for some positively homogeneous, convex functions~$g_i,h_i$ with $\overbar{\mathcal{N}}(g_i),\overbar{\mathcal{N}}(h_i)\in \overbar{\Newt}_n^{(k-1)}$. We then have
	\begin{equation}\label{eq:perneuron}
		\max\{0,f_i\}=\max\{g_i,h_i\}-h_i.
	\end{equation}
	We define
	\[
	g\coloneqq\sum_{i=1}^m \max\{g_i,h_i\} + \sum_{i=m+1}^{n_k} h_i
	\]
	and 
	\[
	h\coloneqq\sum_{i=1}^m h_i + \sum_{i=m+1}^{n_k} \max\{g_i,h_i\}.
	\]
	
	Note that $g$ and $h$ are convex by construction as a sum of convex functions and that \eqref{eq:outputlayer} and \eqref{eq:perneuron} imply $f=g-h$. 
	Moreover, by \Cref{Prop:semiringisoII}, \[\overbar{\mathcal{N}}(g)=\sum_{i=1}^m \conv(\overbar{\mathcal{N}}(g_i),\overbar{\mathcal{N}}(h_i)) + \sum_{i=m+1}^{n_k} \conv(\overbar{\mathcal{N}}(h_i),\overbar{\mathcal{N}}(h_i))\in\overbar{\Newt}_n^{(k)}\] and \[\overbar{\mathcal{N}}(h)=\sum_{i=1}^m \conv(\overbar{\mathcal{N}}(h_i),\overbar{\mathcal{N}}(h_i)) + \sum_{i=m+1}^{n_k} \conv(\overbar{\mathcal{N}}(g_i),\overbar{\mathcal{N}}(h_i))\in\overbar{\Newt}_n^{(k)}.\] Hence, $f$ can be represented as desired, completing also the other direction.
\end{proof}

The power of \Cref{Thm:Newton} lies in the fact that it provides a purely geometric characterization of the class $\ReLU(k)$. The classes of polytopes $\overbar{\Newt}_n^{(k)}$ are solely defined by the two simple geometric operations Minkowski sum and convex hull of the union. Therefore, understanding the class $\ReLU(k)$ is equivalent to understanding what polytopes one can generate by iterative application of these geometric operations.

In particular, we can give yet another equivalent reformulation of our main conjecture. To this end, let the simplex $\Delta_n\coloneqq\conv\{0,e_1,\dots,e_n\}\subseteq\R^n$ denote the Newton polytope of the function $f_n=\max\{0,x_1,\dots,x_n\}$ for each $n\in \N$.

\begin{conjecture}\label{Con:polytopes}
	For every $k\in\N$, $n=2^k$, there does not exist a pair of polytopes \mbox{$P,Q\in\overbar{\Newt}_n^{(k)}$} with $\Delta_n+Q=P$ (Minkowski sum).
\end{conjecture}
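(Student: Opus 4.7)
The plan is to induct on $k$, aiming to leverage closure properties of the polytope classes $\overbar{\Newt}_n^{(k)}$ under natural geometric operations, together with the well-known fact that Minkowski summation has strong structural implications for summands. The base case $k=1$ (with $n=2$) is essentially classical: polytopes in $\overbar{\Newt}_n^{(1)}$ are precisely zonotopes, and every Minkowski summand of a zonotope is itself a zonotope, because zonotopes are characterized as the polytopes whose every face is centrally symmetric and central symmetry of faces is inherited by Minkowski summands. Since the triangle $\Delta_2$ is not centrally symmetric, no zonotope $P$ can decompose as $\Delta_2 + Q$.

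For the inductive step, the first lemma I would establish is a face-closure statement: if $P \in \overbar{\Newt}_n^{(k)}$ then every face $F$ of $P$ again lies in $\overbar{\Newt}_n^{(k)}$ (after passage to its affine hull). This follows from the behavior of faces under the two defining operations: faces of a Minkowski sum are Minkowski sums of faces selected by the same normal direction, and faces of $\conv(A,B)$ are either faces of $A$, faces of $B$, or the $\conv$ of correspondingly selected faces. Armed with face-closure, I would extract from a hypothetical decomposition $\Delta_n + Q = P$ a whole family of sub-decompositions: for each nonempty proper subset $S \subseteq \{0,1,\ldots,n\}$, the sub-simplex $\conv\{e_i : i \in S\}$ is a face of $\Delta_n$ (with $e_0 = 0$) and hence induces a face $\conv\{e_i : i \in S\} + Q_S = P_S$ of $P$, with $P_S, Q_S \in \overbar{\Newt}^{(k)}$. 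The idea is to apply these restrictions for carefully chosen $S$ of size around $2^{k-1}+1$ to feed into the inductive hypothesis.

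The critical step is to identify a combinatorial or algebraic invariant of polytopes in $\overbar{\Newt}_n^{(k)}$ that (i) is inherited by Minkowski summands, (ii) is controlled as a function of the depth $k$, and (iii) provably fails for $\Delta_n$ when $n = 2^k$. I would systematically test several candidates: a depth-indexed refinement of the ``centrally symmetric faces'' invariant (asking for $k$-fold iterated decomposability of each face into $\conv$'s of zonotopes); the $f$-vector together with a lower bound on the number of vertices required when $\Delta_n$ appears as a summand (since edges of $\Delta_n$ span all $\binom{n+1}{2}$ directions $e_j-e_i$ and must each appear as an edge of $P$); mixed-volume identities of Aleksandrov--Fenchel type evaluated against $\Delta_n$; and tropical-geometric invariants such as Minkowski weights or tropical Chow classes, exploiting the interpretation of $\overbar{\Newt}_n^{(k)}$-polytopes as Newton polytopes of convex tropical rational functions of formula depth $k$.

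The main obstacle, and the reason this conjecture remains open, is precisely point (iii): the ``centrally symmetric face'' invariant that kills the $k=1$ case collapses already at $k=2$, since every simplex is a Minkowski summand of some $\conv$ of two zonotopes. A successful invariant must therefore track not merely local symmetries of faces but some global recursive feature of the face lattice that degrades strictly with each additional $\conv$-then-sum operation. I expect the breakthrough to come from exploiting the fact that the convex hulls in the definition of $\overbar{\Newt}_n^{(k)}$ are taken only pairwise, which ought to impose a ``binary branching'' structure on the normal fan that is incompatible with the $(n+1)$-fold symmetry of $\Delta_n$ when $n = 2^k$; making this intuition precise, likely via a carefully designed potential function on the normal fan that behaves subadditively under Minkowski sum and at most doubles under pairwise $\conv$, is the key technical goal of the plan.
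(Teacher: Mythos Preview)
The statement you are attempting to prove is \Cref{Con:polytopes}, which is an \emph{open conjecture} in the paper, not a theorem. The paper does not prove it; the only result the paper establishes about this statement is the theorem immediately following it, namely that \Cref{Con:polytopes} is equivalent to \Cref{Con:main} and \Cref{Con:simplified}. The paper explicitly lists resolving these conjectures as the primary open problem (see \Cref{Sec:discuss}), and the only unconditional case known is $k=1$, with $k=2$ established in \Cref{Sec:MIP} only under the additional $H$-conforming hypothesis.

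Your proposal is accordingly not a proof but a research plan, and you are candid about this: you correctly state that ``this conjecture remains open'' and identify the central obstruction, namely that the central-symmetry invariant which handles $k=1$ collapses at $k=2$. Your treatment of the base case $k=1$ is correct and matches the known argument (zonotopes are closed under Minkowski summands via central symmetry of faces, and $\Delta_2$ is not centrally symmetric). Your face-closure lemma for $\overbar{\Newt}_n^{(k)}$ is plausible and the sketch you give is essentially right, but as you yourself note, it does not by itself yield the inductive step: restricting to a face of size $2^{k-1}+1$ gives a decomposition $\Delta_{2^{k-1}} + Q_S = P_S$ with $P_S, Q_S \in \overbar{\Newt}^{(k)}$, not $\overbar{\Newt}^{(k-1)}$, so the induction hypothesis does not apply.

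In short, there is no proof in the paper to compare against, and your proposal does not close the gap either; it is an honest outline of possible attack directions on what remains a genuinely open problem.
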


\begin{theorem}
	\Cref{Con:polytopes} is equivalent to \Cref{Con:main} and \Cref{Con:simplified}.
\end{theorem}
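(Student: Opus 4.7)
The plan is to invoke Proposition~\ref{prop:equivalence}, which already gives the equivalence of Conjectures~\ref{Con:main} and~\ref{Con:simplified}, so it suffices to establish that Conjecture~\ref{Con:polytopes} is equivalent to Conjecture~\ref{Con:simplified}. The main tools will be Theorem~\ref{Thm:Newton}, which characterizes $\ReLU_n(k)$ (restricted to positively homogeneous functions) via Newton polytopes in $\overbar{\Newt}_n^{(k)}$, and the semiring isomorphism of Proposition~\ref{Prop:semiringisoII}, which translates addition of functions into Minkowski addition of Newton polytopes.

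The starting observation is that $f_n(x)=\max\{0,x_1,\dots,x_n\}$ is positively homogeneous and convex with Newton polytope $\overbar{\mathcal{N}}(f_n)=\Delta_n$. By Proposition~\ref{prop:wlognobias}, $f_n\in\ReLU_n(k)$ if and only if it is computed by some bias-free $k$-hidden-layer NN, so Theorem~\ref{Thm:Newton} applies: $f_n\in\ReLU_n(k)$ if and only if there exist positively homogeneous convex CPWL functions $g,h$ with $\overbar{\mathcal{N}}(g),\overbar{\mathcal{N}}(h)\in\overbar{\Newt}_n^{(k)}$ and $f_n=g-h$.

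The key step is to rewrite $f_n=g-h$ as $g=f_n+h$. Since the sum of positively homogeneous convex CPWL functions is of the same kind, Proposition~\ref{Prop:semiringisoII}(iv) yields $\overbar{\mathcal{N}}(g)=\Delta_n+\overbar{\mathcal{N}}(h)$. Setting $P\coloneqq\overbar{\mathcal{N}}(g)$ and $Q\coloneqq\overbar{\mathcal{N}}(h)$, this is exactly the relation $P=\Delta_n+Q$ required by Conjecture~\ref{Con:polytopes}. Conversely, given any $P,Q\in\overbar{\Newt}_n^{(k)}$ with $\Delta_n+Q=P$, define $g\coloneqq\overbar{\mathcal{F}}(P)$ and $h\coloneqq\overbar{\mathcal{F}}(Q)$; by Proposition~\ref{Prop:semiringisoII} we recover $g=f_n+h$, hence $f_n=g-h$, and Theorem~\ref{Thm:Newton} then produces a $k$-hidden-layer NN computing~$f_n$. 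Combining both directions shows that the existence of a suitable polytope pair is equivalent to $f_n\in\ReLU_n(k)$, so the negation of Conjecture~\ref{Con:polytopes} is equivalent to the negation of Conjecture~\ref{Con:simplified}.

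There is no substantive obstacle here: the proof is a direct application of the already-established dictionary between NNs and Newton polytopes. The only points requiring a brief verification are (a) that we may invoke Proposition~\ref{prop:wlognobias} because $f_n$ is positively homogeneous, and (b) that the rewriting $g=f_n+h$ preserves positive homogeneity and convexity, which is immediate since both properties are closed under addition. With these sanity checks in place, the chain of equivalences gives the claim.
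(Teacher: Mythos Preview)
Your proposal is correct and follows essentially the same approach as the paper: reduce via Proposition~\ref{prop:equivalence} to the equivalence of Conjectures~\ref{Con:simplified} and~\ref{Con:polytopes}, then use Theorem~\ref{Thm:Newton} together with the semiring isomorphism of Proposition~\ref{Prop:semiringisoII} and the identification $\overbar{\mathcal{N}}(f_n)=\Delta_n$ to translate $f_n=g-h$ into $\Delta_n+Q=P$. Your write-up is simply a more explicit version of the paper's three-sentence argument; the invocation of Proposition~\ref{prop:wlognobias} is harmless but redundant, since Theorem~\ref{Thm:Newton} already speaks about arbitrary $k$-hidden-layer NNs computing positively homogeneous functions.
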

\begin{proof}
	By \Cref{prop:equivalence}, it suffices to show equivalence between \Cref{Con:polytopes} and \Cref{Con:simplified}. By \Cref{Thm:Newton}, $f_n$ can be represented with $k$ hidden layers if and only if there are functions $g$ and $h$ with Newton polytopes in $\overbar{\Newt}_n^{(k)}$ satisfying $f_n+h=g$. By \Cref{Prop:semiringisoII}, this happens if and only if there are polytopes $P,Q\in \overbar{\Newt}_n^{(k)}$ with~$\Delta_n+Q=P$.
\end{proof}

It is particularly interesting to look at special cases with small $k$. For $k=1$, the set~$\overbar{\Newt}_n^{(1)}$ is the set of all zonotopes. Hence, the (known) statement that $\max\{0,x_1,x_2\}$ cannot be computed with one hidden layer~\cite{mukherjee2017lower} is equivalent to the fact that the Minkowski sum of a zonotope and a triangle can never be a zonotope.

The first open case is the case $k=2$. An unconditional proof that two hidden layers do not suffice to compute the maximum of five numbers is highly desired. In the regime of Newton polytopes, this means to understand the class $\overbar{\Newt}_n^{(2)}$. It consists of finite Minkowski sums of polytopes that arise as the convex hull of the union of two zonotopes. Hence, the major open question here is to classify this set of polytopes.

Finally, let us remark that there exists a generalization of the concept of polytopes, known as \emph{virtual polytopes}~\cite{virtual}, that makes it possible to assign a Newton polytope also to non-convex CPWL functions. This makes use of the fact that every (non-convex) CPWL function is a difference of two convex ones. Consequently, a virtual polytope is a formal Minkowski difference of two ordinary polytopes. Using this concept, \Cref{Thm:Newton} and \Cref{Con:polytopes} can be phrased in a simpler way, replacing the pair of polytopes with a single virtual polytope.

\section{Future Research}\label{Sec:discuss}

The most obvious and, at the same time, most exciting open research question is to prove or disprove \Cref{Con:main}, or equivalently \Cref{Con:simplified} or \Cref{Con:polytopes}. The first step could be to prove that it is indeed enough to consider $H$-conforming NNs. This is intuitive because every breakpoint introduced at any place outside the hyperplanes $H_{ij}$ needs to be canceled out later. Therefore, it is natural to assume that these breakpoints do not have to be introduced in the first place. However, this intuition does not seem to be enough for a formal proof because it could occur that additional breakpoints in intermediate steps, which are canceled out later, also influence the behavior of the function at other places where we allow breakpoints in the end.

Another step towards resolving our conjecture may be to find an alternative proof of \Cref{Thm:maxWithAssumption}, not using $H$-conforming NNs. This might also be beneficial for generalizing our techniques to more hidden layers, since, while theoretically possible, a direct generalization of the MIP approach is infeasible due to computational limitations. For example, it might be particularly promising to use a tropical approach as described in \Cref{Sec:Polytopes} and apply methods from polytope theory to prove \Cref{Con:polytopes}.

In light of our results from \Cref{sec:more}, it would be desirable to provide a complete characterization of the functions contained in $\ReLU(k)$.
Another potential research goal is improving our upper bounds on the width from \Cref{sec:width} and/or proving matching lower bounds as discussed in \Cref{sec:discuss_bounds}.

Some more interesting research directions are the following:
\begin{itemize}
	\item establishing or strengthening our results for special classes of NNs like recurrent neural networks (RNNs) or convolutional neural networks (CNNs),
	\item using exact representation results to show more drastic depth-width trade-offs compared to existing results in the literature,
	\item understanding how the class $\ReLU(k)$ changes when a polynomial upper bound is imposed on the width of the NN; see related work by Vardi et al.~\cite{vardi2021size}.
	\item understanding which CPWL functions one can (exactly) represent with polynomial size at all, without any restriction on the depth; see related work in the context of combinatorial optimization~\cite{maxflowPaper,knapsackPaper}.
\end{itemize}

\bibliographystyle{abbrv}
\bibliography{cpwl, full-bib}

\end{document}